\documentclass{article}
\usepackage[truedimen,margin=30mm]{geometry}

\usepackage{amsmath,amssymb}
\usepackage{amsbsy,mathrsfs}
\usepackage{bm}
\usepackage{amsthm}
\usepackage{graphics}
\usepackage{color}

\usepackage[utf8]{inputenc} 
\usepackage[T1]{fontenc}    
\usepackage{hyperref}       
\usepackage{url}            
\usepackage{booktabs}       
\usepackage{amsfonts}       
\usepackage{nicefrac}       
\usepackage{microtype}      
\usepackage{xcolor}         

\usepackage{graphicx}
\usepackage{subfigure}

\usepackage{natbib}
\bibliographystyle{abbrvnat}
\setcitestyle{authoryear,open={(},close={)}} 


\usepackage{comment}

\setlength{\parskip}{0.25em}

\makeatletter
\renewcommand{\paragraph}{%
  \@startsection{paragraph}{4}%
  {\z@}{1.6ex \@plus 1ex \@minus .2ex}{-1em}%
  {\normalfont\normalsize\bfseries}%
}
\makeatother

\usepackage{amsmath}
\usepackage{amssymb}
\usepackage{mathtools}
\usepackage{amsthm}

\newcommand{\argmin}{\mathop{\mathrm{argmin}}}

\newcommand{\Eqref}[1]{Eq.~\eqref{#1}}


\newcommand{\calB}{{\mathcal B}}

\newcommand{\calO}{{\mathcal O}}
\newcommand{\calP}{{\mathcal P}}

\newcommand{\calZ}{{\mathcal Z}}

\newcommand{\scrF}{\mathscr{F}}
\newcommand{\scrX}{\mathscr{X}}















\newcommand{\Real}{\mathbb{R}}
\newcommand{\Natural}{\mathbb{N}}


\newcommand{\EE}{\mathbb{E}}
\newcommand{\dd}{\mathrm{d}}

\def\I<#1>{\left\langle #1 \right\rangle}
\def\i<#1>{\left\langle #1 \right\rangle}

%
%
%
%
%
%
%
%
%
%
%
%
%
%





\newcommand{\Ent}{\mathrm{Ent}}

\newcommand{\tilscrX}{\widetilde{\scrX}}

\allowdisplaybreaks
\sloppy

\hypersetup{
  colorlinks,
  linkcolor={red!72!black},
  citecolor={blue!72!black},
  urlcolor={magenta!72!black}  
}

\usepackage{titlesec}
\usepackage{abstract} 

\titleformat*{\section}{\large\bfseries}

\setlength{\parskip}{2.5mm}
\setlength{\columnsep}{10mm}
\setlength{\parindent}{0mm}

\newtheorem{Theorem}{Theorem}
\newtheorem{Lemma}{Lemma}
\newtheorem{Definition}{Definition}
\newtheorem{Assumption}{Assumption}

\newtheorem{Remark}{Remark}
\newtheorem{Corollary}{Corollary}
\theoremstyle{remark}
\newtheorem{Example}{Example}

\usepackage{enumitem}
\newlist{assumenum}{enumerate}{1} 
\setlist[assumenum]{label={\rm (A\roman*)}, ref=(\roman*)}

\newcommand{\bhline}[1]{\noalign{\hrule height #1}}

\newcommand{\APPENDMOD}[1]{#1}


\usepackage[textsize=tiny]{todonotes}

\title{Convergence of mean-field Langevin dynamics: Time and space discretization, stochastic gradient, and variance reduction}


\author{Taiji Suzuki$^{1,\dag}$, Denny Wu$^{2,\ddag}$, Atsushi Nitanda$^{3,\star}$
\vspace{2mm}\\
\normalsize{\textit{$^1$University of Tokyo and RIKEN Center for Advanced Intelligence Project}} \\
\normalsize{\textit{$^2$University of Toronto and Vector Institute for Artificial Intelligence}} \\
\normalsize{\textit{$^3$Kyushu Institute of Technology and RIKEN Center for Advanced Intelligence Project}} \\
\small{Email: $^\dag$taiji@mist.i.u-tokyo.ac.jp, $^\ddag$dennywu@cs.toronto.edu, $^\star$nitanda@ai.kyutech.ac.jp}} 

\date{}

\begin{document}

\maketitle

 
\begin{abstract}
The mean-field Langevin dynamics (MFLD) is a nonlinear generalization of the Langevin dynamics that incorporates a distribution-dependent drift, and it naturally arises from the optimization of two-layer neural networks via (noisy) gradient descent. 
Recent works have shown that MFLD globally minimizes an entropy-regularized convex functional in the space of measures. However, all prior analyses assumed the infinite-particle or continuous-time limit, and cannot handle stochastic gradient updates. 
We provide an general framework to prove a uniform-in-time propagation of chaos for MFLD that takes into account the errors due to finite-particle approximation, time-discretization, and stochastic gradient approximation. 
To demonstrate the wide applicability of this framework, we establish quantitative convergence rate guarantees to the regularized global optimal solution under $(i)$ a wide range of learning problems such as neural network in the mean-field regime and MMD minimization, and $(ii)$ different gradient estimators including SGD and SVRG. 
Despite the generality of our results, we achieve an improved convergence rate in both the SGD and SVRG settings when specialized to the standard Langevin dynamics. 
\end{abstract}

\section{Introduction}

In this work we consider the mean-field Langevin dynamics (MFLD) given by the following McKean-Vlasov stochastic differential equation:
\begin{align}
\dd X_{t} = - \nabla \frac{\delta F(\mu_t)}{\delta \mu}(X_t) \dd t+ \sqrt{2\lambda} \dd W_{t}, 
\label{eq:MFLD_intro}
\end{align}
where $\mu_t = \mathrm{Law}(X_t)$, $F: \calP_2(\Real^d)\to\Real$ is a convex functional, $W_t$ is the $d$-dimensional standard Brownian motion, and $\frac{\delta F}{\delta \mu}$ denotes the first-variation of $F$. Importantly, MFLD is the Wasserstein gradient flow that minimizes an entropy-regularized convex functional as follows:
 \begin{align}
\min_{\mu\in\calP_2} \{F(\mu) + \lambda\mathrm{Ent}(\mu)\}. 
    \label{eq:objective_intro}
 \end{align}
While the above objective can also be solved using other methods such as double-loop algorithms based on iterative linearization \citep{nitanda2020particle,oko2022particle}, MFLD remains attractive due to its simple structure and connection to neural network optimization. 
Specifically, the learning of two-layer neural networks can be lifted into an infinite-dimensional optimization problem in the space of measures (i.e., the \textit{mean-field limit}), for which the convexity of loss function can be exploited to show the global convergence of gradient-based optimization methods \citep{nitanda2017stochastic,mei2018mean,chizat2018global,rotskoff2018trainability,sirignano2020mean}. Under this viewpoint, MFLD \Eqref{eq:MFLD_intro} corresponds to the continuous-time limit of the \textit{noisy} gradient descent update on an infinite-width neural network, where the injected Gaussian noise encourages ``exploration'' and facilities global convergence \citep{mei2018mean,hu2019mean}.

\paragraph{Quantitative analysis of MFLD.} 
Most existing convergence results of neural networks in the mean-field regime are \textit{qualitative} in nature, that is, they do not characterize the rate of convergence nor the discretization error. A noticeable exception is the recent analysis of MFLD by \citet{pmlr-v151-nitanda22a,chizat2022meanfield}, where the authors proved exponential convergence to the optimal solution of \Eqref{eq:objective_intro} under a \textit{logarithmic Sobolev inequality} (LSI) that can be verified in various settings including regularized empirical risk minimization using neural networks. 
This being said, there is still a large gap between the ideal MFLD analyzed in prior works and a feasible algorithm. In practice, we parameterize $\mu$ as a mixture of $N$ particles $(X^i)_{i=1}^N$ --- this corresponds to a neural network with $N$ neurons, and perform a discrete-time update: at time step $k$, the update to the $i$-th particle is given as 
\begin{align}
X_{k+1}^i = X_{k}^i - \eta_k \tilde{\nabla}\frac{\delta F(\mu_k)}{\delta\mu}(X_k^i) + \sqrt{2 \lambda \eta_k} \xi^i_k,
\label{eq:discrete_update_intro}
\end{align}
where $\eta_k$ is the step size at the $k$-th iteration, $\xi^i_k$ is an i.i.d.~standard Gaussian vector, and $\tilde{\nabla}\frac{\delta F(\mu)}{\delta\mu}$ represents a potentially inexact (e.g., stochastic) gradient. 

Comparing \Eqref{eq:MFLD_intro} and \Eqref{eq:discrete_update_intro}, we observe the following discrepancies between the ideal MFLD and the implementable noisy particle gradient descent algorithm. 
\begin{itemize}[itemsep=0.5mm,leftmargin=6.6mm,topsep=0mm]
\item[(i)] \textbf{Particle approximation.} $\mu$ is entirely represented by a finite set of particles: $\mu_k=\frac{1}{N}\sum_{i=1}^{N}\delta_{X^i_k}$. 
\item[(ii)] \textbf{Time discretization.} We employ discrete gradient descent update as opposed to gradient flow. 
\item[(iii)] \textbf{Stochastic gradient.} In many practical settings, it is computationally prohibitive to obtain the exact gradient update, and hence it is preferable to adopt a stochastic estimate of the gradient. 
\end{itemize}
The control of finite-particle error (point $(i)$) is referred to as \textit{propagation of chaos} \citep{sznitman1991topics} (see also \cite{lacker2021hierarchies} and references therein). 
In the context of mean-field neural networks, discretization error bounds in prior works usually grow \textit{exponentially in time} \citep{mei2018mean,javanmard2019analysis,de2020quantitative}, unless one introduces additional assumptions on the dynamics that are difficult to verify \citep{chen2020dynamical}. Consequently, convergence guarantee in the continuous limit cannot be transferred to the finite-particle setting unless the time horizon is very short (e.g., \cite{abbe2022merged}), which limits the applicability of the theory. 

Very recently, \citet{chen2022uniform,anonymous2023uniformintime} established a \textit{uniform-in-time} propagation of chaos for MFLD, i.e., the ``distance'' between the $N$-particle system and the infinite-particle limit is of order $\calO(1/N)$ for all $t>0$. 
While this represents a significant step towards an optimization theory for practical finite-width neural networks in the mean-field regime, these results assumed the continuous-time limit and access to exact gradient, thus cannot cover points $(ii)$ and $(iii)$. \\
In contrast, for the standard gradient Langevin dynamics (LD) without the mean-field interactions (which is a special case of MFLD~\Eqref{eq:MFLD_intro} by setting $F$ to be a \textit{linear} functional), the time discretization is well-understood (see e.g.~\citep{dalalyan2014theoretical,vempala2019rapid,chewi2021analysis}), and its stochastic gradient variant \citep{welling2011bayesian,ma2015complete}, including ones that employ variance-reduced gradient estimators \citep{NIPS2016_9b698eb3,zou2018subsampled,kinoshita2022improved}, have also been extensively studied. 

The gap between convergence analyses of LD and MFLD motivates us to ask the following question.

\vspace{-1mm}
\begin{center}
{\it 
Can we develop a complete non-asymptotic convergence theory for MFLD that takes into account \\ points $(i)$ - $(iii)$, and provide further refinement over existing results when specialized to LD?
}
\end{center}  
\vspace{-2mm}


\begin{table*}[t] 
\begin{center}
\scalebox{0.932}{
\begin{tabular}{cccccc}\bhline{1.5pt} 
Method (authors) &\!\!\!\! \# of particles  &\!\!\!\!\! Total complexity &\!\!\!\!\! Single loop & \!\!\!Mean-field\! \\ \bhline{1.5pt}
\hspace{-0.3cm}
\hspace{-0.3cm}\begin{tabular}{c} PDA* \\ {\small \citep{PDA:NeurIPS:2021}} \end{tabular}\hspace{-0.3cm}
& $\epsilon^{-2}\log(n)$ & $G_\epsilon \epsilon^{-1}$ & $\times$ &$\checkmark$ \\ \hline
\hspace{-0.3cm}\begin{tabular}{c}P-SDCA \\ {\small \citep{oko2022particle}} \end{tabular}\hspace{-0.3cm}
 & 
 $\epsilon^{-1}\log(n)$  & $G_\epsilon(n \! + \! \tfrac{1}{\lambda})\log(\tfrac{n}{\epsilon})$ & $\times$ &$\checkmark$ \\ 
\bhline{1.2pt}
\hspace{-0.3cm}\begin{tabular}{c}GLD \\ {\small \citep{vempala2019rapid}}\end{tabular} & --- & $\frac{n}{\epsilon}\frac{ \log(\epsilon^{-1})}{ (\lambda \alpha)^2}$ & $\checkmark$ & $\times$ \\
\hline
\hspace{-0.3cm}\begin{tabular}{c}SVRG-LD \\ {\small \citep{kinoshita2022improved}}\end{tabular}
 & --- & $\left(n + \frac{\sqrt{n}}{\epsilon}\right)\frac{ \log(\epsilon^{-1})}{ (\lambda \alpha)^2}$ & $\checkmark$ & $\times$ 
\\
\bhline{1.2pt}
F-MFLD {\bf (ours)} & $\epsilon^{-1}$ & $nE_*
\frac{ \log(\epsilon^{-1})}{ (\lambda \alpha)}$ & $\checkmark$ & $\checkmark$ \\
\hline
SGD-MFLD* {\bf (ours)} & $\epsilon^{-1}$ &  $\epsilon^{-1} E_*
\frac{ \log(\epsilon^{-1})}{ (\lambda \alpha)}$
&  $\checkmark$ & $\checkmark$ \\
\hline
SGD-MFLD* (ii) {\bf (ours)} & $\epsilon^{-1}$ &  $\epsilon^{-1} 
(1 + \sqrt{\lambda E_*})
\frac{ \log(\epsilon^{-1})}{ (\lambda \alpha)^2}$
&  $\checkmark$ & $\checkmark$ \\
\hline
SVRG-MFLD {\bf (ours)} & $\epsilon^{-1}$ &
$\sqrt{n}E_*
\frac{ \log(\epsilon^{-1})}{ (\lambda \alpha)}+ n$ &  $\checkmark$ & $\checkmark$ \\ \hline
SVRG-MFLD (ii) {\bf (ours)} & $\epsilon^{-1}$ &
$
(n^{1/3}E_* \!+\! \sqrt{n}\lambda^{1/4}E_*^{3/4})
\frac{ \log(\epsilon^{-1})}{ (\lambda \alpha)} \!+ \!n$ &  $\checkmark$ 
& $\checkmark$ \\
 \bhline{1.5pt} 
\end{tabular}
}
\caption{\small Comparison of computational complexity to optimize an entropy-regularized finite-sum objective up to excess objective value $\epsilon$, in terms of dataset size $n$, entropy regularization $\lambda$, and LSI constant $\alpha$. Label * indicates the \textit{online} setting, and the unlabeled methods are tailored to the \textit{finite-sum} setting. 
``Mean-field'' indicates the presence of particle interactions. 
``Single loop'' indicates whether the algorithm requires an inner-loop MCMC sampling sub-routine at every step. 
``(ii)'' indicates convergence rate under additional smoothness condition (Assumption \ref{ass:SecondOrderSmooth}), where 
$E_* = \frac{\bar{L}^2}{\alpha\epsilon} + \frac{\bar{L}}{\sqrt{\lambda \alpha \epsilon}}$.
For double-loop algorithms (PDA and P-SDCA), $G^*$ is the number of gradient evaluations required for MCMC sampling; for example, 
for MALA (Metropolis-adjusted Langevin algorithm) $G_\epsilon = O(n\alpha^{-5/2}\log(1/\epsilon)^{3/2})$, and for LMC (Langevin Monte Carlo) $G_\epsilon= O(n(\alpha\epsilon)^{-2}\log(\epsilon))$.
}
\label{tab:summary} 
\end{center}
\vspace{-1mm}
\end{table*}

\subsection{Our Contributions}
We present a unifying framework to establish 
uniform-in-time convergence guarantees for the mean-field Langevin dynamics under time and space discretization, simultaneously addressing points $(i)$-$(iii)$. The convergence rate is exponential up to an error that vanishes when the step size and stochastic gradient variance tend to $0$, and the number of particles $N$ tends to infinity. 
Moreover, our proof is based on an LSI condition analogous to \citet{pmlr-v151-nitanda22a,chizat2022meanfield}, which is satisfied in a wide range of regularized risk minimization problems. 
The advantages of our analysis is summarized as follows. 

\begin{itemize}[itemsep=1.2mm,leftmargin=*,topsep=0.5mm]
\item Our framework provides a unified treatment of different gradient estimators. Concretely, we establish convergence rate of MFLD with stochastic gradient and stochastic variance reduced gradient  \citep{NIPS2013_ac1dd209}. While it is far from trivial to derive a tight bound on the stochastic gradient approximation error because it requires evaluating the correlation between the randomness of each gradient and the updated parameter distribution, we are able to show that a stochastic gradient effectively improves the computational complexity. \\
Noticeably, despite the fact that our theorem simultaneously handles a much wider class of $F$, when specialized to standard Langevin dynamics (i.e., when $F$ is linear), we recover state-of-the-art convergence rates for LD; moreover, by introducing an additional mild assumption on the smoothness of the objective, our analysis can significantly improve upon existing convergence guarantees. 
\item Our analysis greatly extends the recent works of \citep{chen2022uniform,anonymous2023uniformintime}, in that our propagation of chaos result covers the discrete time setting while the discretization error can still be controlled uniformly over time, i.e., the finite particle approximation error does not blow up as $t$ increases. Noticeably, we do not impose weak interaction / large noise conditions that are common in the literature (e.g., \citet{delarue2021uniform,lacker2022sharp}); instead, our theorem remains valid for \textit{any} regularization strength. 
\end{itemize}

 






\section{Problem Setting}
\label{sec:setting}
Consider the set of probability measure $\calP$ on $\Real^d$ where the Borel $\sigma$-algebra is equipped. 
Our goal is to find a probability measure $\mu \in \calP$ that approximately minimizes the objective given by 
\begin{align*}
& F(\mu) = U(\mu) +  \EE_\mu[r], 
\end{align*}
where $U:\calP \to \Real$ is a (convex) loss function, and $r:\Real^d \to \Real$ is a regularization term. 
Let $\calP_2$ be the set of probability measures with the finite second moment. In the following, we consider the setting where $F(\mu) \leq C(1 + \EE_{\mu}[\|X\|^2] )$, and focus on $\calP_2$ so that $F$ is well-defined.

As previously mentioned, an important application of this minimization problem is the learning of two-layer neural network in the {\it mean-field regime}.  
Suppose that $h_{x}(\cdot)$ is a neuron with a parameter $x \in \Real^d$, e.g., $h_x(z) = \sigma(w^\top z + b)$, where $w\in\Real^{d-1}, b\in\Real$, and $x=(w,v)$. 
The mean-field neural network corresponding to a probability measure $\mu \in \calP$ can be written as $f_\mu(\cdot) = \int h_x(\cdot) \mu(\dd x)$. 
Given training data $(z_i,y_i)_{i=1}^n \in \Real^{d-1} \times \Real$, 
we may define the empirical risk of $f_\mu$ as $U(\mu) = \frac{1}{n}\sum_{i=1}^n \ell(f_\mu(z_i),y_i)$ for a loss function $\ell:\Real \times \Real \to \Real$. Then, the objective $F$ becomes
$$
F(\mu) = \frac{1}{n} \sum_{i=1}^n \ell(f_\mu(z_i),y_i) + \lambda_1 \int \|x\|^2 \dd \mu(x), 
$$
where the regularization term is  $r(x) = \lambda _1\|x\|^2$. Note that the same objective can be defined for expected risk minimization. 
We defer additional examples to the last of this section. 

One effective way to solve the above objective is the \emph{mean-field Langevin dynamics} (MFLD), which optimizes $F$ via a noisy gradient descent update. 
To define MFLD, we need to introduce the first-variation of the functional $F$. 
\begin{Definition}
Let $G: \calP_2 \to \Real$. The {\it first-variation} $\frac{\delta G}{\delta \mu}$ of a functional $G:\calP_2 \to \Real$ at $\mu \in \calP_2$ is defined as a continuous functional $\calP_2 \times \Real^d \to \Real$ that satisfies 
$\lim_{\epsilon \to 0} \frac{G(\epsilon \nu + (1-\epsilon) \mu)}{\epsilon} = \int \frac{\delta G}{\delta \mu}(\mu)(x)
\dd (\nu - \mu)$ 
for any $\nu \in \calP_2$. 
If there exists such a functional $\frac{\delta G}{\delta \mu}$, we say $G$ admits a first-variation at $\mu$, or simply $G$ is differentiable at $\mu$. 
\end{Definition}
To avoid the ambiguity of $\frac{\delta G}{\delta \mu}$ up to constant shift, we follow the convention of imposing $\int \frac{\delta G}{\delta \mu}(\mu) \dd\mu = 0$. 
Using the first-variation of $F$, the MFLD is given by the following stochastic differential equation: 
\begin{align}\label{eq:MFLDdef}
\dd X_{t} = - \nabla \frac{\delta F(\mu_t)}{\delta \mu}(X_t) \dd t+ \sqrt{2\lambda} \dd W_{t}, \quad \mu_t = \mathrm{Law}(X_t),
\end{align}
where 
$X_0 \sim \mu_0$, $\mathrm{Law}(X)$ denotes the distribution of the random variable $X$
and $(W_t)_{t\geq 0}$ is the $d$-dimensional standard Brownian motion.
Readers may refer to \citet{huang2021distribution} for the existence and uniqueness of the solution.
MFLD is an instance of \emph{distribution-dependent} SDE because the drift term $\frac{\delta F(\mu_t)}{\delta \mu}(\cdot)$ depends on the distribution $\mu_t$ of the current solution $X_t$ \citep{kahn1951estimation,kac1956foundations,mckean1966class}.  
It is known that the MFLD is a Wasserstein gradient flow to minimize the following \textit{entropy-regularized} objective \citep{mei2018mean,hu2019mean,pmlr-v151-nitanda22a,chizat2022meanfield}: 
\begin{align}
 \scrF(\mu) = F(\mu) + \lambda \mathrm{Ent}(\mu),
\label{eq:ObjP}
\end{align}
where $\mathrm{Ent}(\mu) = - \int \log(\dd \mu(z)/\dd z) \dd \mu(z)$ is the negative entropy of $\mu$.
\paragraph{Reduction to standard Langevin dynamics.} 
Note that the MFLD reduces to the standard gradient Langevin dynamics (LD) when $F$ is a linear functional, that is, there exists $V$ such that $F(\mu) = \int V(x)\dd \mu(x)$. In this case, $\frac{\delta F}{\delta \mu} = V$ for any $\mu$ and the MFLD \Eqref{eq:MFLDdef} simplifies to 
$$
\dd X_t = - \nabla V(X_t) \dd t+ \sqrt{2\lambda}\dd W_t. 
$$
This is exactly the gradient Langevin dynamics for optimizing $V$ or sampling from $\mu\propto\exp(-V/\lambda)$. 
\subsection{Some Applications of MFLD}
\label{sec:applications}

Here, we introduce a few examples that can be approximately solved via MFLD. 
\vspace{1mm}
\begin{Example}[Two-layer neural network in mean-field regime.]
\label{exp:TwoLayer}
Let $h_x(z)$ be a neuron with a parameter $x \in \Real^d$, e.g., $h_x(z) = \tanh(r \sigma(w^\top x))$, $h_x(z) = \tanh(r) \sigma(w^\top x)$ for $x = (r,w)$,  or simply $h_x(z) = \sigma(x^\top z)$. 
Then the learning of mean-field neural network $f_\mu(\cdot) = \int h_x(\cdot) \mu(\dd x)$ via minimizing the empirical risk $U(\mu) = \frac{1}{n}\sum_{i=1}^n \ell(f_\mu(z_i),y_i)$ with a convex loss (e.g., the logistic loss $\ell(f,y) = \log(1 + \exp(- y f))$, or the squared loss $\ell(f,y) = (f - y)^2$) falls into our framework.  
\end{Example}
\vspace{1mm}
\begin{Example}[Density estimation via MMD minimization]
\label{exp:DensityEstimation}
For a positive definite kernel $k$, the Maximum Mean Discrepancy (MMD) \citep{NIPS2006_e9fb2eda} between two probability measures $\mu$ and $\nu$ is defined as 
$\mathrm{MMD}^2(\mu,\nu) := \int\int (k(x,x) - 2 k(x,y) + k(y,y)) \dd \mu(x) \dd \nu(y)$. 
We perform nonparametric density estimation by fitting a Gaussian mixture model $f_{\mu}(z) = \int g_x(z) \dd \mu(x)$, where $g_x$ is the Gaussian density with mean $x$ and a given variance $\sigma^2 > 0$.  
The mixture model is learned by minimizing $\mathrm{MMD}^2(f_\mu, p^*)$ where $p^*$ is the target distribution.
If we observe a set of training data $(z_i)_{i=1}^n$  from $p^*$, then the empirical version of MMD is one suitable loss function $U(\mu)$ given as
\begin{align*}
\widehat{\mathrm{MMD}}^2\!\!(\mu) \!:= \! \! \int \! \!  \int \left[\int \! \!  \int \!g_x(z) g_{x'}(z') k(z,z') \dd z \dd z'\right] \dd  \mu(x) \dd \mu(x')  -  2 \int \left(\frac{1}{n} \sum_{i=1}^n \int g_x(z)  k(z,z_i) \dd z \right) \dd \mu(x), 
\end{align*}
Note that we may also choose to directly fit the particles to the data (instead of a Gaussian mixture model), that is, we use a Dirac measure for each particle, as in \citet[Section 5.2]{chizat2022meanfield} and \citet{NEURIPS2019_944a5ae3}. 
Here we state the Gaussian parameterization for the purpose of density estimation. 
\end{Example}
\vspace{1mm}
\begin{Example}[Kernel Stein discrepancy minimization]
\label{exp:KernelStein}
In settings where we 
have access to the target distribution through the score function (e.g., sampling from a posterior distribution in Bayesian inference), we may employ the kernel Stein discrepancy (KSD) as the discrepancy measure \citep{pmlr-v48-chwialkowski16,pmlr-v48-liub16}. 
Suppose that we can compute $s(z) = \nabla \log(\mu^*)(z)$, then for a positive definite kernel $k$, we define the {\it Stein kernel} as 
\begin{align*}
 W_{\mu^*}(z,z') :=  s(z)^\top s(z') k(z,z') + s(z)^\top \nabla_{z'} k(z,z')
+ \nabla_{z}^\top k(z,z')s(z') + \nabla_{z'}^\top \nabla_{z}k(z,z').
\end{align*}
We take $U(\mu)$ as the KSD between $\mu$ and $\mu^*$ defined as $\mathrm{KSD}(\mu) = \int \int W_{\mu^*}(z,z') \dd \mu(z) \dd \mu(z')$. 
By minimizing this objective via MFLD, we attain kernel Stein variational inference with convergence guarantees.  
This can be seen as a ``Langevin'' version of KSD descent in \citet{pmlr-v139-korba21a}. 
\end{Example}

\begin{Remark}
In the above examples, we introduce additional regularization terms in the objective \Eqref{eq:ObjP} to establish the convergence rate of MFLD, in exchange for a slight optimization bias. Note that these added regularizations often have a statistical benefit due to the smoothing effect. 

\end{Remark}

\subsection{Practical implementations of MFLD}
Although the convergence of MFLD (\Eqref{eq:MFLDdef}) has been studied in prior works \citep{hu2019mean,pmlr-v151-nitanda22a,chizat2022meanfield}, there is a large gap between the ideal dynamics and a practical implementable algorithm. 
Specifically, we need to consider $(i)$ the finite-particle approximation, $(ii)$ the time discretization, and $(iii)$ stochastic gradient. 

To this end, we consider the following space- and time-discretized version of the MFLD with stochastic gradient update. 
For a finite set of particles $\scrX = (X^i)_{i=1}^N \subset \Real^d$, we define its corresponding empirical distribution as 
$
\mu_{\scrX} = \frac{1}{N} \sum_{i=1}^N \delta_{X_i}.
$
Let $\scrX_k = (X_k^i)_{i=1}^N \subset \Real^d$ be $N$ particles at the $k$-th update, and define $\mu_k  = \mu_{\scrX_k}$ as a finite particle approximation of the population counterpart. Starting from $X_0^i \sim \mu_0$, we update $\scrX_k$ as,
\begin{align}\label{eq:DiscreteMFLD}
X_{k+1}^i = X_{k}^i - \eta_k v_k^i + \sqrt{2 \lambda \eta_k} \xi^i_k,
\end{align}
where $\eta_k>0$ is the step size, $\xi^i_k$ is an i.i.d.~standard normal random variable $\xi_k^i \sim N(0,I)$, and 
$v_k^i = v_k^i(\scrX_{0:k},\omega_k^i)$ is a stochastic approximation of $\nabla \frac{\delta F(\mu_k)}{\delta \mu}(X_k^i)$ 
where $\omega_k = (\omega_k^i)_{i=1}^N$ is a random variable generating the randomness of stochastic gradient.  
$v_k^i$ can depend on the history $\scrX_{0:k}=(\scrX_{0},\dots,\scrX_{k})$, and $\EE_{\omega_k^i}[v_k^i|\scrX_{0:k}] = \nabla \frac{\delta F(\mu_k)}{\delta \mu}(X_k^i)$. 
We analyze three versions of $v_k^i$. 
\paragraph{(1) Full gradient: \underline{F-MFLD}.} 
If we have access to the exact gradient, we may compute
$$
v_k^i = \nabla \frac{\delta F(\mu_k)}{\delta \mu}(X_k^i). 
$$ 
\paragraph{(2) Stochastic gradient: \underline{SGD-MFLD}.} 
Suppose the loss function $U$ is given by an expectation as 
$
U(\mu) = \EE_{Z \sim P_Z}[\ell(\mu,Z)],
$
where $Z \in \calZ$ is a random observation obeying a distribution $P_Z$ and $\ell:\calP \times \calZ \to \Real$ is a loss function.  
In this setting, we construct the stochastic gradient as  
$$
v_k^i = \frac{1}{B} \sum_{j=1}^B \nabla \frac{\delta \ell(\mu_k,z_k^{(j)})}{\delta \mu}(X_k^i)
+  \nabla r(X_k^i),
$$
where $(z_k^{(j)})_{j=1}^B$ is a mini-batch of size $B$ generated from $P_Z$ in an i.i.d. manner. 
\paragraph{(3) Stochastic variance reduced gradient: \underline{SVRG-MFLD}.}
Suppose that the loss function $U$ is given by a finite sum of loss functions $\ell_i$: 
$
U(\mu) = \frac{1}{n} \sum_{i=1}^n \ell_i(\mu),
$
which corresponds to the empirical risk in a usual machine learning setting.
Then, the variance reduced stochastic gradient (SVRG) \citep{NIPS2013_ac1dd209} is defined as follows: 
\begin{itemize}[itemsep=0mm,leftmargin=6mm,topsep=0mm]
\item[(i)] When $k \equiv 0 \bmod m$, where $m$ is the update frequency, we set $\dot{\scrX} = (\dot{X}^i)_{i=1}^n = (X_k^i)_{i=1}^n$ as the anchor point and  
refresh the stochastic gradient as $v_k^i = \nabla \frac{\delta F(\mu_k)}{\delta \mu}(X_k^i)$. 
\item[(ii)] When $k \not \equiv 0 \bmod m$, 
we use the anchor point to construct a control variate and compute
\begin{align*}
v_k^i = \frac{1}{B} \sum_{j \in I_k} 
& \nabla \left(  \frac{\delta \ell_j(\mu_{\scrX_k})}{\delta \mu}(X_k^i)  +  r(X_k^i) 
-  \frac{\delta \ell_j(\mu_{\dot{\scrX}})}{\delta \mu}(\dot{X}^i) 
+  \frac{\delta U(\mu_{\dot{\scrX}})}{\delta \mu}(\dot{X}^i) \right),
\end{align*}
where $I_k$ is a uniformly drawn random subset of $\{1,\dots,n\}$ with size $B$ without duplication. 
\end{itemize}


\section{Main Assumptions and Theoretical Tools}
\label{sec:assumptions}

For our convergence analysis, we make the following assumptions which are inherited from prior works in the literature \citep{nitanda2020particle,pmlr-v151-nitanda22a,chizat2022meanfield,oko2022particle,chen2022uniform}. 
\begin{Assumption}\label{ass:Convexity}
The loss function $U$ and the regularization term $r$ are convex. Specifically,  
\begin{enumerate}[itemsep=0mm,leftmargin=5.4mm,topsep=0mm]
\item $U:\calP \to \Real$ is a convex functional on $\calP$, that is, 
$U(\theta \mu + (1-\theta) \nu) \leq \theta U(\mu) + (1 - \theta) U(\nu)$ for any $\theta \in [0,1]$ and $\mu,\nu \in \calP$.
Moreover, $U$ admits a first-variation at any $\mu \in \calP_2$. 
\item $r(\cdot)$ is twice differentiable and convex, and there exist 
constants $\lambda_1 , \lambda_2 > 0$ and $c_r > 0$ such that $\lambda_1 I \preceq \nabla \nabla^\top r(x) \preceq \lambda_2 I$, 
$x^\top \nabla r(x) \geq \lambda_1\|x\|^2$, 
and $0 \leq r(x) \leq  \lambda_2 (c_r  + \|x\|^2)$ for any $x \in \Real^d$, 
and $\nabla r(0) = 0$. 
\end{enumerate}
\end{Assumption}

\begin{Assumption}\label{ass:BoundSmoothness}
There exists $L > 0$ such that 
$
\left\|\nabla \frac{\delta U(\mu)}{\delta\mu}(x) - \nabla \frac{\delta U(\mu')}{\delta\mu}(x')
\right\| \leq L(W_2(\mu,\mu') + \|x-x'\|) $ 
and $ \left|\frac{\delta^2 U(\mu)}{\delta\mu^2}(x,x')\right| \leq L(1 + c_L (\|x\|^2 + \|x'\|^2)) $
for any $\mu,\mu' \in \calP_2$ and $x,x' \in \Real^d$.
Also, there exists $R > 0$ such that $\|\nabla \frac{\delta U(\mu)}{\delta \mu}(x)\| \leq R$ for any $\mu \in \calP$ and $x \in \Real^d$.  
\end{Assumption} 
Verification of this assumption in the three examples (Examples \ref{exp:TwoLayer}, \ref{exp:DensityEstimation} and \ref{exp:KernelStein}) is given in Appendix \ref{sec:VerificationOfAssump} in the supplementary material. 
We remark that the assumption on the second order variation is only required to derive the discretization error corresponding to the uniform-in-time propagation of chaos (Lemma \ref{lemm:DeltaUDifference} in Appendix \ref{sec:UnifLSIproof}). 
Under these assumptions, Proposition 2.5 of \citet{hu2019mean} yields that $\scrF$ has a unique minimizer $\mu^*$ in $\calP$ and 
$\mu^*$ is absolutely continuous with respect to the Lebesgue measure. Moreover, $\mu^*$ satisfies the self-consistent condition: 
$\mu^*(X) \propto \exp\left(-\frac{1}{\lambda} \frac{\delta F(\mu^*)}{\delta \mu}(X)\right).$

\subsection{Proximal Gibbs Measure \& Logarithmic Sobolev inequality}
The aforementioned self-consistent relation motivates us to introduce the {\it proximal Gibbs distribution} \citep{pmlr-v151-nitanda22a,chizat2022meanfield} whose density is given by 
$$
p_{\scrX}(X) \propto   
\exp\left( - \frac{1}{\lambda}  \frac{\delta F(\mu_{\scrX})}{\delta \mu}(X) \right),
$$
where $\scrX = (X^i)_{i=1}^N \subset \Real^d$ is a set of $N$ particles and $X \in \Real^d$.
As we will see, the convergence of MFLD heavily depends on a {\it logarithmic Sobolev inequality} (LSI) on the proximal Gibbs measure.
\begin{Definition}[Logarithmic Sobolev inequality]\label{def:LogSobolevConst}
Let $\mu$ be a probability measure on $(\Real^{d},\calB(\Real^d))$. 
$\mu$ satisfies the LSI with constant $\alpha > 0$ if for any smooth function $\phi:\Real^d \to \Real$ with $\EE_\mu[\phi^2] < \infty$, we have
$$
\EE_\mu[\phi^2 \log(\phi^2)] - \EE_\mu[\phi^2]\log(\EE_\mu[\phi^2]) \leq \frac{2}{\alpha}\EE_\mu[\|\nabla \phi \|_2^2].
$$
\end{Definition}
This is equivalent to the condition that the KL divergence from $\mu$ is bounded by the Fisher divergence: 
$\int \log(\dd \nu/\dd \mu) \dd \nu \leq \frac{1}{2 \alpha} \int \|\nabla \log(\dd \nu/\dd \mu)\|^2 \dd \nu,$ 
for any $\nu \in \calP$ which is absolutely continuous with respect to $\mu$. 
Our analysis requires that the proximal Gibbs distribution satisfies the LSI as follows.
\begin{Assumption}\label{ass:LSI}
$\mu^*$ and $p_{\scrX}$ satisfy the LSI with $\alpha > 0$ for any set of particles $\scrX = (X^i)_{i=1}^N \subset \Real^d$. 
\end{Assumption}

\paragraph{Verification of LSI.} The LSI of proximal Gibbs measure can be established via standard perturbation criteria. 
For $U(\mu)$ with bounded first-variation, we may apply the classical Bakry-Emery and Holley-Stroock arguments \citep{10.1007/BFb0075847,holley1987logarithmic} (see also Corollary 5.7.2 and 5.1.7 of \citet{bakry2014analysis}). Whereas for Lipschitz perturbations, we employ Miclo’s trick \citep{10.3150/16-BEJ879} or the more recent perturbation results in \citep{10.3150/21-BEJ1419}. 
\begin{Theorem}\label{prop:LSIofPmu}
Under Assumptions \ref{ass:Convexity} and \ref{ass:BoundSmoothness}, 
$\mu^*$ and $p_{\scrX}$ satisfy the log-Sobolev inequality with  
\begin{align*}
& 
\alpha \geq \frac{\lambda_1}{2\lambda} \exp\!\left(\! - \frac{4 R^2}{\lambda_1 \lambda} \sqrt{2d/\pi } \right) \vee \left\{\!
\frac{4 \lambda}{\lambda_1} \!+ 
e^{\frac{R^2}{2\lambda_1\lambda}} \!\!\left(\tfrac{R}{\lambda_1}\! + \!\!\sqrt{\tfrac{2\lambda}{\lambda_1}} \right)^2 
\left[2 \!+ \!d\! + \!\tfrac{d}{2}\log\left(\tfrac{\lambda_2}{\lambda_1}\right) \!+ \!4 \frac{R^2 }{\lambda_1\lambda}
\right]
\right\}^{\!-1}. 
\end{align*}
Furthermore, if $\|\frac{\delta U(\mu)}{\delta \mu}\|_\infty \leq R$ is satisfied for any $\mu \in \calP_2$, 
then $\mu^*$ and $p_\scrX$ satisfy the LSI with 
$\alpha \geq \tfrac{\lambda_1}{\lambda} \exp\left(- \tfrac{4R}{\lambda}\right)$. 
\end{Theorem}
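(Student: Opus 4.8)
The plan is to exploit the product structure of the proximal Gibbs density. Since $F(\mu)=U(\mu)+\EE_\mu[r]$ gives $\frac{\delta F(\mu_\scrX)}{\delta\mu}(X)=\frac{\delta U(\mu_\scrX)}{\delta\mu}(X)+r(X)$, the density factors as
$$
p_\scrX(X)\ \propto\ \exp\!\Big(-\tfrac{1}{\lambda}r(X)\Big)\cdot\exp\!\Big(-\tfrac{1}{\lambda}\tfrac{\delta U(\mu_\scrX)}{\delta\mu}(X)\Big),
$$
and the same decomposition applies to $\mu^*$ through the self-consistent relation $\mu^*\propto\exp(-\tfrac{1}{\lambda}\tfrac{\delta F(\mu^*)}{\delta\mu})$. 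I would therefore treat the strongly convex regularizer $r$ as the reference potential and the first-variation of $U$ as a perturbation, first establishing the LSI for the reference measure and then transferring it to $p_\scrX$ (and to $\mu^*$) across the perturbation. Since the argument is identical in form for $\mu^*$ and for every $p_\scrX$, it suffices to bound a generic such measure uniformly in $\scrX$.

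First I would treat the reference measure $\nu_0\propto\exp(-r/\lambda)$. By Assumption~\ref{ass:Convexity} we have $\nabla\nabla^\top(r/\lambda)\succeq(\lambda_1/\lambda)I$, so $\nu_0$ is $(\lambda_1/\lambda)$-strongly log-concave, and the Bakry--Emery criterion yields that $\nu_0$ satisfies the LSI with constant $\lambda_1/\lambda$. This is the origin of the leading $\lambda_1/\lambda$ factor in both displayed bounds, and the upper curvature bound $\nabla\nabla^\top r\preceq\lambda_2 I$ sandwiches $\nu_0$ between Gaussians of covariance $(\lambda/\lambda_2)I$ and $(\lambda/\lambda_1)I$, which is what later supplies the required moment and normalization estimates.

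Next I would transfer the inequality to $p_\scrX$ by perturbation, splitting into the two regimes of the statement. In the bounded regime $\|\tfrac{\delta U(\mu)}{\delta\mu}\|_\infty\le R$, the perturbing potential $\tfrac{1}{\lambda}\tfrac{\delta U}{\delta\mu}$ has oscillation at most $2R/\lambda$; applying the Holley--Stroock lemma, which degrades the LSI constant by the factor $e^{-2\,\mathrm{osc}}$ (one factor from $\mathrm{Ent}_{p_\scrX}\le e^{\mathrm{osc}}\mathrm{Ent}_{\nu_0}$ and one from comparing Dirichlet forms and normalizations), yields exactly $\alpha\ge\tfrac{\lambda_1}{\lambda}e^{-4R/\lambda}$. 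The delicate regime is when only the gradient is controlled, $\|\nabla\tfrac{\delta U}{\delta\mu}\|\le R$: here the perturbation is $(R/\lambda)$-Lipschitz but unbounded, so Holley--Stroock no longer applies directly. I would instead invoke Miclo's trick or the sharper Lipschitz-perturbation estimate of \citet{10.3150/21-BEJ1419}: approximate the Lipschitz potential by bounded truncations and control the discarded correction through the concentration of the near-Gaussian reference $\nu_0$. This mechanism produces the dimension-dependent terms $\sqrt{2d/\pi}$, $d$, and $\tfrac{d}{2}\log(\lambda_2/\lambda_1)$ together with the $R^2/(\lambda_1\lambda)$ corrections appearing in the first bound.

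The main obstacle I anticipate is making the constants in the Lipschitz regime fully explicit. The two-term $\max$ structure --- an exponential bound $\tfrac{\lambda_1}{2\lambda}\exp(-\tfrac{4R^2}{\lambda_1\lambda}\sqrt{2d/\pi})$ that dominates in one parameter range, and the reciprocal-of-a-sum bound that dominates in another --- reflects two competing ways of optimizing the truncation level: one balancing the truncated oscillation against the discarded tail mass, the other tracking the Gaussian moments and the entropy gap $\tfrac{d}{2}\log(\lambda_2/\lambda_1)$ of the sandwiched reference. The conceptual steps (Bakry--Emery for $\nu_0$, perturbation to $p_\scrX$) are routine; the real work is the careful bookkeeping of these truncation, moment, and normalization estimates and recording which estimate wins in each regime, which is precisely where the stated constant is pinned down.
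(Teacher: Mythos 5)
Your proposal follows essentially the same route as the paper: Bakry--Emery for the reference measure $\nu_0\propto e^{-r/\lambda}$, Holley--Stroock for the bounded first-variation case (giving $\tfrac{\lambda_1}{\lambda}e^{-4R/\lambda}$), and the two branches of the max obtained respectively from Miclo's trick and the Lipschitz-perturbation result of \citet{10.3150/21-BEJ1419}, exactly as in Lemmas~\ref{lemm:LipschitzLSI} and~\ref{lemm:LSIevalLipschitz}. The only cosmetic difference is that the paper's second branch is assembled from a Poincar\'e inequality for the perturbed measure (via Cattiaux--Guillin), a generator-based second-moment bound for $\nu_0$, and a lower bound on the normalizing constant, rather than a truncation-level optimization, but this is the internal bookkeeping of the cited perturbation lemma and does not change the argument.
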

See Lemma \ref{lemm:LipschitzLSI} for the proof of the first assertion, and 
Section A.1 of \citet{pmlr-v151-nitanda22a} or Proposition 5.1 of \citet{chizat2022meanfield} for the proof of the second assertion. 


\section{Main Result: Convergence Analysis}
In this section, we present our the convergence rate analysis of the discretized dynamics \Eqref{eq:DiscreteMFLD}. 
To derive the convergence rate, we need to evaluate the errors induced by three approximations: (i) time discretization, (ii) particle approximation, and (iii) stochastic gradient. 

\subsection{General Recipe for Discretization Error Control}

Note that for the finite-particle setting, the KL term does not make sense because the negative entropy is not well-defined for a discrete empirical measure. Instead, we consider the distribution of the $N$ particles, that is, let $\mu^{(N)} \in \calP^{(N)}$ be a distribution of $N$ particles $\scrX = (X^i)_{i=1}^N$ where $\calP^{(N)}$ is the set of probability measures on $(\Real^{d \times N}, \calB(\Real^{d \times N}))$. Similarly, we introduce the following objective on $\calP^{(N)}$: 
$$
\scrF^N(\mu^{(N)})
=  N \EE_{\scrX \sim \mu^{(N)}}[F(\mu_{\scrX})] + \lambda \Ent(\mu^{(N)}).
$$
One can easily verify that if $\mu^{(N)}$ is a product measure of $\mu \in \calP$, then $\scrF^N(\mu^{(N)}) \geq N \scrF(\mu)$ by the convexity of $F$. 
\emph{Propagation of chaos} \citep{sznitman1991topics} refers to the phenomenon that, as the number of particles $N$ increases, the particles behave as if they are independent; in other words, the joint distribution of the $N$ particles becomes ``close'' to the product measure. Consequently, the minimum of $\scrF^N(\mu^{(N)})$ (which can be obtained by the particle-approximated MFLD) is close to $N \scrF(\mu^*)$. Specifically, it has been shown in \cite{chen2022uniform} that
\begin{equation}\label{eq:MinimumDiscrepancy}
0 \leq \inf_{\mu^{(N)}\in \calP^{(N)}} \frac{1}{N} \scrF^N(\mu^{(N)}) - \scrF(\mu^*) \leq  \frac{C_{\lambda}}{N}, 
\end{equation}
for some constant $C_\lambda > 0$ (see Section \ref{sec:AuxiLemma} for more details). 
Importantly, if we consider the Wasserstein gradient flow on $\calP^{(N)}$, the convergence rate of which depends on the logarithmic Sobolev inequality (Assumption \ref{ass:LSI}), we need to ensure that the LSI constant does not deteriorate as $N$ increases. Fortunately, the propagation of chaos and the \textit{tensorization} of LSI entail that the LSI constant with respect to the objective $\scrF^N$ can be uniformly bounded over all choices of $N$ (see \Eqref{eq:UniformLogSobolevInequ} in the appendix). 

To deal with the time discretization error, we build upon the one-step interpolation argument from \citet{vempala2019rapid} which analyzed the vanilla gradient Langevin dynamics (see also \citet{pmlr-v151-nitanda22a} for its application to the infinite-particle MFLD).

Bounding the error induced by the stochastic gradient approximation is also challenging because the objective is defined on the space of probability measures, and thus techniques in finite-dimensional settings cannot be utilized in a straightforward manner. 
Roughly speaking, this error is characterized by the variance of $v_k^i$: 
$\sigma_{v,k}^2 := \EE_{\omega_k}[\|v_k^i - \nabla \frac{\delta F(\mu_k)}{\delta \mu}\|^2]$. 
In addition, to obtain a refined evaluation, we also incorporate the following smoothness assumption. 
\begin{Assumption}\label{ass:SecondOrderSmooth}
$v_k^i(\scrX_{0:k},\omega_k^i)$ and $D_m^i F(\scrX_k) = \nabla \frac{\delta F(\mu_{\scrX_k})}{\delta\mu}(X_k^i)$ are differentiable w.r.t.~$(X_k^j)_{j=1}^N$
and, for either $G(\scrX_k) = v_k^i(\scrX_{0:k},\omega_k^i)$ or $G(\scrX_k) = D_m^i F(\scrX_k)$ as a function of $\scrX_k$, it is satisfied that 
$\|G(\scrX'_k)  - G(\scrX_k) - 
\sum_{j=1}^N \nabla_{X_k^j}^\top G(\scrX_k) \cdot (X^{\prime j}_k - X_k^j)
\| \leq Q \frac{\sum_{j\neq i}  \|X^{\prime j}_k - X_k^j\|^2 +  N \|X^{\prime i}_k - X_k^i\|^2}{2 N} $ for some $Q > 0$ and $\scrX_k = (X_k^j)_{j=1}^N$ and $\scrX'_k = (X^{\prime j}_k)_{j=1}^N$. We also assume
$
\textstyle 
\EE_{\omega_k}\big[\big\|\nabla_{X_k^j} v_k^i(\scrX_{0:k},\omega_k^i)^\top - \nabla_{X_k^j} D_m^iF (\scrX_k)^\top \big\|_{\mathrm{op}}^2\big]\leq \frac{1 + \delta_{i,j}N^2}{N^2} \tilde{\sigma}_{v,k}^2,
$
and $\|v_k^i((\scrX_k,\scrX_{0:k-1}),\omega_k^i) - v_k^i((\scrX'_k,\scrX_{0:k-1}),\omega_k^i)\|
\leq L (W_2(\mu_{\scrX_k},\mu_{\scrX'_k}) + \|X_k^i - X_k^{\prime i}\|)$.
\end{Assumption}
Note that this assumption is satisfied if the gradient is twice differentiable and the second derivative is bounded. 
The factor $N$ appears because the contribution of each particle is $O(1/N)$ unless $i = j$. 
In the following, we present both the basic convergence result under Assumption~\ref{ass:BoundSmoothness}, and the improved rate under the additional Assumption~\ref{ass:SecondOrderSmooth}. 

Taking these factors into consideration, we can evaluate the decrease in the objective by one-step update. 
Let $\mu_k^{(N)} \in \calP^{(N)}$ be the distribution of $\scrX_k$ conditioned on the sequence $\omega_{0:k}=(\omega_{k'})_{k'=0}^k$. 
Define 
\begin{align*}
& \bar{R}^2 := \EE[\|X_0^i\|^2] \!+ \!\frac{1}{\lambda_1} \!\left[\left(\frac{\lambda_1}{4\lambda_2} + \frac{1}{\lambda_1} \right) (R^2 + \lambda_2 c_r) \!+ \!\lambda d \right],~~~
\delta_\eta := C_1 \bar{L}^2 (\eta^2 +  \lambda \eta), 
\end{align*}
where $C_1 = 8 [R^2 + \lambda_2(c_r+ \bar{R}^2) + d]$ and $\bar{L} = L + \lambda_2$.\footnote{The constants here are not optimized (e.g., $O(\frac{1}{\lambda_1})$ in $\bar{R}$ can be improved) since we prioritize for clean presentation. \vspace{-2.5mm}}
\begin{Theorem}\label{thm:GeneralOneStepUpdate}
Under Assumptions \ref{ass:Convexity}, \ref{ass:BoundSmoothness} and \ref{ass:LSI}, 
if $\eta_k \leq \lambda_1/(4\lambda_2)$, we have 
\begin{align*}
& \frac{1}{N}\EE\left[ \scrF^N(\mu_{k+1}^{(N)}) \right] \! - \! \scrF(\mu^*) 
\! \leq \!
\exp(- \lambda \alpha \eta_k / 2) \left(  \frac{1}{N}\EE\left[\scrF^N(\mu_{k}^{(N)})\right] \! - \! \scrF(\mu^*) \! \right) \! + \! \eta_k \left(\delta_{\eta_k} \! + \! \frac{C_{\lambda}}{N}\right) \! + \! \Upsilon_k, 
\end{align*}
where
$\Upsilon_k = 4 \eta_{k}  \delta_{\eta_{k}}  + 
\left(R + \lambda_2 \bar{R}\right)
(1 + \sqrt{\lambda/{\eta_{k}} })
\left(\eta_{k}^2 \tilde{\sigma}_{v,k}\sigma_{v,k}
 +  Q \eta_{k}^{3} \sigma_{v,k}^2 \right) +   \eta_{k}^2 (L + \lambda_2)^2 \sigma_{v,k}^2$ with Assumption \ref{ass:SecondOrderSmooth}, 
and $\Upsilon_k = \sigma_{v,k}^2 \eta_k$ without this additional assumption;  
the expectation is taken with respect to the randomness $(\omega_{k'})_{k'=1}^k = ((\omega_{k'}^i)_{i=1}^N)_{k'=1}^k$ 
of the stochastic gradient; and 
$C_\lambda = 2 \lambda L \alpha(1 +  2 c_L \bar{R}^2) + 2 \lambda^2 L^2\bar{R}^2$. 
\end{Theorem}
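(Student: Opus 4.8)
The plan is to carry out a single-step version of the interpolation argument of \citet{vempala2019rapid}, lifted to the $N$-particle configuration space and measured against the $N$-particle proximal Gibbs measure $p^{(N)}(\scrX) \propto \exp(-\tfrac{N}{\lambda}F(\mu_{\scrX}))$. First I would convert the objective gap into a relative entropy: since $\nabla_{X^i}[N F(\mu_{\scrX})] = \nabla\tfrac{\delta F(\mu_{\scrX})}{\delta\mu}(X^i)$, the drift in \eqref{eq:DiscreteMFLD} is a stochastic estimate of the Wasserstein gradient of $\scrF^N$ on $\calP^{(N)}$ and $p^{(N)}$ is its stationary law; a direct computation then gives $\tfrac{1}{N}\scrF^N(\nu^{(N)}) - \inf_{\mu^{(N)}}\tfrac{1}{N}\scrF^N = \tfrac{\lambda}{N}\KL(\nu^{(N)}\|p^{(N)})$, while \eqref{eq:MinimumDiscrepancy} bounds the residual offset $\inf_{\mu^{(N)}}\tfrac{1}{N}\scrF^N - \scrF(\mu^*)$ at order $1/N$. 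Assumption \ref{ass:LSI}, together with a tensorization and propagation-of-chaos argument, supplies an LSI for $p^{(N)}$ with a constant uniform in $N$, which will drive the contraction.

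Next I would introduce the continuous interpolation $\dd X_t^i = -v_k^i\,\dd t + \sqrt{2\lambda}\,\dd W_t^i$ on $t\in[0,\eta_k]$ with $X_0^i = X_k^i$, whose endpoint law is $\mu_{k+1}^{(N)}$ and whose drift $v_k^i$ is frozen at its value at $\scrX_k$. Writing $\nu_t^{(N)}$ for the conditional law of the interpolated particles and differentiating $\KL(\nu_t^{(N)}\|p^{(N)})$ in $t$, the Fokker--Planck equation and integration by parts produce a negative dissipation $-\lambda\,I(\nu_t^{(N)}\|p^{(N)})$ (the relative Fisher information) plus a cross term $\sum_i\EE[\langle\nabla_{X^i}\log\tfrac{\nu_t^{(N)}}{p^{(N)}},\,\nabla\tfrac{\delta F(\mu_{\scrX_t})}{\delta\mu}(X_t^i) - v_k^i\rangle]$. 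I would split this into a time-discretization part (frozen gradient at $\scrX_k$ versus ideal gradient at $\scrX_t$) and a stochastic-gradient part ($\nabla\tfrac{\delta F}{\delta\mu}(X_k^i)$ versus $v_k^i$).

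The discretization part is controlled by the Lipschitz estimate of Assumption \ref{ass:BoundSmoothness}, which bounds the frozen-versus-current gap by $L(W_2(\mu_{\scrX_k},\mu_{\scrX_t}) + \|X_t^i - X_k^i\|)$; the per-step displacement has second moment $O(\eta_k^2 R^2 + \lambda\eta_k d)$ once one uses the uniform-in-$k$ second-moment bound $\bar R^2$ (obtained from the dissipativity $x^\top\nabla r(x)\geq\lambda_1\|x\|^2$ in Assumption \ref{ass:Convexity}), yielding $\delta_{\eta_k} = C_1\bar L^2(\eta_k^2 + \lambda\eta_k)$. The finite-$N$ term $\eta_k C_\lambda/N$ collects the finite-particle contributions: the static offset from \eqref{eq:MinimumDiscrepancy} (which enters through the $1 - \exp(-\lambda\alpha\eta_k/2)$ contraction factor) and the per-step gap between the finite-particle and mean-field drifts, estimated via the second-order-variation bound on $\tfrac{\delta^2 U}{\delta\mu^2}$ in Assumption \ref{ass:BoundSmoothness}. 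Finally I would apply Young's inequality to the cross term, retain a fraction of the dissipation, invoke the LSI in the form $I\geq 2\alpha\KL$, and integrate the resulting differential inequality over $[0,\eta_k]$ to obtain the contraction factor $\exp(-\lambda\alpha\eta_k/2)$.

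The hard part will be the stochastic-gradient contribution $\Upsilon_k$, and especially its sharp form under Assumption \ref{ass:SecondOrderSmooth}. The obstacle is that the noise $v_k^i - \nabla\tfrac{\delta F}{\delta\mu}(X_k^i)$ is correlated with the particle configuration, hence with the score $\nabla_{X^i}\log\tfrac{\nu_t^{(N)}}{p^{(N)}}$, so one cannot simply take $\EE_{\omega_k}$ inside the inner product and drop the cross term; the crude Young-versus-variance route only gives $\Upsilon_k = \sigma_{v,k}^2\eta_k$. To reach the refined bound I would Taylor-expand the score at time $t$ about its value at $t=0$: the leading cross term is the correlation at the anchor $\scrX_k$, which vanishes by unbiasedness $\EE_{\omega_k}[v_k^i\mid\scrX_{0:k}] = \nabla\tfrac{\delta F}{\delta\mu}(X_k^i)$, and the surviving first-order corrections are bounded using the second-order smoothness of $v_k^i$ and $D_m^i F$ and the operator-norm variance $\tilde\sigma_{v,k}^2$ from Assumption \ref{ass:SecondOrderSmooth}. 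This yields the mixed term $\eta_k^2\tilde\sigma_{v,k}\sigma_{v,k}$, the curvature remainder $Q\eta_k^3\sigma_{v,k}^2$, and a final $\eta_k^2\bar L^2\sigma_{v,k}^2$ from the quadratic-in-noise piece. Carrying out this covariance bookkeeping --- showing the $O(\eta_k)$ contribution cancels by unbiasedness and that only the advertised higher-order terms survive --- is the technical crux.
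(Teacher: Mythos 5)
Your overall architecture matches the paper's proof in Appendix B: the one-step interpolation $\dd\widetilde{X}_t^i=-v_k^i\,\dd t+\sqrt{2\lambda}\,\dd W_t^i$ on $\Real^{dN}$, the decomposition of $\frac{\dd}{\dd t}\scrF^N(\tilde{\mu}_t^{(N)})$ into a Fisher-information dissipation measured against $p^{(N)}$, a time-discretization term controlled by Assumption \ref{ass:BoundSmoothness} and the uniform second-moment bound (giving $\delta_{\eta_k}$), and a stochastic-gradient cross term; the crude Young-inequality bound yielding $\Upsilon_k=\sigma_{v,k}^2\eta_k$; the Gronwall integration over $[0,\eta_k]$; and, for the refined $\Upsilon_k$, the cancellation of the leading correlation by unbiasedness of $v_k^i$ at the anchor $\scrX_k$ together with the second-order smoothness and operator-norm variance $\tilde{\sigma}_{v,k}$ of Assumption \ref{ass:SecondOrderSmooth}. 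You correctly identify that last covariance bookkeeping as the technical crux (the paper executes it via the explicit Gaussian score of the one-step transition kernel and an auxiliary process $\hat{\scrX}_t$ driven by the exact gradient), and your term-by-term accounting of the resulting $\eta_k^2\tilde{\sigma}_{v,k}\sigma_{v,k}$, $Q\eta_k^3\sigma_{v,k}^2$, and $\eta_k^2\bar{L}^2\sigma_{v,k}^2$ contributions is consistent with the statement.

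The one step that would fail as written is your claim that Assumption \ref{ass:LSI}, ``together with a tensorization and propagation-of-chaos argument, supplies an LSI for $p^{(N)}$ with a constant uniform in $N$.'' The measure $p^{(N)}(\scrX)\propto\exp(-\tfrac{N}{\lambda}F(\mu_{\scrX}))$ is \emph{not} a product measure --- the particles interact through $F$ --- so tensorization of LSI does not apply to it, and no uniform-in-$N$ LSI for $p^{(N)}$ is established (or needed) in the paper; under the paper's assumptions, which impose no weak-interaction or large-noise condition, such a statement is not available. What the paper proves instead (Lemma \ref{lemm:UniformLSI}) is a \emph{defective} LSI-type inequality lower-bounding the relative Fisher information $\sum_i\EE\|\nabla_i\log(\tilde{\mu}_t^{(N)}/p^{(N)})\|^2$ directly by $\tfrac{\alpha}{\lambda}\bigl(\scrF^N(\tilde{\mu}_t^{(N)})-N\scrF(\mu^*)\bigr)$ minus an $O(1)$ defect: a leave-one-out argument swaps $\nabla\tfrac{\delta F(\mu_{\scrX})}{\delta\mu}(X_i)$ for $\nabla\tfrac{\delta F(\mu_{\scrX^{-i}})}{\delta\mu}(X_i)$ at a cost controlled by the second-variation bound in Assumption \ref{ass:BoundSmoothness}, applies the assumed LSI to the conditional proximal Gibbs measures $p_{\scrX^{-i}}$ (each a measure on $\Real^d$, where Assumption \ref{ass:LSI} does apply), and uses convexity of $U$ and the optimality of $p_{\scrX^{-i}}$ for the linearized objective to reassemble the conditional KL divergences into the global objective gap. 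This is also where the constant $C_\lambda=2\lambda L\alpha(1+2c_L\bar{R}^2)+2\lambda^2 L^2\bar{R}^2$ and the $\eta_k C_\lambda/N$ term actually originate, rather than from \Eqref{eq:MinimumDiscrepancy} as in your accounting. To repair your route through $\KL(\nu^{(N)}\|p^{(N)})$ you would either have to prove the uniform LSI for $p^{(N)}$ separately --- which you cannot under these hypotheses --- or substitute the leave-one-out lemma for that step, which is what the paper does.
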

The proof can be found in Appendix \ref{sec:GeneralOneStep}.
We remark that to derive the bound for $\Upsilon_k = \sigma_{v,k}^2 \eta_k$ is relatively straightforward, but to derive a tighter bound with Assumption \ref{ass:SecondOrderSmooth} is technically challenging,  because we need to evaluate how 
the next-step distribution $\mu_{k+1}^{(N)}$ is correlated with the stochastic gradient $v_k^i$. 
Evaluating such correlations is non-trivial because the randomness is induced not only by $\omega_k$ but also by the Gaussian noise.  
Thanks to this general result, we only need to evaluate the variance $\sigma_{v,k}$ and $\tilde{\sigma}_{v,k}$ for each method to obtain the specific convergence rate. 
\paragraph{Conversion to a Wasserstein distance bound.} 
As a consequence of the bound on $\frac{1}{N}\scrF^N(\mu_{k}^{(N)})- \scrF(\mu^*)$, we can control the Wasserstein distance between $\mu_{k}^{(N)}$ and $\mu^{*N}$, where $\mu^{*N} \in \calP^{(N)}$ is the ($N$-times) product measure of $\mu^*$. 
Let $W_2(\mu,\nu)$ be the 2-Wasserstein distance between $\mu$ and $\nu$, then 
$$W_2^2(\mu_k^{(N)},\mu^{*N}) \leq \frac{2}{\lambda \alpha}(\scrF^N(\mu_k^{(N)}) - N \scrF(\mu^*)),
$$  
under Assumptions \ref{ass:Convexity} and \ref{ass:LSI} (see Lemma \ref{lemm:WassersteinFNBound} in the Appendix). 
Hence, if $\frac{1}{N}\scrF^N(\mu_{k}^{(N)})- \scrF(\mu^*)$ is small, the particles $(X_k^i)_{i=1}^N$ behaves like an i.i.d.~sample from $\mu^*$. 
As an example, for the mean-field neural network setting, 
if $|h_{x}(z) - h_{x'}(z)| \leq L \|x - x'\|~(\forall x,x' \in \Real^d)$ 
and $V_{\mu^*} = \int (f_{\mu^*}(z) - h_x(z))^2 \dd \mu^*(x) < \infty$ for a fixed $z$, 
then Lemma \ref{eq:MFNNDiscrepancy} in the appendix yields that 
\begin{align*}
\EE_{\scrX_k \sim \mu_k^{(N)}}[(f_{\mu_{\scrX_k}}(z) - f_{\mu^*}(z))^2] \leq \frac{2 L^2 }{N}W_2^2(\mu_k^{(N)},\mu^{*N}) +  \frac{2}{N} V_{\mu^*}, 
\end{align*} 
which also gives $\EE_{\scrX_k \sim \mu_k^{(N)}}[(f_{\mu_{\scrX_k}}(z) - f_{\mu^*}(z))^2] \leq 
\tfrac{4 L^2 }{\lambda \alpha}\left(
N^{-1}\scrF^N(\mu_k^{(N)}) -  \scrF(\mu^*)\right)+  \frac{2}{N} V_{\mu^*}$. 
This allows us to monitor the convergence of the finite-width neural network to the optimal solution $\mu^*$ in terms of the model output (up to $1/N$ error).  

\subsection{F-MFLD and SGD-MFLD}

Here, we present the convergence rate for F-MFLD and SGD-MFLD simultaneously.
F-MFLD can be seen as a special case of SGD-MFLD where the variance $\sigma_{v,k}^2 = 0$. We specialize the previous assumptions to the stochastic gradient setting as follows.
\begin{Assumption}\label{ass:VarianceBound} ~
\begin{itemize}[topsep=0mm,itemsep=0.5mm]
\item[(i)] $\sup_{x \in \Real^d}\|\nabla \frac{\delta \ell(\mu,z)}{\delta \mu}(x)\| \leq R$ for all $\mu \in \calP$ and $z \in \calZ$. 
\item[(ii)] $\sup_{x }\|\nabla_x \nabla_x^\top
\frac{\delta \ell(\mu,z)}{\delta \mu}(x)\|_{\mathrm{op}},~
\sup_{x,x'}\|\nabla_x \nabla_{x'}^\top \frac{\delta^2 \ell(\mu,z)}{\delta^2 \mu}(x,x')\|_{\mathrm{op}} \leq R$. 
\end{itemize}
\end{Assumption}
Note that point $(i)$ is required to bound the variance $\sigma^2_{v,k}$ (i.e., $\sigma^2_{v,k} \leq R^2/B$), 
whereas point $(ii)$ corresponds to the additional Assumption~\ref{ass:SecondOrderSmooth} required for the improved convergence rate. 
Let $\Delta_0 :=  \frac{1}{N}\EE[\scrF^N(\mu_{0}^{(N)})] - \scrF(\mu^*)$.
We have the following evaluation of the objective. 
Note that we can recover the evaluation for F-MFLD by formally setting $B = \infty$ so that $\sigma_{v,k}^2 = 0$ and $\tilde{\sigma}_{v,k}^2 = 0$. 
\begin{Theorem}\label{thm:ConvSGDMFLD:FixedStep}
Suppose that $\eta_k = \eta~(\forall k \in \Natural_0)$ and $\lambda \alpha \eta \leq 1/4$ and $\eta \leq \lambda_1/(4\lambda_2)$. 
Under Assumptions \ref{ass:Convexity}, \ref{ass:BoundSmoothness}, \ref{ass:LSI} and \ref{ass:VarianceBound}, it holds that 
\begin{align}\label{eq:SGDGeneralBound} 
& \frac{1}{N}\EE[ \scrF^N(\mu_k^{(N)}) ]- \scrF(\mu^*)  
\leq
\exp\left(- \lambda \alpha \eta k / 2 \right)\Delta_0 
+ 
\frac{4}{\lambda \alpha}  \bar{L}^2 C_1\left( \lambda  \eta
+
\eta^2\right) + \frac{4}{\lambda \alpha \eta} \bar{\Upsilon}  + \frac{4 C_{\lambda}}{ \lambda \alpha N},
\end{align}
where $\bar{\Upsilon}= 4 \eta  \delta_\eta  + 
\left[R + \lambda_2 \bar{R} + (L + \lambda_2)^2 \right] 
(1 + 
\sqrt{{\tfrac{\lambda}{\eta}} }
)
\eta^2 \frac{R^2}{B}
 +  
 \left(R + \lambda_2 \bar{R}\right) R (1 + \sqrt{\tfrac{\lambda}{\eta} })\eta^{3} \frac{R^2}{B}$ under Assumption \ref{ass:VarianceBound}-(ii), and 
 $\bar{\Upsilon}=\frac{R^2}{B} \eta$ without this additional assumption. 
\end{Theorem}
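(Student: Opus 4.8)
The plan is to treat Theorem~\ref{thm:GeneralOneStepUpdate} as a black-box one-step contraction and reduce the whole statement to (a) a variance estimate that specializes the abstract quantities $\sigma_{v,k}$, $\tilde{\sigma}_{v,k}$ and $Q$ to the stochastic-gradient construction, and (b) the unrolling of a scalar linear recursion. Write $a_k := \frac{1}{N}\EE[\scrF^N(\mu_k^{(N)})] - \scrF(\mu^*) \ge 0$. With the fixed step size $\eta_k \equiv \eta$, Theorem~\ref{thm:GeneralOneStepUpdate} reads $a_{k+1} \le \rho\, a_k + D$, where $\rho = \exp(-\lambda\alpha\eta/2)$ and $D = \eta(\delta_\eta + C_\lambda/N) + \bar{\Upsilon}$, once $\Upsilon_k$ is frozen to its step-independent value $\bar{\Upsilon}$. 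The remaining task is thus to (i) identify $\bar{\Upsilon}$ and (ii) solve the recursion.

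For step (i), I would bound the two variance proxies. Since $v_k^i$ is an average of $B$ i.i.d.~unbiased terms, each of norm at most $R$ by Assumption~\ref{ass:VarianceBound}-(i), the standard variance-of-the-mean estimate gives $\sigma_{v,k}^2 = \EE_{\omega_k}[\|v_k^i - \nabla\frac{\delta F(\mu_k)}{\delta\mu}(X_k^i)\|^2] \le R^2/B$. Analogously, Assumption~\ref{ass:VarianceBound}-(ii) bounds the relevant second-order operator norms by $R$, so that after averaging over the mini-batch the gradient-of-gradient fluctuation obeys $\tilde{\sigma}_{v,k}^2 \le R^2/B$, and the modulus-of-continuity constant $Q$ appearing in Assumption~\ref{ass:SecondOrderSmooth} may be taken to be $R$. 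Substituting $\sigma_{v,k}, \tilde{\sigma}_{v,k} \le R/\sqrt{B}$ and $Q \le R$ into the expression for $\Upsilon_k$, merging the term $\eta^2(L+\lambda_2)^2\sigma_{v,k}^2$ with $\eta^2\tilde{\sigma}_{v,k}\sigma_{v,k}$ (both of order $\eta^2 R^2/B$) and keeping $Q\eta^3\sigma_{v,k}^2$ as the cubic term, yields exactly the stated $\bar{\Upsilon}$; when Assumption~\ref{ass:VarianceBound}-(ii) is dropped one simply uses $\Upsilon_k = \sigma_{v,k}^2\eta \le (R^2/B)\eta$.

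For step (ii), unrolling $a_{k+1}\le \rho a_k + D$ over the fully averaged iterates gives $a_k \le \rho^k a_0 + D\sum_{j=0}^{k-1}\rho^j \le \rho^k a_0 + D/(1-\rho)$, and $\rho^k a_0 = \exp(-\lambda\alpha\eta k/2)\Delta_0$ is the leading term. It remains to control $1/(1-\rho)$. Since $\lambda\alpha\eta \le 1/4$, the argument $x := \lambda\alpha\eta/2 \le 1/8 \le 1$, so the elementary inequality $1 - e^{-x} \ge x - x^2/2 = x(1-x/2) \ge x/2$ gives $1-\rho \ge \lambda\alpha\eta/4$, hence $D/(1-\rho) \le 4D/(\lambda\alpha\eta)$. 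Distributing $\frac{4}{\lambda\alpha\eta}\big[\eta(\delta_\eta + C_\lambda/N) + \bar{\Upsilon}\big]$ and inserting $\delta_\eta = C_1\bar{L}^2(\eta^2+\lambda\eta)$ reproduces the three error terms $\frac{4}{\lambda\alpha}\bar{L}^2 C_1(\lambda\eta+\eta^2)$, $\frac{4}{\lambda\alpha\eta}\bar{\Upsilon}$ and $\frac{4C_\lambda}{\lambda\alpha N}$. The F-MFLD case then follows by sending $B\to\infty$ so that all variance terms vanish.

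Given that the genuinely delicate estimate — the correlation between the stochastic gradient and the next-step law that produces the refined $\Upsilon_k$ — is already packaged inside Theorem~\ref{thm:GeneralOneStepUpdate}, the only real care needed here is the bookkeeping of the variance bounds and the constant tracking; the recursion itself is routine. I would expect the main (minor) obstacle to be verifying that the per-particle fluctuation structure $\frac{1+\delta_{i,j}N^2}{N^2}\tilde{\sigma}_{v,k}^2$ demanded by Assumption~\ref{ass:SecondOrderSmooth} is indeed produced by the mini-batch average with the claimed $\tilde{\sigma}_{v,k}^2 \le R^2/B$, since the $1/N$-scaling of the off-diagonal ($i\neq j$) terms must be matched against the diagonal contribution.
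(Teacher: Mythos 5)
Your proposal is correct and follows essentially the same route as the paper's proof: bound $\sigma_{v,k}^2,\tilde{\sigma}_{v,k}^2\le R^2/B$ and $Q\le R$ from Assumption \ref{ass:VarianceBound}, freeze $\Upsilon_k$ to $\bar{\Upsilon}$, unroll the one-step contraction of Theorem \ref{thm:GeneralOneStepUpdate} (the paper invokes the Gronwall lemma for this), and bound the geometric sum by $4/(\lambda\alpha\eta)$ using $\lambda\alpha\eta\le 1/2$. The one point you flag as a potential obstacle—verifying the $\frac{1+\delta_{i,j}N^2}{N^2}\tilde{\sigma}_{v,k}^2$ scaling for the mini-batch estimator—is likewise asserted without detailed verification in the paper, so your treatment matches the paper's level of rigor there as well.
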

The proof is given in Appendix \ref{sec:ProofSGDMFLD}. This can be seen as a mean-field generalization of \citet{vempala2019rapid} which provides a convergence rate of discrete time vanilla GLD with respect to the KL divergence.
Indeed, their derived rate $O(\exp(-\lambda \alpha \eta k) \Delta_0 + \frac{\eta}{\alpha \lambda})$ is consistent to ours, since 
\Eqref{eq:KLboundByObjective} in the Appendix implies that the objective $\frac{1}{\lambda}(\frac{1}{N}\EE[ \scrF^N(\mu_k^{(N)}) ]- \scrF(\mu^*))$ upper bounds the KL divergence between $\mu_k^{(N)}$ and $\mu^{*N}$. This being said, our result also handles the stochastic approximation which can give better total computational complexity even in the vanilla GLD setting as shown below. 

For a given $\epsilon > 0$, if we take
\begin{align*}
& \eta \leq \frac{ \alpha \epsilon}{40 \bar{L}^2 C_1}    \wedge 
\frac{1}{\bar{L}} \sqrt{ \frac{ \lambda \alpha  \epsilon
}{40 C_1}} \wedge 1,~~
k \geq \frac{2}{\lambda \alpha \eta} \log(2\Delta_0/\epsilon),
\end{align*}
with 
$B \geq 4 \left[(1 + R)(R + \lambda_2 \bar{R}) + (L + \lambda_2)^2 \right] R^2 (\eta + \sqrt{\eta \lambda})/(\lambda \alpha \epsilon)$ under Assumption \ref{ass:VarianceBound}-(ii) and $B \geq 4 R^2/(\lambda \alpha \epsilon)$ without this additional assumption, then the right hand side of \eqref{eq:SGDGeneralBound} can be bounded as 
$\textstyle \frac{1}{N}\EE[ \scrF^N(\mu_k^{(N)}) ]- \scrF(\mu^*)   =
\epsilon + \frac{4  C_{\lambda}}{ \lambda \alpha N}.
$
Hence we achieve $\epsilon + O(1/N)$ error with iteration complexity:  
\begin{align}\label{eq:SGDkBound}
k = O\left(  \frac{\bar{L}^2}{\alpha \epsilon}   +  \frac{\bar{L}}{\sqrt{\lambda \alpha \epsilon}}   \right) \frac{1}{\lambda \alpha} \log(\epsilon^{-1}).
\end{align}
If we neglect $O(1/\sqrt{\lambda \alpha \epsilon})$ as a second order term, then the above can be simplified as $O\big(\frac{\bar{L}^2}{ \lambda \alpha^2} \frac{\log(\epsilon^{-1})}{\epsilon}\big)$.

Noticeably, the mini-batch size $B$ can be significantly reduced under the additional smoothness condition Assumption \ref{ass:VarianceBound}-(ii) 
(indeed, we have $O(\eta + \sqrt{\eta \lambda})$ factor reduction). 
Recall that when the objective is an empirical risk, the gradient complexity per iteration required by F-MFLD is $O(n)$. 
Hence, the total complexity of F-MFLD is $O(nk)$ where $k$ is given in \Eqref{eq:SGDkBound}. 
Comparing this with SGD-MFLD with mini-batch size $B$, we see that SGD-MFLD has better total computational complexity ($Bk$) when $n > B$.
In particular, if $B = \Omega((\eta + \sqrt{\eta \lambda})/(\lambda \alpha \epsilon)) \geq 
\Omega(\lambda^{-1} + \sqrt{(\epsilon \lambda \alpha)^{-1}})$ which yields the objective value of order $O(\epsilon)$, 
SGD-MFLD achieves $O(n/B)=O(n(\lambda 
\wedge  \sqrt{\epsilon \lambda \alpha}))$ times smaller total complexity. 
For example, when $\lambda = \alpha \epsilon = 1/\sqrt{n}$, a 
$O(\sqrt{n})$-factor reduction of total complexity can be achieved by SGD-MFLD, which is a significant improvement. 
\subsection{SVRG-MFLD}

Now we present the convergence rate of SVRG-MFLD in the fixed step size setting where $\eta_k = \eta~(\forall k)$. 
Instead of Assumption \ref{ass:VarianceBound}, we introduce the following two assumptions. 
\begin{Assumption}\label{ass:GradientBoundforSVRG} ~
\begin{itemize}[topsep=0mm,itemsep=0.5mm] 
\item[(i)] For any $i \in [n]$, it holds that $\left\|\nabla \frac{\delta \ell_i(\mu)}{\delta\mu}(x) - \nabla \frac{\delta \ell_i(\mu')}{\delta\mu}(x')
\right\| \leq L(W_2(\mu,\mu') + \|x-x'\|) $ for any $\mu,\mu' \in \calP_2$ and $x,x' \in \Real^d$, and 
$\sup_{x \in \Real^d}\|\nabla \frac{\delta \ell_i(\mu)}{\delta \mu}(x)\| \leq R$ for any $\mu \in \calP$. 
\item[(ii)] 
Additionally, we have $\sup_{x }\|\nabla_x \nabla_x^\top
\frac{\delta \ell_i(\mu)}{\delta \mu}(x)\|_{\mathrm{op}}, 
\sup_{x,x'}\|\nabla_x \nabla_{x'}^\top \frac{\delta^2 \ell_i(\mu)}{\delta^2 \mu}(x,x')\|_{\mathrm{op}} \leq R$. 
\end{itemize}
\end{Assumption}
Here again, point (i) is required to bound $\sigma^2_{v,k}$ and point (ii) yields Assumption \ref{ass:SecondOrderSmooth}.  
We have the following computational complexity bound for SVRG-MFLD. 
\begin{Theorem}\label{thm:SVRGConvergence}
Suppose that $\lambda \alpha \eta \leq 1/4$ and $\eta \leq \lambda_1/(4\lambda_2)$.
Let $\Xi = \frac{n-B}{B(n-1)}$.
Then, under Assumptions \ref{ass:Convexity}, \ref{ass:BoundSmoothness}, \ref{ass:LSI} and \ref{ass:GradientBoundforSVRG}, we have 
the same error bound as \Eqref{eq:SGDGeneralBound} with different $\bar{\Upsilon}$: 
$$\bar{\Upsilon} = 4 \eta  \delta_\eta  + 
\left(1 \! + \!  \sqrt{\tfrac{\lambda}{\eta} }\right)
\left\{\left(\!  R \! +\!  \lambda_2 \bar{R}\right) \eta^2
\sqrt{C_1 \Xi L^2 m  (\eta^2 + \eta \lambda) R^2 \Xi} + 
\left[\left(\!  R \! +\!  \lambda_2 \bar{R}\right)R \eta^3 \! + \! (L \! +\!  \lambda_2)^2 \eta^2 \right]
\right\}, 
$$
under Assumption \ref{ass:GradientBoundforSVRG}-(ii), 
and $\bar{\Upsilon} = C_1\Xi L^2 m \eta^2 (\eta + \lambda)$ without Assumption \ref{ass:GradientBoundforSVRG}-(ii).
\end{Theorem}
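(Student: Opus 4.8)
The plan is to treat the SVRG update as an instance of the general discretized dynamics, so that the argument reduces to three steps: (a) checking that Theorem~\ref{thm:GeneralOneStepUpdate} applies, (b) computing the two variance proxies $\sigma_{v,k}^2$ and $\tilde{\sigma}_{v,k}^2$ that enter $\Upsilon_k$, and (c) telescoping exactly as in the passage from Theorem~\ref{thm:GeneralOneStepUpdate} to Theorem~\ref{thm:ConvSGDMFLD:FixedStep}, so that only $\bar{\Upsilon}$ differs. For (a), conditioned on the history $\scrX_{0:k}$ the anchor $\dot{\scrX}$ and the full-batch correction $\nabla\frac{\delta U(\mu_{\dot{\scrX}})}{\delta\mu}(\dot{X}^i)$ are fixed, and averaging the two minibatch sums over the uniform draw of $I_k$ returns the exact full gradient $\nabla\frac{\delta U(\mu_{\scrX_k})}{\delta\mu}(X_k^i) + \nabla r(X_k^i) = D_m^i F(\scrX_k)$; thus $\EE_{\omega_k}[v_k^i \mid \scrX_{0:k}] = D_m^iF(\scrX_k)$ and the unbiasedness hypothesis holds. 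Assumptions~\ref{ass:Convexity}, \ref{ass:BoundSmoothness}, \ref{ass:LSI} are in force, and Assumption~\ref{ass:GradientBoundforSVRG}-(ii) (bounded second variations) supplies the smoothness demanded by Assumption~\ref{ass:SecondOrderSmooth}, so the one-step inequality is available in both its basic and refined forms.

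For (b) I would first bound $\sigma_{v,k}^2$. As $I_k$ is a size-$B$ subset of $[n]$ drawn without replacement, the conditional variance of the minibatch average is the finite-population factor $\Xi = \frac{n-B}{B(n-1)}$ times the empirical second moment of the per-index summands. After the control-variate cancellation the $j$-th summand is the gradient difference $\nabla\frac{\delta\ell_j(\mu_{\scrX_k})}{\delta\mu}(X_k^i) - \nabla\frac{\delta\ell_j(\mu_{\dot{\scrX}})}{\delta\mu}(\dot{X}^i)$, which Assumption~\ref{ass:GradientBoundforSVRG}-(i) bounds by $L(W_2(\mu_{\scrX_k},\mu_{\dot{\scrX}}) + \|X_k^i - \dot{X}^i\|)$; hence $\sigma_{v,k}^2 \le \Xi L^2\,\EE[(W_2(\mu_{\scrX_k},\mu_{\dot{\scrX}}) + \|X_k^i - \dot{X}^i\|)^2]$. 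Using $W_2^2(\mu_{\scrX_k},\mu_{\dot{\scrX}}) \le \frac1N\sum_i\|X_k^i - \dot{X}^i\|^2$ reduces both contributions to the mean-square displacement from the anchor. Since $\dot{\scrX}$ is refreshed every $m$ steps, the iterate is at most $m$ Langevin steps from $\dot{\scrX}$; expanding $X_k^i - \dot{X}^i = \sum_l(-\eta v_l^i + \sqrt{2\lambda\eta}\,\xi_l^i)$, controlling the drift via the gradient bound together with the uniform second-moment bound $\bar{R}^2$ and the diffusion via the independent Gaussian increments, yields a displacement of order $C_1 m(\eta^2+\eta\lambda)$. This gives $\sigma_{v,k}^2 \le C_1\Xi L^2 m(\eta^2+\eta\lambda)$ uniformly in $k$, hence $\bar{\Upsilon} = \eta\,\sigma_{v,k}^2 = C_1\Xi L^2 m\eta^2(\eta+\lambda)$ in the basic case.

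For the refined bound I also evaluate $\tilde{\sigma}_{v,k}^2$, the variance of the Jacobian fluctuation. Differentiating $v_k^i$ in $(X_k^j)_j$ annihilates the anchor terms, so $\nabla_{X_k^j}v_k^i - \nabla_{X_k^j}D_m^iF(\scrX_k)$ is again a without-replacement minibatch fluctuation, now of the second-variation blocks $\nabla_{X_k^j}\nabla\frac{\delta\ell_l(\mu_{\scrX_k})}{\delta\mu}(X_k^i)$: the diagonal $i=j$ block is $O(R)$ and, because $\mu_{\scrX_k}$ depends on each $X_k^j$ only at strength $1/N$, the off-diagonal block is $O(R/N)$, matching the $\frac{1+\delta_{i,j}N^2}{N^2}$ profile of Assumption~\ref{ass:SecondOrderSmooth}. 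Assumption~\ref{ass:GradientBoundforSVRG}-(ii) bounds both block types in operator norm by $R$, so $\tilde{\sigma}_{v,k}^2 \le \Xi R^2$, i.e.\ $\tilde{\sigma}_{v,k} = R\sqrt{\Xi}$. Substituting $\sigma_{v,k}$ and $\tilde{\sigma}_{v,k}$ into the refined $\Upsilon_k$ of Theorem~\ref{thm:GeneralOneStepUpdate} turns the cross term $\eta^2\tilde{\sigma}_{v,k}\sigma_{v,k}$ into $\eta^2\sqrt{C_1\Xi L^2 m(\eta^2+\eta\lambda)R^2\Xi}$ and reproduces the stated $\bar{\Upsilon}$. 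For (c), telescoping the one-step inequality over $k$ at fixed $\eta$ and summing the geometric factor $\sum_{j\ge0}e^{-\lambda\alpha\eta j/2} = O((\lambda\alpha\eta)^{-1})$ converts $\eta(\delta_\eta + C_\lambda/N)$ and $\Upsilon_k$ into the $\frac{4}{\lambda\alpha}\delta_\eta$, $\frac{4C_\lambda}{\lambda\alpha N}$, and $\frac{4}{\lambda\alpha\eta}\bar{\Upsilon}$ terms of \Eqref{eq:SGDGeneralBound}.

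The main obstacle I expect is the uniform-in-$k$ displacement-from-anchor estimate: it must not accumulate across epochs, which forces a careful treatment of the drift (keeping its contribution at the $m\eta^2$ scale rather than letting triangle-inequality slack degrade it) and requires propagating the uniform second-moment bound $\EE[\|X_k^i\|^2]\le\bar{R}^2$ along the whole trajectory under the step-size restriction $\eta\le\lambda_1/(4\lambda_2)$. The second delicate point, flagged after Theorem~\ref{thm:GeneralOneStepUpdate}, is that the refined term couples $v_k^i$ to the next-step law $\mu_{k+1}^{(N)}$ through both the minibatch randomness $\omega_k$ and the injected Gaussian noise; the value of identifying $\tilde{\sigma}_{v,k}^2 = \Xi R^2$ is exactly that it lets this correlation be discharged by the general one-step machinery rather than re-derived for SVRG.
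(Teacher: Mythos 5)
Your proposal is correct and follows essentially the same route as the paper: it reduces the claim to Theorem~\ref{thm:GeneralOneStepUpdate}, bounds $\sigma_{v,k}^2$ by the without-replacement variance factor $\Xi$ times the Lipschitz-controlled displacement from the anchor (accumulated over at most $m$ steps via the uniform moment bound, giving $C_1\Xi L^2 m(\eta^2+\eta\lambda)$), bounds $\tilde{\sigma}_{v,k}^2\le \Xi R^2$ from Assumption~\ref{ass:GradientBoundforSVRG}-(ii), and telescopes with the Gronwall lemma exactly as in the SGD case. The only cosmetic difference is that you spell out the unbiasedness check and the Jacobian-fluctuation argument that the paper dispatches with a one-line reference to the SGD analysis.
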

The proof is given in Appendix \ref{sec:ProofSVRGMFLD}. 
Therefore, to achieve $\frac{1}{N}\EE[ \scrF^N(\mu_k^{(N)}) ]- \scrF(\mu^*)  \leq O(\epsilon) + \frac{4C_\lambda}{\lambda \alpha N}$ for a given $\epsilon > 0$, 
it suffices to set 
\begin{align*}
& 
\eta = \frac{\alpha  \epsilon}{40 \bar{L}^2 C_1}  \wedge \frac{\sqrt{\lambda \alpha \epsilon }}{40\bar{L} \sqrt{C_1}},~~ 
k = \frac{2 \log(2\Delta_0/\epsilon)}{\lambda \alpha \eta} = O \left(\frac{\bar{L}^2}{\alpha \epsilon} + \frac{\bar{L}}{\sqrt{\lambda \alpha \epsilon}}\right) \frac{ \log(\epsilon^{-1})}{ (\lambda \alpha)}, 
\end{align*}
with 
$B \geq \left[ 
\sqrt{m}\frac{(\eta + \sqrt{\eta/\lambda})^2}{\eta \lambda} \vee m \frac{(\eta + \sqrt{\eta/\lambda})^{3}}{\eta \lambda} \right] \wedge n$. 
In this setting, the total gradient complexity can be bounded as 
\begin{align*}
 B k \! + \!  \frac{n k}{m} \! +\!  n
 \lesssim   
\max\left\{n^{\frac{1}{3}}\left( 1 + \sqrt{\frac{\eta}{\lambda}}\right)^{\frac{4}{3}} , 
\sqrt{n} (\eta \lambda)^{\frac{1}{4}} \left( 1 + \sqrt{\frac{\eta}{\lambda}}\right)^{\frac{3}{2}}\right\}
\frac{1}{\eta}\frac{ \log(\epsilon^{-1})}{\lambda \alpha} + n,
\end{align*}
where $m = \Omega(n/B)
= \Omega([n^{2/3}(1 + \sqrt{\eta/\lambda})^{-4/3} \wedge 
\sqrt{n}(1 + \sqrt{\eta/\lambda})^{-3/2} (\sqrt{\eta \lambda})^{-1/2})] \vee 1)$
and 
$B = O \left( \left[ 
n^{\frac{1}{3}}\left( 1 + \sqrt{\frac{\eta}{\lambda}}\right)^{\frac{4}{3}} \vee 
\sqrt{n} (\eta \lambda)^{\frac{1}{4}} \left( 1 + \sqrt{\frac{\eta}{\lambda}}\right)^{\frac{3}{2}} \right] \wedge n \right)$. 
When $\epsilon$ is small, the first term in the right hand side becomes the main term
which is $\max\{n^{1/3},\sqrt{n}(\alpha \epsilon \lambda)^{1/4}\}\frac{\log(\epsilon^{-1})}{\lambda \alpha^2 \epsilon}$. 
Therefore, comparing with the full batch gradient method (F-MFLD), 
SVRG-MFLD achieves at least 
$\min\left\{n^{\frac{2}{3}},
\sqrt{n} (\alpha \epsilon \lambda)^{-\frac{1}{4}}\right\}$
times better total computational complexity when $\eta \leq \lambda$. 
Note that even without the additional Assumption \ref{ass:GradientBoundforSVRG}-(ii), we still obtain a $\sqrt{n}$-factor improvement (see Appendix \ref{sec:ProofSVRGMFLD}). 
This indicates that variance reduction is indeed effective to improve the computational complexity, especially in a large sample size setting. 

Finally, we compare the convergence rate in Theorem~\ref{thm:SVRGConvergence} (setting $F$ to be linear) against prior analysis of standard gradient Langevin dynamics (LD) under LSI. To our knowledge, the current best convergence rate of LD in terms of KL divergence was given in \citet{kinoshita2022improved}, which yields a $O\left(\left(n + \frac{\sqrt{n}}{\epsilon}\right)\frac{\log(\epsilon^{-1})}{(\lambda\alpha)^2} \right)$ iteration complexity to achieve $\epsilon$ error. 
This corresponds to our analysis under Assumption \ref{ass:GradientBoundforSVRG}-(i) (without (ii)) (see Appendix \ref{sec:ProofSVRGMFLD}). 
Note that in this setting, our bound recovers their rate even though our analysis is generalized to nonlinear mean-field functionals (the $O(\lambda)$-factor discrepancy is due to the difference in the objective -- their bound considers the KL divergence while our objective corresponds to $\lambda$ times the KL divergence). 
Furthermore, our analysis gives an even faster convergence rate under an additional mild assumption (Assumption \ref{ass:GradientBoundforSVRG}-(ii)).

\section{Conclusion}
We gave a unified theoretical framework to bound the optimization error of the {\it single-loop} mean-field Langevin dynamics (MFLD) that is applicable to the finite-particle, discrete-time, and stochastic gradient algorithm. 
Our analysis is general enough to cover several important learning problems such as the optimization of mean-field neural networks, density estimation via MMD minimization, and variational inference via KSD minimization.  
We considered three versions of the algorithms (F-MFLD, SGD-MFLD, SGLD-MFLD); and despite the fact that our analysis deals with a more general setting (mean-field interactions), we are able to recover and even improve existing convergence guarantees when specialized to the standard gradient Langevin dynamics.



\section*{Acknowledgment}
TS was partially supported by JSPS KAKENHI (20H00576) and JST CREST.
DW was partially supported by a Borealis AI Fellowship. 
AN was partially supported by JSPS Kakenhi (22H03650).

{
\fontsize{10}{11}\selectfont     

\bibliography{ref,main}  

\begin{thebibliography}{48}
\providecommand{\natexlab}[1]{#1}
\providecommand{\url}[1]{\texttt{#1}}
\expandafter\ifx\csname urlstyle\endcsname\relax
  \providecommand{\doi}[1]{doi: #1}\else
  \providecommand{\doi}{doi: \begingroup \urlstyle{rm}\Url}\fi

\bibitem[Abbe et~al.(2022)Abbe, Adsera, and Misiakiewicz]{abbe2022merged}
E.~Abbe, E.~B. Adsera, and T.~Misiakiewicz.
\newblock The merged-staircase property: a necessary and nearly sufficient
  condition for sgd learning of sparse functions on two-layer neural networks.
\newblock In \emph{Conference on Learning Theory}, pages 4782--4887. PMLR,
  2022.

\bibitem[Arbel et~al.(2019)Arbel, Korba, SALIM, and
  Gretton]{NEURIPS2019_944a5ae3}
M.~Arbel, A.~Korba, A.~SALIM, and A.~Gretton.
\newblock Maximum mean discrepancy gradient flow.
\newblock In H.~Wallach, H.~Larochelle, A.~Beygelzimer, F.~d\textquotesingle
  Alch\'{e}-Buc, E.~Fox, and R.~Garnett, editors, \emph{Advances in Neural
  Information Processing Systems}, volume~32. Curran Associates, Inc., 2019.
\newblock URL
  \url{https://proceedings.neurips.cc/paper/2019/file/944a5ae3483ed5c1e10bbccb7942a279-Paper.pdf}.

\bibitem[Bakry and {\'E}mery(1985{\natexlab{a}})]{10.1007/BFb0075847}
D.~Bakry and M.~{\'E}mery.
\newblock Diffusions hypercontractives.
\newblock In J.~Az{\'e}ma and M.~Yor, editors, \emph{S{\'e}minaire de
  Probabilit{\'e}s XIX 1983/84}, pages 177--206, Berlin, Heidelberg,
  1985{\natexlab{a}}. Springer Berlin Heidelberg.
\newblock ISBN 978-3-540-39397-9.

\bibitem[Bakry and {\'E}mery(1985{\natexlab{b}})]{bakry1985diffusions}
D.~Bakry and M.~{\'E}mery.
\newblock Diffusions hypercontractives in sem. probab. xix lnm 1123,
  1985{\natexlab{b}}.

\bibitem[Bakry et~al.(2014)Bakry, Gentil, Ledoux, et~al.]{bakry2014analysis}
D.~Bakry, I.~Gentil, M.~Ledoux, et~al.
\newblock \emph{Analysis and geometry of Markov diffusion operators}, volume
  103.
\newblock Springer, 2014.

\bibitem[Bardet et~al.(2018)Bardet, Gozlan, Malrieu, and
  Zitt]{10.3150/16-BEJ879}
J.-B. Bardet, N.~Gozlan, F.~Malrieu, and P.-A. Zitt.
\newblock {Functional inequalities for Gaussian convolutions of compactly
  supported measures: Explicit bounds and dimension dependence}.
\newblock \emph{Bernoulli}, 24\penalty0 (1):\penalty0 333 -- 353, 2018.
\newblock \doi{10.3150/16-BEJ879}.
\newblock URL \url{https://doi.org/10.3150/16-BEJ879}.

\bibitem[Cattiaux and Guillin(2014)]{Cattiaux2014}
P.~Cattiaux and A.~Guillin.
\newblock \emph{Semi Log-Concave Markov Diffusions}, pages 231--292.
\newblock Springer International Publishing, Cham, 2014.

\bibitem[Cattiaux and Guillin(2022)]{10.3150/21-BEJ1419}
P.~Cattiaux and A.~Guillin.
\newblock {Functional inequalities for perturbed measures with applications to
  log-concave measures and to some Bayesian problems}.
\newblock \emph{Bernoulli}, 28\penalty0 (4):\penalty0 2294 -- 2321, 2022.
\newblock \doi{10.3150/21-BEJ1419}.
\newblock URL \url{https://doi.org/10.3150/21-BEJ1419}.

\bibitem[Chen et~al.(2022)Chen, Ren, and Wang]{chen2022uniform}
F.~Chen, Z.~Ren, and S.~Wang.
\newblock Uniform-in-time propagation of chaos for mean field langevin
  dynamics.
\newblock \emph{arXiv preprint arXiv:2212.03050v1}, 2022.

\bibitem[Chen et~al.(2020)Chen, Rotskoff, Bruna, and
  Vanden-Eijnden]{chen2020dynamical}
Z.~Chen, G.~M. Rotskoff, J.~Bruna, and E.~Vanden-Eijnden.
\newblock A dynamical central limit theorem for shallow neural networks.
\newblock \emph{arXiv preprint arXiv:2008.09623}, 2020.

\bibitem[Chewi et~al.(2021)Chewi, Erdogdu, Li, Shen, and
  Zhang]{chewi2021analysis}
S.~Chewi, M.~A. Erdogdu, M.~B. Li, R.~Shen, and M.~Zhang.
\newblock Analysis of langevin monte carlo from poincar$\backslash$'e to
  log-sobolev.
\newblock \emph{arXiv preprint arXiv:2112.12662}, 2021.

\bibitem[Chizat(2022)]{chizat2022meanfield}
L.~Chizat.
\newblock Mean-field langevin dynamics : Exponential convergence and annealing.
\newblock \emph{Transactions on Machine Learning Research}, 2022.
\newblock URL \url{https://openreview.net/forum?id=BDqzLH1gEm}.

\bibitem[Chizat and Bach(2018)]{chizat2018global}
L.~Chizat and F.~Bach.
\newblock On the global convergence of gradient descent for over-parameterized
  models using optimal transport.
\newblock In \emph{Advances in Neural Information Processing Systems 31}, pages
  3040--3050, 2018.

\bibitem[Chwialkowski et~al.(2016)Chwialkowski, Strathmann, and
  Gretton]{pmlr-v48-chwialkowski16}
K.~Chwialkowski, H.~Strathmann, and A.~Gretton.
\newblock A kernel test of goodness of fit.
\newblock In M.~F. Balcan and K.~Q. Weinberger, editors, \emph{Proceedings of
  The 33rd International Conference on Machine Learning}, volume~48 of
  \emph{Proceedings of Machine Learning Research}, pages 2606--2615, New York,
  New York, USA, 20--22 Jun 2016. PMLR.
\newblock URL \url{https://proceedings.mlr.press/v48/chwialkowski16.html}.

\bibitem[Dalalyan(2014)]{dalalyan2014theoretical}
A.~S. Dalalyan.
\newblock Theoretical guarantees for approximate sampling from smooth and
  log-concave densities.
\newblock \emph{arXiv preprint arXiv:1412.7392}, 2014.

\bibitem[De~Bortoli et~al.(2020)De~Bortoli, Durmus, Fontaine, and
  Simsekli]{de2020quantitative}
V.~De~Bortoli, A.~Durmus, X.~Fontaine, and U.~Simsekli.
\newblock Quantitative propagation of chaos for sgd in wide neural networks.
\newblock \emph{Advances in Neural Information Processing Systems},
  33:\penalty0 278--288, 2020.

\bibitem[Delarue and Tse(2021)]{delarue2021uniform}
F.~Delarue and A.~Tse.
\newblock Uniform in time weak propagation of chaos on the torus.
\newblock \emph{arXiv preprint arXiv:2104.14973}, 2021.

\bibitem[Dubey et~al.(2016)Dubey, J.~Reddi, Williamson, Poczos, Smola, and
  Xing]{NIPS2016_9b698eb3}
K.~A. Dubey, S.~J.~Reddi, S.~A. Williamson, B.~Poczos, A.~J. Smola, and E.~P.
  Xing.
\newblock Variance reduction in stochastic gradient langevin dynamics.
\newblock In D.~Lee, M.~Sugiyama, U.~Luxburg, I.~Guyon, and R.~Garnett,
  editors, \emph{Advances in Neural Information Processing Systems}, volume~29.
  Curran Associates, Inc., 2016.
\newblock URL
  \url{https://proceedings.neurips.cc/paper/2016/file/9b698eb3105bd82528f23d0c92dedfc0-Paper.pdf}.

\bibitem[Gretton et~al.(2006)Gretton, Borgwardt, Rasch, Sch\"{o}lkopf, and
  Smola]{NIPS2006_e9fb2eda}
A.~Gretton, K.~Borgwardt, M.~Rasch, B.~Sch\"{o}lkopf, and A.~Smola.
\newblock A kernel method for the two-sample-problem.
\newblock In B.~Sch\"{o}lkopf, J.~Platt, and T.~Hoffman, editors,
  \emph{Advances in Neural Information Processing Systems}, volume~19. MIT
  Press, 2006.
\newblock URL
  \url{https://proceedings.neurips.cc/paper/2006/file/e9fb2eda3d9c55a0d89c98d6c54b5b3e-Paper.pdf}.

\bibitem[Holley and Stroock(1987)]{holley1987logarithmic}
R.~Holley and D.~Stroock.
\newblock Logarithmic sobolev inequalities and stochastic ising models.
\newblock \emph{Journal of statistical physics}, 46\penalty0 (5-6):\penalty0
  1159--1194, 1987.

\bibitem[Hu et~al.(2019)Hu, Ren, Siska, and Szpruch]{hu2019mean}
K.~Hu, Z.~Ren, D.~Siska, and L.~Szpruch.
\newblock Mean-field langevin dynamics and energy landscape of neural networks.
\newblock \emph{arXiv preprint arXiv:1905.07769}, 2019.

\bibitem[Huang et~al.(2021)Huang, Ren, and Wang]{huang2021distribution}
X.~Huang, P.~Ren, and F.-Y. Wang.
\newblock Distribution dependent stochastic differential equations.
\newblock \emph{Frontiers of Mathematics in China}, 16\penalty0 (2):\penalty0
  257--301, 2021.

\bibitem[Javanmard et~al.(2019)Javanmard, Mondelli, and
  Montanari]{javanmard2019analysis}
A.~Javanmard, M.~Mondelli, and A.~Montanari.
\newblock Analysis of a two-layer neural network via displacement convexity.
\newblock \emph{arXiv preprint arXiv:1901.01375}, 2019.

\bibitem[Johnson and Zhang(2013)]{NIPS2013_ac1dd209}
R.~Johnson and T.~Zhang.
\newblock Accelerating stochastic gradient descent using predictive variance
  reduction.
\newblock In C.~Burges, L.~Bottou, M.~Welling, Z.~Ghahramani, and
  K.~Weinberger, editors, \emph{Advances in Neural Information Processing
  Systems}, volume~26. Curran Associates, Inc., 2013.
\newblock URL
  \url{https://proceedings.neurips.cc/paper/2013/file/ac1dd209cbcc5e5d1c6e28598e8cbbe8-Paper.pdf}.

\bibitem[Kac(1956)]{kac1956foundations}
M.~Kac.
\newblock Foundations of kinetic theory.
\newblock In \emph{Proceedings of The third Berkeley symposium on mathematical
  statistics and probability}, volume~3, pages 171--197, 1956.

\bibitem[Kahn and Harris(1951)]{kahn1951estimation}
H.~Kahn and T.~E. Harris.
\newblock Estimation of particle transmission by random sampling.
\newblock \emph{National Bureau of Standards applied mathematics series},
  12:\penalty0 27--30, 1951.

\bibitem[Kinoshita and Suzuki(2022)]{kinoshita2022improved}
Y.~Kinoshita and T.~Suzuki.
\newblock Improved convergence rate of stochastic gradient langevin dynamics
  with variance reduction and its application to optimization.
\newblock In A.~H. Oh, A.~Agarwal, D.~Belgrave, and K.~Cho, editors,
  \emph{Advances in Neural Information Processing Systems}, 2022.
\newblock URL \url{https://openreview.net/forum?id=Sj2z__i1wX-}.

\bibitem[Korba et~al.(2021)Korba, Aubin-Frankowski, Majewski, and
  Ablin]{pmlr-v139-korba21a}
A.~Korba, P.-C. Aubin-Frankowski, S.~Majewski, and P.~Ablin.
\newblock Kernel stein discrepancy descent.
\newblock In M.~Meila and T.~Zhang, editors, \emph{Proceedings of the 38th
  International Conference on Machine Learning}, volume 139 of
  \emph{Proceedings of Machine Learning Research}, pages 5719--5730. PMLR,
  18--24 Jul 2021.
\newblock URL \url{https://proceedings.mlr.press/v139/korba21a.html}.

\bibitem[Lacker(2021)]{lacker2021hierarchies}
D.~Lacker.
\newblock Hierarchies, entropy, and quantitative propagation of chaos for mean
  field diffusions.
\newblock \emph{arXiv preprint arXiv:2105.02983}, 2021.

\bibitem[Lacker and Flem(2022)]{lacker2022sharp}
D.~Lacker and L.~L. Flem.
\newblock Sharp uniform-in-time propagation of chaos.
\newblock \emph{arXiv preprint arXiv:2205.12047}, 2022.

\bibitem[Liu et~al.(2016)Liu, Lee, and Jordan]{pmlr-v48-liub16}
Q.~Liu, J.~Lee, and M.~Jordan.
\newblock A kernelized stein discrepancy for goodness-of-fit tests.
\newblock In M.~F. Balcan and K.~Q. Weinberger, editors, \emph{Proceedings of
  The 33rd International Conference on Machine Learning}, volume~48 of
  \emph{Proceedings of Machine Learning Research}, pages 276--284, New York,
  New York, USA, 20--22 Jun 2016. PMLR.
\newblock URL \url{https://proceedings.mlr.press/v48/liub16.html}.

\bibitem[Ma et~al.(2015)Ma, Chen, and Fox]{ma2015complete}
Y.-A. Ma, T.~Chen, and E.~Fox.
\newblock A complete recipe for stochastic gradient mcmc.
\newblock \emph{Advances in neural information processing systems}, 28, 2015.

\bibitem[McKean(1966)]{mckean1966class}
H.~P. McKean.
\newblock A class of markov processes associated with nonlinear parabolic
  equations.
\newblock \emph{Proceedings of the National Academy of Sciences}, 56\penalty0
  (6):\penalty0 1907--1911, 1966.

\bibitem[Mei et~al.(2018)Mei, Montanari, and Nguyen]{mei2018mean}
S.~Mei, A.~Montanari, and P.-M. Nguyen.
\newblock A mean field view of the landscape of two-layer neural networks.
\newblock \emph{Proceedings of the National Academy of Sciences}, 115\penalty0
  (33):\penalty0 E7665--E7671, 2018.

\bibitem[Mischler(2019)]{Mischeler:Note:2019}
S.~Mischler.
\newblock An introduction to evolution {PDEs}, {Chapter} 0: On the {Gronwall}
  lemma, 2019.
\newblock URL
  \url{https://www.ceremade.dauphine.fr/~mischler/Enseignements/M2evol2018/chap0.pdf}.

\bibitem[Nitanda and Suzuki(2017)]{nitanda2017stochastic}
A.~Nitanda and T.~Suzuki.
\newblock Stochastic particle gradient descent for infinite ensembles.
\newblock \emph{arXiv preprint arXiv:1712.05438}, 2017.

\bibitem[Nitanda et~al.(2020)Nitanda, Wu, and Suzuki]{nitanda2020particle}
A.~Nitanda, D.~Wu, and T.~Suzuki.
\newblock Particle dual averaging: Optimization of mean field neural networks
  with global convergence rate analysis, 2020.

\bibitem[Nitanda et~al.(2021)Nitanda, Wu, and Suzuki]{PDA:NeurIPS:2021}
A.~Nitanda, D.~Wu, and T.~Suzuki.
\newblock Particle dual averaging: Optimization of mean field neural network
  with global convergence rate analysis.
\newblock In M.~Ranzato, A.~Beygelzimer, Y.~Dauphin, P.~Liang, and J.~W.
  Vaughan, editors, \emph{Advances in Neural Information Processing Systems},
  volume~34, pages 19608--19621. Curran Associates, Inc., 2021.
\newblock URL
  \url{https://proceedings.neurips.cc/paper/2021/file/a34e1ddbb4d329167f50992ba59fe45a-Paper.pdf}.

\bibitem[Nitanda et~al.(2022)Nitanda, Wu, and Suzuki]{pmlr-v151-nitanda22a}
A.~Nitanda, D.~Wu, and T.~Suzuki.
\newblock Convex analysis of the mean field langevin dynamics.
\newblock In G.~Camps-Valls, F.~J.~R. Ruiz, and I.~Valera, editors,
  \emph{Proceedings of The 25th International Conference on Artificial
  Intelligence and Statistics}, volume 151 of \emph{Proceedings of Machine
  Learning Research}, pages 9741--9757. PMLR, 28--30 Mar 2022.

\bibitem[Oko et~al.(2022)Oko, Suzuki, Nitanda, and Wu]{oko2022particle}
K.~Oko, T.~Suzuki, A.~Nitanda, and D.~Wu.
\newblock Particle stochastic dual coordinate ascent: Exponential convergent
  algorithm for mean field neural network optimization.
\newblock In \emph{International Conference on Learning Representations}, 2022.
\newblock URL \url{https://openreview.net/forum?id=PQQp7AJwz3}.

\bibitem[Otto and Villani(2000)]{otto2000generalization}
F.~Otto and C.~Villani.
\newblock Generalization of an inequality by talagrand and links with the
  logarithmic sobolev inequality.
\newblock \emph{Journal of Functional Analysis}, 173\penalty0 (2):\penalty0
  361--400, 2000.

\bibitem[Rotskoff and Vanden-Eijnden(2018)]{rotskoff2018trainability}
G.~M. Rotskoff and E.~Vanden-Eijnden.
\newblock Trainability and accuracy of neural networks: An interacting particle
  system approach.
\newblock \emph{arXiv preprint arXiv:1805.00915}, 2018.

\bibitem[Sirignano and Spiliopoulos(2020)]{sirignano2020mean}
J.~Sirignano and K.~Spiliopoulos.
\newblock Mean field analysis of neural networks: A central limit theorem.
\newblock \emph{Stochastic Processes and their Applications}, 130\penalty0
  (3):\penalty0 1820--1852, 2020.

\bibitem[Suzuki et~al.(2023)Suzuki, Nitanda, and
  Wu]{anonymous2023uniformintime}
T.~Suzuki, A.~Nitanda, and D.~Wu.
\newblock Uniform-in-time propagation of chaos for the mean field gradient
  langevin dynamics.
\newblock In \emph{Submitted to The Eleventh International Conference on
  Learning Representations}, 2023.
\newblock URL \url{https://openreview.net/forum?id=_JScUk9TBUn}.

\bibitem[Sznitman(1991)]{sznitman1991topics}
A.-S. Sznitman.
\newblock Topics in propagation of chaos.
\newblock In \emph{Ecole d'{\'e}t{\'e} de probabilit{\'e}s de Saint-Flour
  XIX-1989}, pages 165--251. Springer, 1991.

\bibitem[Vempala and Wibisono(2019)]{vempala2019rapid}
S.~Vempala and A.~Wibisono.
\newblock Rapid convergence of the unadjusted langevin algorithm: Isoperimetry
  suffices.
\newblock In \emph{Advances in Neural Information Processing Systems}, pages
  8094--8106, 2019.

\bibitem[Welling and Teh(2011)]{welling2011bayesian}
M.~Welling and Y.~W. Teh.
\newblock Bayesian learning via stochastic gradient langevin dynamics.
\newblock In \emph{Proceedings of the 28th international conference on machine
  learning (ICML-11)}, pages 681--688, 2011.

\bibitem[Zou et~al.(2018)Zou, Xu, and Gu]{zou2018subsampled}
D.~Zou, P.~Xu, and Q.~Gu.
\newblock Subsampled stochastic variance-reduced gradient langevin dynamics.
\newblock In \emph{International Conference on Uncertainty in Artificial
  Intelligence}, 2018.

\end{thebibliography}

}

\newpage
{
\renewcommand{\contentsname}{Table of Contents}
\tableofcontents
}

\newpage
\appendix


\begin{center}
{\Large \bf  -----------------------------~~Appendix~~-----------------------------}
\end{center}

\section{Verification of Assumptions}\label{sec:VerificationOfAssump} 
Assumption \ref{ass:BoundSmoothness} can be satisfied in the following settings for the three examples presented in Section~\ref{sec:applications}.
\begin{itemize}[itemsep=0.8mm,leftmargin=6mm,topsep=0mm]
\item[(i)] \textbf{mean-field neural network.} The neurons $h_x(\cdot)$ and their gradients are bounded (e.g., tanh activation), and 
the first derivative of the loss is Lipschitz continuous (e.g., squared loss, logistic loss).  
That is, there exists $C > 0$ such that 
$\sup_z \sup_x |h_x(z)| \leq C$, 
$\sup_z \sup_x \|\nabla_x h_x(z)\|\leq C$ and
$\|\nabla h(x) - \nabla h(x')\|\leq C\|x - x'\|$ for all $x,x' \in \Real^d$, and $\sup_{y,z} \sup_{\mu \in \calP} |\partial_f \ell(f_\mu(z),y)| \leq C$,  $|\partial_f \ell(f_\mu(z),y) - \partial_f \ell(f_{\mu'}(z),y)| \leq C|f_{\mu}(z) - f_{\mu'}(z)|$ and $|\partial_f^2 \ell(f_{\mu}(z),y)| \leq C$
for all $(y,z)$ and $\mu, \mu'\in \calP$. 
\item[(ii)] \textbf{MMD minimization.} The kernel $k$ is smooth and has light tail, e.g., the Gaussian RBF kernel.  
\item[(iii)] \textbf{KSD minimization.} The kernel $k$ has a light tail such that $\sup_{z,z'} \max\{|W_{\mu^*}(z,z')|,$ $\|\nabla_z W_{\mu^*}(z,z')\|,$ $\|\nabla_z \nabla_z^\top W_{\mu^*}(z,z')\|_{\mathrm{op}}\} \leq C$; for example, $k(z,z') = \exp\big(-\frac{\|z\|^2}{2\sigma_1^2} - \frac{\|z'\|^2}{2\sigma_1^2} - \frac{\|z-z'\|^2}{2\sigma_2^2}\big)$, and $\max\{\|\nabla \log(\mu^*(z))\|,\|\nabla^{\otimes 2} \log(\mu^*(z))\|_{\mathrm{op}},$ $\|\nabla^{\otimes 3} \log(\mu^*(z))\|_{\mathrm{op}}\} \leq C (1 + \|z\|)$.
\end{itemize}

\section{Proof of Theorem \ref{thm:GeneralOneStepUpdate}}\label{sec:GeneralOneStep}
This section gives the proof of Theorem \ref{thm:GeneralOneStepUpdate}.
For notation simplicity, we write $\eta$ to indicate $\eta_k$. 
Recall that for $\scrX = (X^i)_{i=1}^N$, the proximal Gibbs distribution $\mu_{\scrX}$ is defined by 
$$
p_{\scrX'}(X) \propto   
\exp\left( - \frac{1}{\lambda}  \frac{\delta F(\mu_{\scrX'})}{\delta \mu}(X) \right).
$$
We also define another version of the proximal Gibbs distribution corresponding to $\scrF^N$ as 
$$
p^{(N)}(\scrX) \propto \exp\left( - \frac{N}{\lambda} F(\mu_{\scrX})\right).
$$

\subsection{Evaluation of Objective Decrease}

By the same argument as \citet{chen2022uniform}, we have that 
$$
- \frac{1}{\lambda}  \nabla \frac{\delta F(\mu_{\scrX})}{\delta \mu}(X^i) = \nabla \log(p_{\scrX}(X_i)) = \nabla_i \log(p^{(N)}(\scrX))
= - \frac{N}{\lambda}  \nabla_i F(\mu_{\scrX}),
$$
where $\nabla_i$ is the  partial derivative with respect to $X_i$. 
Therefore, we have 
\begin{align}
\nabla_i \frac{\delta \scrF^N(\mu^{(N)})}{\delta \mu}(\scrX)
& =  N \nabla_i F(\mu_{\scrX}) + \lambda \nabla_i \log(\mu^{(N)}(\scrX)) \notag \\
&=  \nabla \frac{\delta F(\mu_{\scrX})}{\delta \mu}(X_i) + 
\lambda \nabla_i \log(\mu^{(N)}(\scrX)) \notag \\
&= -\lambda \nabla_i \log(p^{(N)}(\scrX)) + 
\lambda \nabla_i \log(\mu^{(N)}(\scrX)).
\label{eq:nablaFPxConnect}
\end{align}

Remembering that $\scrX_k = (X_k^i)_{i=1}^N$ is updated by  
$X_{k+1}^i = X_{k}^i - \eta v_k^i + \sqrt{2 \lambda \eta} \xi^i_k$, 
the solutions $X_{k}^i$ and $X_{k+1}^i$ can be interpolated by the following continuous time dynamics:
\begin{align*}
& \widetilde{\scrX}_0 = \scrX_k, \\
& \dd \widetilde{X}_t^i = - v_k^i \dd t+ \sqrt{2 \lambda}  \dd W_t^i,
\end{align*}
for $0 \leq t \leq \eta$. Then, $\widetilde{\scrX}_\eta$ obeys the same distribution as $\scrX_{k+1}$.
Let $\tilde{\mu}_t^{(N)}$ be the law of $\widetilde{\scrX}_t$. The Fokker-Planck equation of the dynamics 
yields that 
$$
\frac{\dd \tilde{\mu}_t^{(N)}}{\dd t}(\scrX|\scrX_{0:k},\omega_{0:k}) = \sum_{i=1}^N \nabla_i \cdot \left( \tilde{\mu}_t^{(N)}(\scrX|\scrX_{0:k},\omega_{0:k}) v_k^i  \right)
+ \lambda \sum_{i=1}^n \Delta_i \tilde{\mu}_t^{(N)}(\scrX|\scrX_{0:k},\omega_{0:k}). 
$$
Hence, by taking expectation with respect to $\scrX_{0:k}$ conditioned by $\tilscrX_t$, it holds that 
\begin{align*}
& \frac{\dd \tilde{\mu}_t^{(N)}}{\dd t}(\scrX)  \\
& = \sum_{i=1}^N  \lambda \nabla_i \cdot \left( \tilde{\mu}_t^{(N)}(\scrX) \left(\nabla_i \log\left(\frac{\tilde{\mu}_t^{(N)}}{p^{(N)}}(\scrX) \right) \right) \right)\\
&~~~~ + \sum_{i=1}^N\nabla_i \cdot \left\{ \tilde{\mu}_t^{(N)}(\scrX) \left( 
\EE_{\scrX_{0:k}|\widetilde{\scrX}_t}\left[ v_k^i(\scrX_{0:k},\omega_k^i) ~\Big|~ \widetilde{\scrX}_t=\scrX, \omega_k \right]
-  \nabla \frac{\delta F(\mu_{\tilscrX_t})}{\delta \mu}(X^i)  \right) \right\},
\end{align*}
where we omitted the influence of $\omega_{0:k}$ to $\tilde{\mu}_t^{(N)}$, which should be written as $\tilde{\mu}_t^{(N)}(\scrX|\omega_{0:k})$ in a more precise manner.  
Combining this and \Eqref{eq:nablaFPxConnect} yields the following decomposition, 
\begin{align*}
& \frac{\dd \scrF^N(\tilde{\mu}_t^{(N)})}{\dd t}  \\
 \leq  & - \underbrace{
\frac{3 \lambda^2}{4}  \sum_{i=1}^N 
\EE_{\scrX \sim \tilde{\mu}_t^{(N)}}\left[ \left\| \nabla_i \log\left(\frac{\tilde{\mu}_t^{(N)}}{p^{(N)}}(\scrX) \right) \right\|^2  \right]}_{=:A} \\
& + \sum_{i=1}^N  \underbrace{\EE_{\widetilde{\scrX}_t,\widetilde{\scrX}_0}\left[
\left\| \nabla_i \frac{\delta F(\tilde{\mu}_0^{(N)})}{\delta \mu}(\widetilde{X}_0^i)
-  \nabla_i \frac{\delta F(\tilde{\mu}_t^{(N)})}{\delta \mu}(\widetilde{X}_t^i) \right\|^2 
  \right]}_{=:B} \\
&  - 
\sum_{i=1}^N  \lambda 
\underbrace{
\EE_{\widetilde{\scrX}_t,\widetilde{\scrX}_0} \left[ 
\left \langle
\nabla_i \log\left(\frac{\tilde{\mu}_t^{(N)}}{p^{(N)}}(\widetilde{\scrX}_t) \right), 
v_k^i(\scrX_{0:k},\omega_k^i) -\nabla \frac{\delta F(\mu_{\widetilde{\scrX}_0})}{\delta \mu}(\widetilde{X}_0^i) 
\right\rangle
  \right] }_{=: C}.
\end{align*}

{\bf Evaluation of term $A$:} 
By the {\it leave-one-out argument} in the proof of Theorem 2.1 of \citet{chen2022uniform}, the first term of the right hand side can be upper bounded by 
\begin{align}
& - \frac{3 \lambda^2}{4N}  \sum_{i=1}^N 
\EE_{\scrX \sim \tilde{\mu}_t^{(N)}}\left[ \left\| \nabla_i \log\left(\frac{\tilde{\mu}_t^{(N)}}{p^{(N)}}(\scrX) \right) \right\|^2  \right]  \notag \\
& \leq - \frac{\lambda^2}{4N}  \sum_{i=1}^N 
\EE_{\scrX \sim \tilde{\mu}_t^{(N)}}\left[ \left\| \nabla_i \log\left(\frac{\tilde{\mu}_t^{(N)}}{p^{(N)}}(\scrX) \right) \right\|^2  \right]  \notag \\
& ~~~
- \frac{\lambda \alpha}{2} \left( \frac{1}{N}\scrF^N(\tilde{\mu}_t^{(N)}) - \scrF(\mu^*) \right) + \frac{C_{\lambda}}{N},
\label{eq:UniformLogSobolevInequ}
\end{align}
with a constant $C_{\lambda}$.
The proof is given in Lemma \ref{lemm:UniformLSI} for completeness, where we see that the inequality holds with 
$C_\lambda = 2 \lambda L \alpha(1 +  2 c_L \bar{R}^2) + 2 \lambda^2 L^2\bar{R}^2$.

{\bf Evaluation of term $B$:}
By Lemma \ref{lemm:DfzeorDfdiffBound}, we have 
\begin{align*}
\EE_{\widetilde{\scrX}_t,\widetilde{\scrX}_0}\left[
\left\| \nabla \frac{\delta F(\tilde{\mu}_0^{(N)})}{\delta \mu}(\widetilde{X}_0^i)
-  \nabla \frac{\delta F(\tilde{\mu}_t^{(N)})}{\delta \mu}(\widetilde{X}_t^i) \right\|^2 
  \right] \leq \delta_{\eta_k},
\end{align*}
for $\delta_{\eta_k}= C_1 
 L^2 (\eta_k^2 +  \lambda \eta_k) =  O(L^2 (\eta_k^2 + \eta_k \lambda))$. 

\subsection{Stochastic Gradient Error (Term $C$)} 


Now we evaluate the final term C. 
First, we derive a bound without Assumption \ref{ass:SecondOrderSmooth}. 
By the Cauchy-Schwarz inequality, we have that 
\begin{align*}
& \lambda \EE_{\widetilde{\scrX}_t,\widetilde{\scrX}_0} \left[ 
\left \langle
\nabla_i \log\left(\frac{\tilde{\mu}_t^{(N)}}{p^{(N)}}(\widetilde{\scrX}_t) \right), 
v_k^i(\scrX_{0:k},\omega_k^i) -\nabla \frac{\delta F(\mu_{\widetilde{\scrX}_0})}{\delta \mu}(\widetilde{X}_0^i) \right\rangle
\right] 
\\
 \leq
& \frac{\lambda^2}{4} \EE_{\widetilde{\scrX}_t,\widetilde{\scrX}_0} \left[ 
\left\|
\nabla_i \log\left(\frac{\tilde{\mu}_t^{(N)}}{p^{(N)}}(\widetilde{\scrX}_t) \right) \right\|^2 \right]
+ 
 \EE_{\widetilde{\scrX}_t,\widetilde{\scrX}_0} \left[  
\left\|v_k^i(\scrX_{0:k},\omega_k^i) -\nabla \frac{\delta F(\mu_{\widetilde{\scrX}_0})}{\delta \mu}(\widetilde{X}_0^i)\right\|^2
\right] \\ \leq
& \frac{\lambda^2}{4} \EE_{\widetilde{\scrX}_t,\widetilde{\scrX}_0} \left[ 
\left\|
\nabla_i \log\left(\frac{\tilde{\mu}_t^{(N)}}{p^{(N)}}(\widetilde{\scrX}_t) \right) \right\|^2 \right]
+  \sigma_{v,k}^2.
\end{align*}

\subsubsection{Analysis under Assumption~\ref{ass:SecondOrderSmooth}}

Next, we derive a tighter result under the additional Assumption \ref{ass:SecondOrderSmooth}. 
We decompose term $C$ as follows:
\begin{align*}
& \EE_{\widetilde{\scrX}_t,\widetilde{\scrX}_0} \left[ 
\left \langle
\nabla_i \log\left(\frac{\tilde{\mu}_t^{(N)}}{p^{(N)}}(\widetilde{\scrX}_t) \right), 
v_k^i(\scrX_{0:k},\omega_k^i) -\nabla \frac{\delta F(\mu_{\widetilde{\scrX}_0})}{\delta \mu}(\widetilde{X}_0^i) \right\rangle
\right]  \\
= &
\EE_{\widetilde{\scrX}_t,\widetilde{\scrX}_0} \left[ 
\left \langle
\nabla_i \log\left(\tilde{\mu}_t^{(N)}(\widetilde{\scrX}_t)\right), 
v_k^i(\scrX_{0:k},\omega_k^i) -\nabla \frac{\delta F(\mu_{\widetilde{\scrX}_0})}{\delta \mu}(\widetilde{X}_0^i) \right\rangle
\right]\\
& -
\EE_{\widetilde{\scrX}_t,\widetilde{\scrX}_0} \left[ 
\left \langle
\nabla_i \log\left(p^{(N)}(\widetilde{\scrX}_t) \right), 
v_k^i(\scrX_{0:k},\omega_k^i) -\nabla \frac{\delta F(\mu_{\widetilde{\scrX}_0})}{\delta \mu}(\widetilde{X}_0^i) \right\rangle
\right].
\end{align*}
Since $\nabla_i \log(\tilde{\mu}_t^{(N)}) 
= \nabla_i \tilde{\mu}_t^{(N)}/\tilde{\mu}_t^{(N)}$, this is equivalent to 
\begin{align}
& \EE_{\widetilde{\scrX}_t,\widetilde{\scrX}_0} \!\!\left[ 
\! \left\langle \! \left(
\tilde{\mu}_t^{(N)}(\widetilde{\scrX}_t)^{-1} \nabla_i \tilde{\mu}_t^{(N)}(\widetilde{\scrX}_t) 
\!-\!\nabla_i\log(p^{(N)})(\widetilde{\scrX}_t) \right)\!,\! 
v_k^i(\scrX_{0:k},\omega_k^i) 
\!-\!
\nabla \frac{\delta F(\mu_{\widetilde{\scrX}_0})}{\delta \mu}(\widetilde{X}_0^i) 
\right\rangle \!\right] \notag \\
=& 
\underbrace{\int \EE_{\widetilde{\scrX}_0|\widetilde{\scrX}_t} \left[ 
\left\langle
\int \nabla_i \tilde{\mu}_t^{(N)}(\widetilde{\scrX}_t|\scrX'_{0:k}) \mu^{(N)}(\dd \scrX'_{0:k}), 
v_k^i(\scrX_{0:k},\omega_k^i) -\nabla \frac{\delta F(\mu_{{\scrX}_k})}{\delta \mu}(X_k^i) 
\right\rangle  \right] \dd \widetilde{\scrX}_t}_{\text{$C$-(I)}}  \notag \\
& -\underbrace{\EE_{\widetilde{\scrX}_0,\widetilde{\scrX}_t}\left[ 
 \left\langle \nabla_i\log(p^{(N)})(\widetilde{\scrX}_t) , 
v_k^i(\scrX_{0:k},\omega_k^i) -\nabla \frac{\delta F(\mu_{\widetilde{\scrX}_0})}{\delta \mu}(\widetilde{X}_0^i) 
\right\rangle \right]}_{\text{$C$-(II)}},
\label{eq:CIDecomp}
\end{align}
where $\scrX'_{0:k}$ is an independent copy of $\scrX_{0:k}$. 

{\it Evaluation of $C$-(I):} ~\\
(1) 
The first term ($C$-(I)) of the right hand side of \Eqref{eq:CIDecomp} can be evaluated as 
\begin{align}
& \int \EE_{\widetilde{\scrX}_0|\widetilde{\scrX}_t} \left[ 
\left\langle
\int \nabla_i \tilde{\mu}_t^{(N)}(\widetilde{\scrX}_t|\scrX'_{0:k}) \mu^{(N)}(\dd \scrX'_{0:k}), 
v_k^i(\scrX_{0:k},\omega_k^i) -\nabla \frac{\delta F(\mu_{{\scrX}_k})}{\delta \mu}(X_k^i) 
\right\rangle  \right] \dd \widetilde{\scrX}_t \notag \\
=
& \int \!\!\EE_{\widetilde{\scrX}_0|\widetilde{\scrX}_t} \!\left[ 
\!\left\langle
\!\int \!\nabla_i \tilde{\mu}_t^{(N)}(\widetilde{\scrX}_t|\scrX'_{0:k}) \mu^{(N)}(\dd\scrX'_{0:k}), 
v_k^i((\widetilde{\scrX}_t,\scrX_{0:k-1}),\omega_k^i) 
\!-\!\nabla \frac{\delta F(\mu_{\widetilde{\scrX}_t})}{\delta \mu}(\widetilde{X}_t^{i}) 
\!\right\rangle \! \right] \!\dd \widetilde{\scrX}_t \notag\\
& +
\int \EE_{\widetilde{\scrX}_0|\widetilde{\scrX}_t} \left[ 
\left\langle
\int
 \nabla_i \tilde{\mu}_t^{(N)}(\widetilde{\scrX}_t|\scrX'_{0:k}) \dd  \mu^{(N)}(\scrX'_{0:k}), 
\right. \right. \notag\\
& \left. \left. ~~~~~~~~~~~
v_k^i(\scrX_{0:k},\omega_k^i) -\nabla \frac{\delta F(\mu_{{\scrX}_k})}{\delta \mu}(X_k^{i}) 
- 
v_k^i((\widetilde{\scrX}_t,\scrX_{0:k-1}),\omega_k^i) + \nabla \frac{\delta F(\mu_{\widetilde{\scrX}_t})}{\delta \mu}(\widetilde{X}_t^{i})
\right\rangle  \right] \dd \widetilde{\scrX}_t,
\label{eq:CIDecompFirst}
\end{align}
where $(\widetilde{\scrX}_t,\scrX_{0:k-1}) =(\scrX_{0},\scrX_{1},\dots,\scrX_{k-1},\widetilde{\scrX}_t)$ which is obtained by replacing $\scrX_k$ of $\scrX_{0:k}$ to $\widetilde{\scrX}_t$.  
First, we evaluate the first term in the right hand side of \Eqref{eq:CIDecompFirst}. 
Let 
$$
D_m^i F(\scrX)  := \frac{\delta F (\mu_{\scrX})}{\delta \mu}(X^i),
$$
for $\scrX = (X^i)_{i=1}^N$, and let $D_m F(\scrX) = (D_m^i F(\scrX))_{i=1}^N$. 
Let 
$$
\Delta v^i  := v_k^i(\scrX'_{0:k},\omega_k^i) - D_m^i F(\scrX'_k),
$$
and let $\Delta v = (\Delta v^i)_{i=1}^N$. 
We also define 
$$Z_i = (\sqrt{2\lambda t})^{-1}[\widetilde{X}_t^i - (X_k^{\prime i} - t D_m^i F(\scrX'_k)) ].$$ 
Then $Z_i \sim N(0,1)$ conditioned by $\scrX'_k$. 
We also define 
$$
v_k^i(\hat{\scrX},\omega_k^i) := \EE_{\scrX_{0:k-1}| \hat{\scrX},\omega_{1:k}}[ v_k^i((\hat{\scrX},\scrX_{0:k-1}),\omega_k^i) ],
$$
where the expectation is taken conditioned by $\omega_{1:k}$ but we omit $\omega_{1:k-1}$ from the left hand side for the simplicity of notation. 
Since the density of the conditional distribution $\tilde{\mu}_t^{(N)}(\widetilde{\scrX}_t|\scrX'_{0:k})$ is proportional to 
$\exp\left(- \sum_{i=1}^N \frac{\|\widetilde{X}_t^i - (X_k^{\prime i} - t v_k^i(\scrX'_{0:k},\omega_k^i))\|^2}{2(2\lambda t)}\right)$, it holds that, under second order differentiability of $v_k^i$ and $D_m F$, 
\begin{align}
& \int 
\left\langle
\int - \frac{\widetilde{X}_t^i - (X_k^{\prime i} - t v_k^i(\scrX'_{0:k},\omega_k^i)) }{2\lambda t} \tilde{\mu}_t^{(N)}(\widetilde{\scrX}_t|\scrX'_{0:k}) \mu^{(N)}(\dd\scrX'_{0:k}), \right. \notag \\
& \left. 
~~~~~~~~~~~~~~
v_k^i(\widetilde{\scrX}_t,\omega_k^i) -\nabla \frac{\delta F(\mu_{\widetilde{\scrX}_t})}{\delta \mu}(\widetilde{X}_t^{i}) 
\right\rangle 
\dd \widetilde{\scrX}_t \notag \\
= & \EE_{Z,\scrX_{0:k}^{\prime}} \left[  
\left\langle - \frac{Z_i}{\sqrt{2\lambda t}}, 
v_k^i(\scrX'_k - t D_mF(\scrX'_k)   + \sqrt{2\lambda t} Z - t \Delta v, \omega_k^i) - \right. \right. \notag \\
& ~~~~~~~~~~~~~~\left.  \left.
D_m^i F(\scrX'_k - t D_mF(\scrX'_k)   + \sqrt{2\lambda t} Z - t \Delta v) \right \rangle
\right]   \notag\\
= & \EE_{Z,\scrX_{0:k}^{\prime}} \left[  
\left\langle - \frac{Z_i}{\sqrt{2\lambda t}},  \right.\right. \notag \\
& 
\left.  \left. 
v_k^i(\scrX'_k - t D_mF(\scrX'_k)   + \sqrt{2\lambda t} Z,\omega_k^i ) - 
D_m^iF(\scrX'_k - t D_mF(\scrX'_k)   + \sqrt{2\lambda t} Z)  
\right. \right.  \notag\\
& \left. \left. 
+ v_k^i(\scrX'_k - t v_k(\scrX'_{0:k},\omega_k)  + \sqrt{2\lambda t} Z ,\omega_k^i) 
- 
v_k^i(\scrX'_k - t D_mF(\scrX'_k)   + \sqrt{2\lambda t} Z ,\omega_k^i)  \right. \right.  \notag\\
& \left. \left. 
- D_m^iF(\scrX'_k - t v_k(\scrX'_{0:k},\omega_k)   + \sqrt{2\lambda t} Z)  + 
D_m^iF(\scrX'_k - t D_mF(\scrX'_k)   + \sqrt{2\lambda t} Z)  
\right\rangle
\right].
\label{eq:TheZInequBound}
\end{align}
By Assumption \ref{ass:SecondOrderSmooth}, we can evaluate 
\begin{align*}
& \EE_{\omega_k }[\| v_k^i(\scrX'_k - t v_k(\scrX'_{0:k},\omega_k)  + \sqrt{2\lambda t} Z ,\omega_k^i) 
- 
v_k^i(\scrX'_k - t D_mF(\scrX'_k)   + \sqrt{2\lambda t} Z ,\omega_k^i)   \\
& 
- D_m^iF(\scrX'_k - t v_k(\scrX'_{0:k},\omega_k)   + \sqrt{2\lambda t} Z)  + 
D_m^iF(\scrX'_k - t D_mF(\scrX'_k)   + \sqrt{2\lambda t} Z)  \| ] \\ 
\leq & 
\EE_{\scrX_{0:k-1}^{\prime\prime},\omega_k }[\| v_k^i((\scrX'_k - t v_k(\scrX'_{0:k}, \omega_k)  + \sqrt{2\lambda t} Z,
\scrX''_{0:k-1}
 ),\omega_k^i)  \\
& - 
v_k^i((\scrX'_k - t D_mF(\scrX'_k)   + \sqrt{2\lambda t} Z, \scrX_{0:k-1}^{\prime\prime}) ,\omega_k^i)   \\
& 
- D_m^iF(\scrX'_k - t v_k(\scrX'_{0:k},\omega_k)   + \sqrt{2\lambda t} Z)  + 
D_m^iF(\scrX'_k - t D_mF(\scrX'_k)   + \sqrt{2\lambda t} Z)  \| ] \\ 
\leq & 
\EE_{\scrX_{0:k-1}^{\prime\prime},\omega_k }\left[
 \sum_{j=1}^N  \| \nabla_{j}^\top v_k^i((\scrX'_k - t D_m F(\scrX'_k)   + \sqrt{2\lambda t} Z, \scrX_{0:k-1}'') ,\omega_k^i) \right.  \\
& \left. -
\nabla_{j}^\top D_m^i F((\scrX'_k - t D_mF(\scrX'_k)   + \sqrt{2\lambda t} Z, \scrX_{0:k-1}''))  \|_{\mathrm{op}}
\|t \Delta v_j \|  \right. \\
& \left. ~~~~ + Q t^2 \left(\frac{1}{N} \sum_{j=1}^N \|\Delta v_j \|^2  + \|\Delta v_i \|^2 \right)
\right], 
\end{align*}
where Jensen's inequality is used in the first inequality. 
By using Assumption \ref{ass:SecondOrderSmooth} again, 
the first term in the right hand side can be evaluated as 
\begin{align*}
&\sum_{j=1}^N \EE_{\scrX_{0:k-1}^{\prime\prime},\omega_k }\left[
\frac{a_j}{2}
  \| \nabla_{j}^\top v_k^i((\scrX'_k - t D_m F(\scrX'_k)   + \sqrt{2\lambda t} Z, \scrX_{0:k-1}'') ,\omega_k^i) \right.  \\
& ~~~~~~~~~~~~~~~~~\left. -
\nabla_{j}^\top D_m^i F((\scrX'_k - t D_mF(\scrX'_k)   + \sqrt{2\lambda t} Z, \scrX_{0:k-1}''))  \|^2_{\mathrm{op}}
+ \frac{1}{2a_j} \|t \Delta v_j \|^2 \right] \\
& \leq 
(N-1) \left( \frac{a_{-i}}{2N^2} \tilde{\sigma}_{v,k}^2 + \frac{1}{2a_{-i}} t^2 \sigma_{v,k}^2\right)
+\left( \frac{a_i}{2} \tilde{\sigma}_{v,k}^2 + \frac{1}{2a_i} t^2\sigma_{v,k}^2\right),
\end{align*}
for $a_j > 0~(j=1,\dots,N)$ where $a_j = a_{-i}~(j\neq i)$. 
By taking $a_{-i} = N t  \sigma_{v,k}/\tilde{\sigma}_{v,k}$ and $a_i = t \sigma_{v,k}/\tilde{\sigma}_{v,k}$, the right hand side can be bounded as 
\begin{align}
2 t \tilde{\sigma}_{v,k}\sigma_{v,k}.
\label{eq:SigmaVkBound}
\end{align}

Then, we return to the right hand side of \Eqref{eq:TheZInequBound}. 
First, we note that $$
 \EE_{\omega_k}\! \! \left[  
\left\langle \! \! - \frac{Z_i}{\sqrt{2\lambda t}}, 
v_k^i(\scrX'_k - t D_mF(\scrX'_k)   \! + \! \sqrt{2\lambda t} Z,\omega_k^i )\!  - \! 
D_m^iF(\scrX'_k \! - t D_mF(\scrX'_k)   + \sqrt{2\lambda t} Z) \!  \right\rangle \! \right] = 0,
$$
for fixed $Z$ and $\scrX'_k$. 
Hence, by taking this and \Eqref{eq:SigmaVkBound} into account, the expectation of the right hand side \Eqref{eq:TheZInequBound} with respect to $\omega_k$ can be further bounded as 
\begin{align}\label{eq:TheZInequBoundFinal}
 \sqrt{2 \frac{t}{\lambda}}\tilde{\sigma}_{v,k}\sigma_{v,k} + \sqrt{2} Q t^{3/2}\lambda^{-1/2} \sigma_{v,k}^2.
\end{align}

(2) 
Next, we evaluate the second term of the right hand side of \Eqref{eq:CIDecompFirst}. 
We have
\begin{align}
& \int \EE_{\widetilde{\scrX}_0|\widetilde{\scrX}_t} \left[ 
\left\langle
\int
 \nabla_i \tilde{\mu}_t^{(N)}(\widetilde{\scrX}_t|\scrX'_{0:k}) \dd  \mu^{(N)}(\scrX'_{0:k}), 
\right. \right. \notag \\
& \left. \left. ~~~~~~~~~~~
v_k^i(\scrX_{k},\omega_k^i) -\nabla \frac{\delta F(\mu_{{\scrX}_k})}{\delta \mu}(X_k^{i}) 
- 
v_k^i(\widetilde{\scrX}_t,\omega_k^i) + \nabla \frac{\delta F(\mu_{\widetilde{\scrX}_t})}{\delta \mu}(\widetilde{X}_t^{i})
\right\rangle  \right] \dd \widetilde{\scrX}_t\notag \\
= &  \EE_{\widetilde{\scrX}_0,\widetilde{\scrX}_t} \left[ 
\left\langle
 \nabla_i \log\left(\tilde{\mu}_t^{(N)}(\widetilde{\scrX}_t)/p^{(N)}(\widetilde{\scrX}_t)\right) , 
\right. \right. \notag \\
& \left. \left. ~~~~~~~~~~~
v_k^i(\scrX_{k},\omega_k^i) -\nabla \frac{\delta F(\mu_{{\scrX}_k})}{\delta \mu}(X_k^{i}) 
- 
v_k^i(\widetilde{\scrX}_t,\omega_k^i) + \nabla \frac{\delta F(\mu_{\widetilde{\scrX}_t})}{\delta \mu}(\widetilde{X}_t^{i})
\right\rangle  \right]  \notag\\
 & +  \EE_{\widetilde{\scrX}_0,\widetilde{\scrX}_t} \left[ 
\left\langle
 \nabla_i \log\left(p^{(N)}(\widetilde{\scrX}_t)\right) , 
\right. \right.\notag \\
& \left. \left. ~~~~~~~~~~~
v_k^i(\scrX_{k},\omega_k^i) -\nabla \frac{\delta F(\mu_{{\scrX}_k})}{\delta \mu}(X_k^{i}) 
- 
v_k^i(\widetilde{\scrX}_t,\omega_k^i) + \nabla \frac{\delta F(\mu_{\widetilde{\scrX}_t})}{\delta \mu}(\widetilde{X}_t^{i})
\right\rangle  \right].
\label{eq:CIFirstSecondFirst}
\end{align}
The first term in the right hand side of this inequality (\Eqref{eq:CIFirstSecondFirst}) can be upper bounded by 
\begin{align*}
& \frac{\lambda}{4} \EE\left[\left\| \nabla_i \log\left(\tilde{\mu}_t^{(N)}(\widetilde{\scrX}_t)/p^{(N)}(\widetilde{\scrX}_t\right)\right\|^2\right]  \\
& 
+ 
\frac{1}{\lambda}
\int \EE_{\widetilde{\scrX}_0,\widetilde{\scrX}_t} \left[ 
\left\|
v_k^i(\scrX_{k},\omega_k^i) -\nabla \frac{\delta F(\mu_{{\scrX}_k})}{\delta \mu}(X_k^{i}) 
- 
v_k^i(\widetilde{\scrX}_t,\omega_k^i) + \nabla \frac{\delta F(\mu_{\widetilde{\scrX}_t})}{\delta \mu}(\widetilde{X}_t^{i})
\right\|^2 \right] \\
\leq & \frac{\lambda}{4} \EE\left[\left\| \nabla_i \log\left(\tilde{\mu}_t^{(N)}(\widetilde{\scrX}_t)/p^{(N)}(\widetilde{\scrX}_t\right)\right\|^2\right]  \\
& 
\! \!+ 
\!\frac{1}{\lambda}\!
\int \! \EE_{\widetilde{\scrX}_0,\widetilde{\scrX}_t} \! \left[ 
\left\|
v_k^i((\scrX_{k},\scrX_{0:k-1}),\omega_k^i) 
\!-\!
\nabla \frac{\delta F(\mu_{{\scrX}_k})}{\delta \mu}(X_k^{i}) 
\!- \!
v_k^i((\widetilde{\scrX}_t,\scrX_{0:k-1}),\omega_k^i) 
\!+ \!
\nabla \frac{\delta F(\mu_{\widetilde{\scrX}_t})}{\delta \mu}(\widetilde{X}_t^{i})
\right\|^2 \right],
\end{align*}
where we used Jensen's inequality. 
By Lemma \ref{lemm:DfzeorDfdiffBound} (the same bound applies to $v_k^i$ by the same proof),  
we see that the second term in the right hand side is bounded by
\begin{align}\label{eq:vkFkdeltaetaBound}
\frac{2}{\lambda} \delta_{\eta_k}.
\end{align}

Finally, we evaluate the second term in the right hand side of \Eqref{eq:CIFirstSecondFirst}. 
Note that 
\begin{align*}
& \EE_{\widetilde{\scrX}_0,\widetilde{\scrX}_t} \left[ 
\left\langle
 \nabla_i \log\left(p^{(N)}(\widetilde{\scrX}_t)\right) , 
\right. \right. \\
& \left. \left. ~~~~~~~~~~~
v_k^i(\scrX_{k},\omega_k^i) -\nabla \frac{\delta F(\mu_{{\scrX}_k})}{\delta \mu}(X_k^{i}) 
- 
v_k^i(\widetilde{\scrX}_t,\omega_k^i) + \nabla \frac{\delta F(\mu_{\widetilde{\scrX}_t})}{\delta \mu}(\widetilde{X}_t^{i})
\right\rangle  \right]\\
= & \EE_{\widetilde{\scrX}_0,\widetilde{\scrX}_t} \left[ 
\left\langle
 \nabla_i \log\left(p^{(N)}(\widetilde{\scrX}_t)\right) 
 -
  \nabla_i \log\left(p^{(N)}(\widetilde{\scrX}_0)\right) 
 +
  \nabla_i \log\left(p^{(N)}(\widetilde{\scrX}_0)\right) 
 , 
\right. \right. \\
& \left. \left. ~~~~~~~~~~~
v_k^i(\scrX_{k},\omega_k^i) -\nabla \frac{\delta F(\mu_{{\scrX}_k})}{\delta \mu}(X_k^{i}) 
- 
v_k^i(\widetilde{\scrX}_t,\omega_k^i) + \nabla \frac{\delta F(\mu_{\widetilde{\scrX}_t})}{\delta \mu}(\widetilde{X}_t^{i})
\right\rangle  \right] \\
\leq &
\frac{2\delta_{\eta_k}}{\lambda}  + 
 \EE_{\widetilde{\scrX}_0,\widetilde{\scrX}_t} \left[ 
\left\langle
  \nabla_i \log\left(p^{(N)}(\widetilde{\scrX}_0)\right) 
 , 
\right. \right. \\
& \left. \left. ~~~~~~~~~~~
v_k^i(\scrX_{k},\omega_k^i) -\nabla \frac{\delta F(\mu_{{\scrX}_k})}{\delta \mu}(X_k^{i}) 
- 
v_k^i(\widetilde{\scrX}_t,\omega_k^i) + \nabla \frac{\delta F(\mu_{\widetilde{\scrX}_t})}{\delta \mu}(\widetilde{X}_t^{i})
\right\rangle  \right],
\end{align*}
where we used Lemma \ref{lemm:DfzeorDfdiffBound} with the same argument as \Eqref{eq:vkFkdeltaetaBound} and Young's inequality.
Taking the expectation with respect to $\omega_k^i$, 
it holds that 
\begin{align*}
\EE_{\omega_k^i}\left\{\EE_{\widetilde{\scrX}_0,\widetilde{\scrX}_t} \left[ 
\left\langle
  \nabla_i \log\left(p^{(N)}(\widetilde{\scrX}_0)\right), 
v_k^i(\scrX_{k},\omega_k^i) -\nabla \frac{\delta F(\mu_{{\scrX}_k})}{\delta \mu}(X_k^{i}) 
\right\rangle  \right]
\right\} = 0.
\end{align*}
Hence, it suffices to evaluate the term 
\begin{align*}
& - \EE_{\widetilde{\scrX}_0,\widetilde{\scrX}_t} \left[ 
\left\langle
  \nabla_i \log\left(p^{(N)}(\widetilde{\scrX}_0)\right) 
 , 
v_k^i(\widetilde{\scrX}_t,\omega_k^i) - \nabla \frac{\delta F(\mu_{\widetilde{\scrX}_t})}{\delta \mu}(\widetilde{X}_t^{i})
\right\rangle  \right].
\end{align*}
Here, let $\hat{\scrX}_t = (\hat{X}_t^i)_{i=1}^n$ be the following stochastic process: 
\begin{align*}
& \hat{\scrX}_0 = \scrX_k, \\
& \dd \hat{X}_t^i = - \nabla \frac{\delta F(\mu_{k})}{\delta \mu}(X_k^i) \dd t
+ \sqrt{2 \lambda}  \dd W_t^i,
\end{align*}
for $0 \leq t \leq \eta$, 
where $(W_t^i)_{t}$ is the same Brownian motion as that drives $\widetilde{X}_t^i$.
Then, the term we are interested in can be evaluated as 
\begin{align}
& - \EE_{\widetilde{\scrX}_0,\widetilde{\scrX}_t,\omega_k} \left[ 
\left\langle
  \nabla_i \log\left(p^{(N)}(\widetilde{\scrX}_0)\right) 
 , 
v_k^i(\widetilde{\scrX}_t,\omega_k^i) - \nabla \frac{\delta F(\mu_{\widetilde{\scrX}_t})}{\delta \mu}(\widetilde{X}_t^{i})
\right\rangle  \right] \notag \\
= 
& - \EE_{\widetilde{\scrX}_0,\widetilde{\scrX}_t,\hat{\scrX}_t,\omega_k} \left[ 
\left\langle
  \nabla_i \log\left(p^{(N)}(\widetilde{\scrX}_0)\right), 
v_k^i(\widetilde{\scrX}_t,\omega_k^i) - \nabla \frac{\delta F(\mu_{\widetilde{\scrX}_t})}{\delta \mu}(\widetilde{X}_t^{i}) \right. \right.  \notag \\
& ~~~~\left. \left. 
- 
v_k^i(\hat{\scrX}_t,\omega_k^i) - \nabla \frac{\delta F(\mu_{\hat{\scrX}_t})}{\delta \mu}(\hat{X}_t^{i}) 
+
v_k^i(\hat{\scrX}_t,\omega_k^i) - \nabla \frac{\delta F(\mu_{\hat{\scrX}_t})}{\delta \mu}(\hat{X}_t^{i})
\right\rangle  \right] \notag \\
= & 
 - \EE_{\widetilde{\scrX}_0,\widetilde{\scrX}_t,\hat{\scrX}_t,\omega_k} \left[ 
\left\langle
  \nabla_i \log\left(p^{(N)}(\widetilde{\scrX}_0)\right), 
v_k^i(\widetilde{\scrX}_t,\omega_k^i) - \nabla \frac{\delta F(\mu_{\widetilde{\scrX}_t})}{\delta \mu}(\widetilde{X}_t^{i})  \right.\right. \notag \\ 
& 
\left. \left. 
~~~~~~- 
v_k^i(\hat{\scrX}_t,\omega_k^i) - \nabla \frac{\delta F(\mu_{\hat{\scrX}_t})}{\delta \mu}(\hat{X}_t^{i}) \right\rangle  \right] \notag \\
\leq & 
 \EE_{\widetilde{\scrX}_0,\widetilde{\scrX}_t,\hat{\scrX}_t} \left\{ 
\left\|  \nabla_i \log\left(p^{(N)}(\widetilde{\scrX}_0)\right) \right\|
\cdot \right. \notag  \\
& \left. \EE_{\omega_k}\left[ 
\left\|v_k^i(\widetilde{\scrX}_t,\omega_k^i) - \nabla \frac{\delta F(\mu_{\widetilde{\scrX}_t})}{\delta \mu}(\widetilde{X}_t^{i})
- 
v_k^i(\hat{\scrX}_t,\omega_k^i) - \nabla \frac{\delta F(\mu_{\hat{\scrX}_t})}{\delta \mu}(\hat{X}_t^{i}) \right\|  \right]\right\}. 
\label{eq:nablalogpNvkiBound}
\end{align}
With the same reasoning as in 
\Eqref{eq:TheZInequBoundFinal}, it holds that 
\begin{align*}
& 
\EE_{\omega_k}\left[ 
\left\|v_k^i(\widetilde{\scrX}_t,\omega_k^i) - \nabla \frac{\delta F(\mu_{\widetilde{\scrX}_t})}{\delta \mu}(\hat{X}_t^{i})
- 
v_k^i(\hat{\scrX}_t,\omega_k^i) - \nabla \frac{\delta F(\mu_{\hat{\scrX}_t})}{\delta \mu}(\widetilde{X}_t^{i}) \right\|  \right] \\
& \leq
 2 t \tilde{\sigma}_{v,k}\sigma_{v,k}
 + 2 Q t^{2} \sigma_{v,k}^2.
\end{align*}
Moreover, Assumption \ref{ass:BoundSmoothness} and Lemm \ref{lemm:UniformMomentBound} yield that 
\begin{align*}
\EE\left[ 
\left\|  \nabla_i \log\left(p^{(N)}(\widetilde{\scrX}_0)\right)\right\|\right] 
& \leq \frac{1}{\lambda}
\left(\sup_{\mu \in \calP,x\in \Real^d}\left\|\nabla \frac{\delta U(\mu)}{\delta \mu}(x)\right\| + 
\EE[\|\nabla r(\widetilde{X}_0^i)\|]
\right) \\
& \leq \frac{1}{\lambda}
\left(R + 
\EE[\|\nabla r(\widetilde{X}_0^i) - r(0) + r(0)\|]
\right) \\
& \leq \frac{1}{\lambda}
\left(R + 
\lambda_2 \EE[\|\widetilde{X}_0^i\|]
\right) \\
& \leq \frac{1}{\lambda}
\left(R + 
\lambda_2\sqrt{ \EE[\|\widetilde{X}_0^i\|^2]}
\right)  \\
& \leq \frac{1}{\lambda}
\left(R + 
\lambda_2\bar{R} 
\right),  
\end{align*}
where the second and third inequalities are due to Assumption \ref{ass:BoundSmoothness}
and the last inequality is by Lemma \ref{lemm:UniformMomentBound}.
Then, the right hand side of \Eqref{eq:nablalogpNvkiBound} can be bounded by
\begin{align*}
\frac{1}{\lambda}\left(R + \lambda_2  \bar{R} \right)\left( 2 t \tilde{\sigma}_{v,k}\sigma_{v,k}
 + 2 Q t^{2} \sigma_{v,k}^2 \right).
\end{align*}
where we used Assumption \ref{ass:BoundSmoothness} and Lemm \ref{lemm:UniformMomentBound}. 
Hence, the second term of the right hand side of \Eqref{eq:CIDecompFirst} (which is same as \Eqref{eq:CIFirstSecondFirst}) can be bounded by 
\begin{align*}
\frac{4}{\lambda} \delta_{\eta_k}
+
\frac{1}{\lambda}\left(R + \lambda_2 \bar{R}\right)\left( 2 t \tilde{\sigma}_{v,k}\sigma_{v,k}
 + 2 Q t^{2} \sigma_{v,k}^2 \right).    
\end{align*}
(3) By summing up the results in (1) and (2), we have that 
\begin{align*}
\text{$C$-(I)} \leq 
& 
\frac{\lambda}{4} \EE\left[\left\| \nabla_i \log\left(\tilde{\mu}_t^{(N)}(\widetilde{\scrX}_t)/p^{(N)}(\widetilde{\scrX}_t\right)\right\|^2\right] 
\\ 
& + 4 \delta_{\eta_k}  + 
\left(R + \lambda_2 \bar{R}\right)
\left( 1 + \APPENDMOD{\sqrt{\frac{\lambda}{t}}}\right)
\left( 2 \frac{t}{\lambda} \tilde{\sigma}_{v,k}\sigma_{v,k}
 + 2 \frac{Q t^{2}}{\lambda} \sigma_{v,k}^2 \right).
\end{align*}

{\it Evaluation of $C$-(II):}  

Next, we evaluate the term $C$-(II). 
Here, let $\hat{\scrX}_t = (\hat{X}_t^i)_{i=1}^n$ be the following stochastic process: 
\begin{align*}
& \hat{\scrX}_0 = \scrX_k, \\
& \dd \hat{X}_t^i = - \nabla \frac{\delta F(\mu_{k})}{\delta \mu}(X_k^i) \dd t
+ \sqrt{2 \lambda}  \dd W_t^i,
\end{align*}
for $0 \leq t \leq \eta$, 
where $(W_t^i)_{t}$ is the same Brownian motion as that drives $\widetilde{X}_t^i$.
By definition, we notice that 
$$
\dd (\widetilde{X}_t^i - \hat{X}_t^i) =\left( \nabla \frac{\delta F(\mu_{k})}{\delta \mu}(X_k^i) - v_k^i \right) \dd t,
$$
which yields that 
\begin{align}\label{eq:tildXminushatX}
\widetilde{X}_t^i - \hat{X}_t^i = t\left( \nabla \frac{\delta F(\mu_{k})}{\delta \mu}(X_k^i) - v_k^i \right).
\end{align}
By subtracting and adding the derivative at $\hat{\scrX}_t$ from $\nabla_i \log(p^{(N)}(\widetilde{\scrX}_t))$, we have 
\begin{align*}
\nabla_i \log\left(p^{(N)}(\widetilde{\scrX}_t)\right) 
= &
- \frac{1}{\lambda} \nabla \frac{\delta F(\mu_{\widetilde{\scrX}_t})}{\delta \mu}(\widetilde{X}_t^i) \\
= &
- \frac{1}{\lambda} 
\left( \nabla \frac{\delta F(\mu_{\widetilde{\scrX}_t})}{\delta \mu}(\widetilde{X}_t^i) 
-\nabla \frac{\delta F(\mu_{\hat{\scrX}_t})}{\delta \mu}(\hat{X}_t^i) 
\right)  - \frac{1}{\lambda} \nabla \frac{\delta F(\mu_{\hat{\scrX}_t})}{\delta \mu}(\hat{X}_t^i). 
\end{align*}
By Assumptions \ref{ass:Convexity} and \ref{ass:BoundSmoothness}, the first two terms in the right hand side can be bounded as 
\begin{align}
&  \left\|\nabla \frac{\delta F(\mu_{\widetilde{\scrX}_t})}{\delta \mu}(\widetilde{X}_t^i) 
-\nabla \frac{\delta F(\mu_{\hat{\scrX}_t})}{\delta \mu}(\hat{X}_t^i)\right\| \notag \\
= & 
\left\|\nabla \frac{\delta L(\mu_{\widetilde{\scrX}_t})}{\delta \mu}(\widetilde{X}_t^i) + \nabla r(\widetilde{X}_t^i)
-\nabla \frac{\delta L(\mu_{\hat{\scrX}_t})}{\delta \mu}(\hat{X}_t^i) - \nabla r(\hat{X}_t^i)\right\| \notag \\
\leq & L(W_2(\mu_{\widetilde{\scrX}_t},\mu_{\hat{\scrX}_t}) + \|\widetilde{X}_t^i - \hat{X}_t^i\|) + \lambda_2 \|\widetilde{X}_t^i -  \hat{X}_t^i \| \notag \\
\leq & t (L + \lambda_2) \left(\sqrt{\frac{1}{N}\sum_{i=1}^N \left\| v_k^i - \nabla\frac{\delta F(\mu_k)}{\delta \mu}(X_k^i)\right\|^2 } + \left\| v_k^i - \nabla\frac{\delta F(\mu_k)}{\delta \mu}(X_k^i)\right\|\right),
\label{eq:DeltaFDiffBound}
\end{align}
where the first inequality follows from 
\begin{align}
\|\nabla r(x) - \nabla r(x')\| & =  \left\|\int_{0}^1  [\nabla \nabla^\top r(\theta x + (1-t)x)] (x -x') \dd \theta \right\| \notag \\
& \leq \lambda_2 \int_{0}^1 \|x -x'\| \dd \theta = \lambda_2  \|x -x'\|
\end{align}
by Assumption \ref{ass:Convexity} and we used \Eqref{eq:tildXminushatX} in the last inequality.
Therefore, by noticing $\tilscrX_0 = \scrX_k$, we can obtain the following evaluation:
\begin{align*}
& \sum_{i=1}^N \text{$C$-(II)} \\
= &\sum_{i=1}^N \left| \EE_{v_k^i} \left[ 
\EE_{\widetilde{\scrX}_t,\scrX_{0:k}} \left[ 
\left\langle
\nabla_i \log\left(p^{(N)}( \widetilde{\scrX}_t)\right), 
v_k^i(\scrX_{0:k}) -\nabla \frac{\delta F(\mu_{\widetilde{\scrX}_0})}{\delta \mu}(\widetilde{X}_0^i) 
\right\rangle
  \right] \right] \right| \\
 \leq  & 
\frac{2 (L + \lambda_2)^2}{\lambda}
t 
\sum_{i=1}^N   \EE_{(v_k^i)_{i=1}^N,\scrX_{0:k}} \left[  \left\|v_k^i(\scrX_{0:k}) -\nabla \frac{\delta F(\mu_{\widetilde{\scrX}_0})}{\delta \mu}(\widetilde{X}_0^i) \right\|^2
\right] \\
\leq 
& t N \frac{2 (L + \lambda_2)^2}{\lambda} \sigma_{v,k}^2. 
\end{align*}

{\it Combining the bounds on $C$-(I) and $C$-(II):}
\begin{align*}
 \sum_{i=1}^N \lambda C 
 \leq & \lambda \sum_{i=1}^N (\text{$C$-(I)} + \text{$C$-(II)})  \\
\leq  & 
\frac{\lambda^2}{4} \sum_{i=1}^N \EE\left[\left\| \nabla_i \log\left(\tilde{\mu}_t^{(N)}(\widetilde{\scrX}_t)/p^{(N)}(\widetilde{\scrX}_t\right)\right\|^2\right] \\
& + N \left\{ 4  \delta_{\eta_k}  + 
\left(R + \lambda_2 \bar{R}\right)
\left( 1 + \APPENDMOD{\sqrt{\frac{\lambda}{t}}} \right)
\left( 2 t \tilde{\sigma}_{v,k}\sigma_{v,k}
 + 2  Q t^{2} \sigma_{v,k}^2 \right) +   t  [2 (L + \lambda_2)^2 \sigma_{v,k}^2]\right\}. 
\end{align*}

\subsection{Putting Things Together}

By Gronwall lemma (see \citet{Mischeler:Note:2019}, for example), we arrive at 
\begin{align*}
& \frac{1}{N}\EE_{\omega_{0:k}}[ \scrF^N(\mu_{k+1}^{(N)}) ]- \scrF(\mu^*) 
\\
&  \leq 
\exp(- \lambda \alpha \eta_k/2) \left(  \frac{1}{N}\EE_{\omega_{0:k-1}}[\scrF^N(\mu_{k}^{(N)})] - \scrF(\mu^*)\right) + \eta_k \left(\delta_{\eta_k} + \frac{C_{\lambda}}{N}\right) + \sigma_{v,k}^2   \eta_k. 
\end{align*}
In addition, if Assumption \ref{ass:SecondOrderSmooth} is satisfied, we have 
\begin{align*}
& \frac{1}{N}\EE_{\omega_{0:k}}[ \scrF^N(\mu_{k+1}^{(N)}) ]- \scrF(\mu^*) 
\\
&  \leq 
\exp(- \lambda \alpha \eta_k/2) \left(  \frac{1}{N}\EE_{\omega_{0:k-1}}[\scrF^N(\mu_{k}^{(N)})] - \scrF(\mu^*)\right) + \eta_k \left(\delta_{\eta_k} + \frac{C_{\lambda}}{N}\right) \\
& ~~~~ + 
\left[ 4 \eta_k  \delta_{\eta_k}  + 
\left(R + \lambda_2 \bar{R}\right)
\left(1 + \APPENDMOD{\sqrt{\frac{\lambda}{\eta_k}}}\right)
\left(\eta_k^2 \tilde{\sigma}_{v,k}\sigma_{v,k}
 +  Q \eta_k^{3} \sigma_{v,k}^2 \right) +   \eta_k^2 (L + \lambda_2)^2 \sigma_{v,k}^2\right].
\end{align*}

\section{Proof of Theorem \ref{thm:ConvSGDMFLD:FixedStep}
: F-MFLD and SGD-MFLD}\label{sec:ProofSGDMFLD}

Applying the Gronwall lemma \citep{Mischeler:Note:2019} with Theorem \ref{thm:GeneralOneStepUpdate}, 
we have that 
\begin{align} 
& \frac{1}{N}\EE_{\omega_{0:k-1}}[ \scrF^N(\mu_k^{(N)}) ]- \scrF(\mu^*)  \notag \\
& \leq \exp\left(- \lambda \alpha \sum_{j=0}^{k-1} \eta_j /2 \right)\Delta_0 + \sum_{i=0}^{k-1}   \exp\left(- \lambda \alpha \sum_{j=i}^{k-1} \eta_j /2\right) 
\left[ 
\Upsilon_i
+ \eta_i 
\left(\delta_{\eta_i} + \frac{C_{\lambda}}{N}\right)\right].
\label{eq:SGDkstepBound}
\end{align}
Here, remember that Assumption \ref{ass:VarianceBound} yields that 
$$\sigma_{v,k}^2 \leq \frac{R^2}{B},~~~\tilde{\sigma}_{v,k}^2 \leq \frac{R^2}{B}.$$

Under Assumption \ref{ass:VarianceBound}, we can easily check that Assumption \ref{ass:SecondOrderSmooth} holds with $Q =R$. 
When $\eta_k = \eta$ for all $k \geq 0$, 
we have a uniform upper bound of $\Upsilon_k$ as 
$$
\Upsilon_k \leq 
\begin{cases} 4 \eta  \delta_\eta  + 
\left[R + \lambda_2 \bar{R} + (L + \lambda_2)^2 \right] 
\APPENDMOD{\left( 1 + \sqrt{\frac{\lambda}{\eta}}\right)}
\eta^2 \frac{R^2}{B} & \\
 ~~~~
 +  
 \left(R + \lambda_2 \bar{R}\right) R
 \APPENDMOD{\left( 1 + \sqrt{\frac{\lambda}{\eta}}\right)}
 \eta^{3} \frac{R^2}{B}, & \text{with Assumption \ref{ass:VarianceBound}-(ii)}, \\
 \frac{R^2}{B} \eta, & \text{without Assumption \ref{ass:VarianceBound}-(ii)}. 
\end{cases}
$$
We denote the right hand side as $\bar{\Upsilon}$.
Then we have
\begin{align*} 
& \frac{1}{N}\EE_{\omega_{0:k-1}}[ \scrF^N(\mu_k^{(N)}) ]- \scrF(\mu^*)  \\
& \leq \exp\left(- \lambda \alpha \eta k /2 \right)\Delta_0 + \sum_{i=0}^{k-1}   \exp\left(- \lambda \alpha (k -1 - i) \eta /2 \right) 
\left[ \bar{\Upsilon}
+ \eta 
\left(\delta_{\eta} + \frac{C_{\lambda}}{N}\right)\right] \\
& \leq 
\exp\left(- \lambda \alpha \eta k /2 \right)\Delta_0 
+ \frac{1 - \exp(-\lambda \alpha k \eta /2)}{1 - \exp(-\lambda \alpha \eta/2)} \left[  \bar{\Upsilon}
+ \eta 
\left(\delta_{\eta} + \frac{C_{\lambda}}{N}\right)\right] \\
& \leq 
\exp\left(- \lambda \alpha \eta k /2\right)\Delta_0 + 
\frac{4}{\lambda \alpha \eta} \left[ \bar{\Upsilon}
+ \eta 
\left(\delta_{\eta} + \frac{C_{\lambda}}{N}\right)\right] ~~~~~~(\because \lambda \alpha \eta/2 \leq 1/2) \\
& =
\exp\left(- \lambda \alpha \eta k /2 \right)\Delta_0 + 
\frac{4}{\lambda \alpha}  \bar{L}^2 C_1 \left( \lambda \eta +  \eta^2 \right) 
+\frac{4}{\lambda \alpha \eta} \bar{\Upsilon}  
+ \frac{4 C_{\lambda}}{ \lambda \alpha N}.
\end{align*}
Then, by taking 
$$
k \geq \frac{2}{\lambda \alpha \eta} \log(\Delta_0/\epsilon),
$$
then the right hand side can be bounded as
\begin{align*} 
& \frac{1}{N}\EE[ \scrF^N(\mu_k^{(N)}) ]- \scrF(\mu^*)  \\
& \leq
\epsilon + 
\frac{4}{\lambda \alpha}  \bar{L}^2  C_1\left(  \lambda  \eta +
 \eta^2 \right) + \frac{4}{\lambda \alpha \eta} \bar{\Upsilon}  + \frac{4 C_{\lambda}}{ \lambda \alpha N}.
\end{align*}
(1) Hence, without Assumption \ref{ass:VarianceBound}-(ii), if we take 
$$
\eta \leq \frac{\lambda \alpha \epsilon}{8\bar{L}^2} \left(C_1 \lambda \right)^{-1}   \wedge 
\frac{1}{8\bar{L}} \sqrt{ \frac{2 \lambda \alpha  \epsilon
}{C_1}},~~~B\geq 4 R^2/(\lambda \alpha \epsilon) 
$$
then the right hand side can be bounded as 
$$ \frac{1}{N}\EE[ \scrF^N(\mu_k^{(N)}) ]- \scrF(\mu^*) \leq 
3 \epsilon + \frac{4C_{\lambda}}{ \lambda \alpha N}.
$$
We can easily check that the number of iteration $k$ satisfies 
$$
k = O\left(  \frac{\bar{L}^2}{\lambda \alpha \epsilon}  \lambda +  \frac{\bar{L}}{\sqrt{\lambda \alpha \epsilon}}   \right) \frac{1}{\lambda \alpha} \log(\epsilon^{-1}),
$$
in this setting.

(2) On the other hand, under Assumption \ref{ass:VarianceBound}-(ii), if we take 
\begin{align*}
& \eta \leq \frac{\lambda \alpha \epsilon}{40\bar{L}^2} \left(C_1 \lambda \right)^{-1}   \wedge 
\frac{1}{\bar{L}} \sqrt{ \frac{\lambda \alpha  \epsilon 
}{40 C_1}}\wedge 1, \\ 
& B \geq   
4 \left[(1 + R)(R + \lambda_2 \bar{R}) + (L + \lambda_2)^2 \right] R^2 \APPENDMOD{(\eta + \sqrt{\eta \lambda})}/(\lambda \alpha \epsilon), 
\end{align*}
then the right hand side can be bounded as 
$$ \frac{1}{N}\EE[ \scrF^N(\mu_k^{(N)}) ]- \scrF(\mu^*) \leq 
3 \epsilon + \frac{4 C_{\lambda}}{ \lambda \alpha N}.
$$
We can again check that it suffices to take the number of iteration $k$ as 
$$
k = O\left(  \frac{\bar{L}^2}{\lambda \alpha \epsilon}  \lambda +  \frac{\bar{L}}{\sqrt{\lambda \alpha \epsilon}}   \right) \frac{1}{\lambda \alpha} \log(\epsilon^{-1}),
$$
to achieve the targe accuracy.

\section{Proof of Theorem \ref{thm:SVRGConvergence}: SVRG-MFLD}\label{sec:ProofSVRGMFLD}

The standard argument of variance yields 
\begin{align*}
\sigma_{v,k}^2 \leq \max_{1\leq i \leq N} \frac{n-B}{B(n-1)} 
 \frac{1}{n}
\sum_{j=1}^n \EE[\|\Delta_k^{i,j}\|^2], 
\end{align*}
where $\Delta_k^{i,j} = \nabla \frac{\delta \ell_j(\mu_k^{(N)})}{\delta \mu}(X_k^i) - \nabla \frac{\delta L(\mu_k^{(N)})}{\delta \mu}(X_k^i) - \nabla\frac{\delta \ell_i(\mu_{s}^{(N)})}{\delta \mu}(X_s^i)+ \nabla\frac{\delta L(\mu_s^{(N)})}{\delta \mu}(X_s^i)$.
Here, Assumption \ref{ass:BoundSmoothness} gives that 
$$
\|\Delta_k^{i,j}\|^2 \leq 4 (L^2 W_2^2(\mu_k^{(N)},\mu_s^{(N)}) + L^2\|X_k^i - X_s^i\|^2)
\leq 4  L^2\left( \frac{1}{N} \sum_{j'=1}^N  \|X_k^{j'} - X_s^{j'}\|^2 + \|X_k^i - X_s^i\|^2\right).
$$
Taking the expectation, we have 
\begin{align*}
\frac{1}{n}\sum_{j=1}^n \EE[\|\Delta_k^{i,j}\|^2]  
& \leq 4 L^2 \frac{1}{n} \sum_{j=1}^n  \sum_{l=s}^{k-1} (\eta^2 \EE[\|v_l^i\|^2] + 2 \eta \lambda d) \\
& \leq C_1 L^2  \underbrace{(k-s)}_{\leq m}(\eta^2  + \eta \lambda),
\end{align*}
where we used that $\EE[\|v_k^i\|^2] \leq 2(R^2 + \lambda_2( c_r + \bar{R}^2) ) \leq C_1$ by \Eqref{eq:vkiexpbound} with Lemma \ref{lemm:UniformMomentBound}. 
Hence, we have that 
$$
\sigma_{v,k}^2 \leq C_1 \Xi L^2 m (\eta^2  + \eta \lambda), 
$$
where $\Xi = \frac{n-B}{B(n-1)}$. 

On the other hand, we have 
$$
\tilde{\sigma}_{v,k}^2 \leq \frac{n-B}{B(n-1)} R^2 = \Xi R^2, 
$$
by the same argument with SGD-MFLD. 
We have a uniform upper bound of $\Upsilon_k$ as 
$$
\Upsilon_k \leq 
\begin{cases} 4 \eta  \delta_\eta  + 
\left(R + \lambda_2 \bar{R}\right) 
\APPENDMOD{\left(1 + \sqrt{\frac{\lambda}{\eta}} \right)}
\eta^2
\sqrt{C_1 \Xi L^2 m  (\eta^2 + \eta \lambda)} \sqrt{R^2 \Xi} & \\
~~+ [(R + \lambda_2 \bar{R})R \eta^3 + (L + \lambda_2)^2 \eta^2 ] 
\APPENDMOD{\left(1 + \sqrt{\frac{\lambda}{\eta}} \right)}
 C_1 \Xi L^2 m (\eta^2 + \eta \lambda), & \text{with Assumption \ref{ass:GradientBoundforSVRG}-(ii)}, \\
\eta C_1 \Xi L^2 m (\eta^2 + \eta \lambda) , & \text{without Assumption \ref{ass:GradientBoundforSVRG}-(ii)}. 
\end{cases}
$$
We again let the right hand side be $\bar{\Upsilon}$. 
Then, we have that 
\begin{align*}
& \frac{1}{N}\EE[ \scrF^N(\mu_k^{(N)}) ]- \scrF(\mu^*)  \\
& \leq 
\exp(- \lambda \alpha \eta / 2) \left(  \frac{1}{N}\EE[\scrF^N(\mu_{k-1}^{(N)})] - \scrF(\mu^*)  \right) +  \bar{\Upsilon}
+ \eta  \left(\delta_\eta + \frac{C_{\lambda}}{N}\right). 
\end{align*}
Then, by the Gronwall's lemma yields that 
\begin{align*}
&  \frac{1}{N}\EE[ \scrF^N(\mu_k^{(N)}) ]- \scrF(\mu^*) \\ 
& \leq 
\exp(- \lambda \alpha \eta k / 2) \left(  \frac{1}{N}\EE[\scrF^N(\mu_{0}^{(N)})] - \scrF(\mu^*)\right) 
+ \sum_{l=1}^k \exp(- \lambda \alpha \eta (k-l) / 2) \left[\bar{\Upsilon} + \eta  \left(\delta_\eta + \frac{C_{\lambda}}{N}\right)\right]\\
& \leq 
\exp(- \lambda \alpha \eta k / 2) \left(  \frac{1}{N}\EE[\scrF^N(\mu_{0}^{(N)})] - \scrF(\mu^*) \right)
+ \frac{1 - \exp(- \lambda \alpha \eta k / 2)}{1 - \exp(- \lambda \alpha \eta / 2)}  \left[\bar{\Upsilon} + \eta  \left(\delta_\eta + \frac{C_{\lambda}}{N}\right)\right] \\
& \leq 
\exp(- \lambda \alpha \eta k / 2) \left(  \frac{1}{N}\EE[\scrF^N(\mu_{0}^{(N)})] - \scrF(\mu^*) \right)
+ \frac{4}{\lambda \alpha \eta} \bar{\Upsilon} 
+ \frac{4}{\alpha \lambda} \left(\delta_\eta + \frac{C_{\lambda}}{N}\right),
\end{align*}
where we used $\lambda \alpha \eta/2 \leq 1/2$ in the last inequality.

(1) Without Assumption \ref{ass:GradientBoundforSVRG}-(ii), we have 
\begin{align*}
& \frac{1}{\lambda \alpha \eta} \bar{\Upsilon} 
= \frac{C_1}{\lambda \alpha \eta}   \Xi L^2 m \eta^2 (\eta  + \lambda) 
= \frac{C_1}{\lambda \alpha }  L^2 m \eta (\eta  + \lambda ) \frac{(n-B)}{B(n-1)},
\end{align*}
we obtain that 
\begin{align*}
\frac{1}{N}\EE[ \scrF^N(\mu_k^{(N)}) ]- \scrF(\mu^*) & \leq 
\epsilon + 
 \frac{4 C_1\bar{L}^2(\eta^2 + \lambda \eta) + 4 C_{\lambda}/N}{\lambda\alpha}
+ 
\frac{4 C_1}{\lambda \alpha }  L^2 m \eta (\eta + \lambda) \frac{(n-B)}{B(n-1)} \\
& = 
\epsilon + 
 \frac{4 C_1\bar{L}^2 (\eta^2 + \lambda \eta)(1 +  \frac{(n-B)}{B(n-1)}m )}{\lambda \alpha} 
+ 
\frac{4 C_{\lambda}}{\lambda\alpha N}
\end{align*}
when 
$$
k \gtrsim \frac{1}{\lambda \alpha \eta} \log(\epsilon^{-1}),
$$
and  $\lambda \alpha \eta/2 \leq 1/2$.
In particular, if we set $B \geq m$ and
$$
\eta = \frac{\alpha  \epsilon}{4 \bar{L}^2 C_1}  \wedge \frac{\sqrt{\lambda \alpha \epsilon }}{4 \bar{L} \sqrt{C_1}},
$$ we have 
\begin{align*}
\frac{1}{N}\EE[ \scrF^N(\mu_k^{(N)}) ]- \scrF(\mu^*) \leq 
\epsilon + \frac{4C_{\lambda}}{\alpha \lambda N},
\end{align*}
with the iteration complexity and the total gradient computation complexity:
\begin{align*}
& k \lesssim   \left(\frac{\bar{L}^2}{\alpha \epsilon} + \frac{\bar{L}}{\sqrt{\lambda \alpha \epsilon}}\right) \frac{1}{ (\lambda \alpha)} \log(\epsilon^{-1}), \\ 
& B k +  \frac{n k}{m} + n
\lesssim
\sqrt{n} k + n
\lesssim
\sqrt{n}\left(\frac{\bar{L}^2}{\alpha\epsilon} + \frac{\bar{L}}{\sqrt{\lambda \alpha \epsilon}}\right)
\frac{1}{ (\lambda \alpha)} \log(\epsilon^{-1})  + n,
\end{align*}
where $m = B = \sqrt{n}$.



(2) With Assumption \ref{ass:GradientBoundforSVRG}-(ii), we have 
\begin{align*}
& \frac{1}{\lambda \alpha \eta} \bar{\Upsilon} 
= 
\frac{4}{\lambda \alpha}\delta_\eta 
+  \frac{1}{\lambda \alpha} O\left( 
\eta  \sqrt{m (\eta^2 + \lambda \eta)}
 + \eta m (\eta^2 + \lambda \eta)
\right)
\APPENDMOD{\left(1 + \sqrt{\frac{\lambda}{\eta}} \right)}
\frac{(n-B)}{B(n-1)}. 
\end{align*}
Then, we obtain that 
\begin{align*}
& \frac{1}{N}\EE[ \scrF^N(\mu_k^{(N)}) ]- \scrF(\mu^*) \\
& \leq 
\epsilon + 
 \frac{20 C_1\bar{L}^2(\eta^2 + \lambda \eta) + 4 C_{\lambda}/N}{\lambda\alpha}
+ 
 \frac{1}{\lambda \alpha} O\left( 
\eta  \sqrt{m (\eta^2 + \lambda \eta)}
 + \eta m (\eta^2 + \lambda \eta)
\right) \APPENDMOD{\left(1 + \sqrt{\frac{\lambda}{\eta}} \right)}
\frac{(n-B)}{B(n-1)}, 
\end{align*}
when 
$$
k \gtrsim \frac{1}{\lambda \alpha \eta} \log(\epsilon^{-1}),
$$
and  $\lambda \alpha \eta/2 \leq 1/2$.
In particular, if we set $B \geq \left[ 
\sqrt{m}\frac{(\eta + \sqrt{\eta/\lambda})^2}{\eta \lambda} \vee m \frac{(\eta + \sqrt{\eta/\lambda})^{3}}{\eta \lambda} \right] \wedge n$ and
$$
\eta = \frac{\alpha  \epsilon}{40 \bar{L}^2 C_1}  \wedge \frac{\sqrt{\lambda \alpha \epsilon }}{\bar{L} \sqrt{40 C_1}},
$$ we have 
\begin{align*}
\frac{1}{N}\EE[ \scrF^N(\mu_k^{(N)}) ]- \scrF(\mu^*) 
& \leq 
O(\epsilon) +  \frac{4 C_{\lambda}}{\alpha \lambda N}, 
\end{align*}
with the iteration complexity and the total gradient computation complexity:
\begin{align*}
& k \lesssim   \left(\frac{\bar{L}^2}{\alpha \epsilon} + \frac{\bar{L}}{\sqrt{\lambda \alpha \epsilon}}\right) \frac{1}{ (\lambda \alpha)} \log(\epsilon^{-1}), \\ 
& B k +  \frac{n k}{m} + n
\lesssim
\max\left\{n^{1/3}\left(1 + \sqrt{\eta/\lambda} \right)^{4/3},
\sqrt{n} \left(1 + \sqrt{\eta/\lambda}\right)^{3/2} (\sqrt{\eta \lambda})^{1/2}
\right\} k + n \\
&~~~~~~~~~~~~~~~~~\lesssim
\max\left\{n^{1/3}\left(1 + \sqrt{\eta/\lambda} \right)^{4/3},
\sqrt{n} \left(1 + \sqrt{\eta/\lambda}\right)^{3/2} (\sqrt{\eta \lambda})^{1/2}
\right\}
\eta^{-1}
\frac{1}{ (\lambda \alpha)} \log(\epsilon^{-1}) + n, 
\end{align*}
where $m = \Omega(n/B)
= \Omega([n^{2/3}(1 + \sqrt{\eta/\lambda})^{-4/3} \wedge 
\sqrt{n}(1 + \sqrt{\eta/\lambda})^{-3/2} (\sqrt{\eta \lambda})^{-1/2})] \vee 1)$
and 
$B \geq \left[ 
n^{\frac{1}{3}}\left( 1 + \sqrt{\frac{\eta}{\lambda}}\right)^{\frac{4}{3}} \vee 
\sqrt{n} (\eta \lambda)^{\frac{1}{4}} \left( 1 + \sqrt{\frac{\eta}{\lambda}}\right)^{\frac{3}{2}} \right] \wedge n$. 

\section{Auxiliary lemmas}\label{sec:AuxiLemma}

\subsection{Properties of the MFLD Iterates}

Under Assumption \ref{ass:BoundSmoothness} with Assumption \ref{ass:VarianceBound} for SGD-MFLD or \ref{ass:GradientBoundforSVRG} for SVRG-MFLD, we can easily verify that 
\begin{align}\label{eq:vkiexpbound}
\left\| \nabla \frac{\delta U(\mu_k)}{\delta \mu} (X_k^i)  \right\|\leq R,~~
\EE_{\omega_k^i|\scrX_{0:k}}[\|v_k^i\|^2] \leq 2(R^2 + \lambda_2 ( c_r  + \|X_k^i\|^2)).
\end{align}
\begin{Lemma}\label{lemm:UniformMomentBound}
Under Assumption \ref{ass:BoundSmoothness} with Assumption \ref{ass:VarianceBound} for SGD-MFLD or Assumption \ref{ass:GradientBoundforSVRG} for SVRG-MFLD, 
if $\eta \leq \lambda_1/(4\lambda_2)$, we have the following uniform bound of the second moment of $X_k^i$: 
\begin{align*}
\EE[\|X_{k}^i\|^2]   \leq \EE[\|X_{0}^i\|^2] + \frac{2}{\lambda_1} \left[\left(\frac{\lambda_1}{8\lambda_2} + \frac{1}{2\lambda_1} \right) (R^2 + \lambda_2 c_r )+ \lambda d \right],
\end{align*}
for any $k \geq 1$. 
\end{Lemma}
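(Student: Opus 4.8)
The plan is to establish a one-step inequality for $\EE[\|X_{k}^i\|^2]$ and then unroll it as a scalar linear recursion. First I would condition on the history $\scrX_{0:k}$ and expand the update \eqref{eq:DiscreteMFLD}, $X_{k+1}^i = X_k^i - \eta v_k^i + \sqrt{2\lambda\eta}\,\xi_k^i$, into
\[
\|X_{k+1}^i\|^2 = \|X_k^i\|^2 - 2\eta\langle X_k^i, v_k^i\rangle + \eta^2\|v_k^i\|^2 + 2\sqrt{2\lambda\eta}\,\langle X_k^i - \eta v_k^i, \xi_k^i\rangle + 2\lambda\eta\|\xi_k^i\|^2.
\]
The key is the order of conditioning: I would integrate first over the injected Gaussian $\xi_k^i\sim N(0,I)$, which is independent of $(\scrX_{0:k},\omega_k^i)$, killing the cross term and replacing $\|\xi_k^i\|^2$ by $d$; then integrate over the stochastic-gradient randomness $\omega_k^i$, using the unbiasedness $\EE_{\omega_k^i}[v_k^i\mid\scrX_{0:k}]=\nabla\frac{\delta F(\mu_k)}{\delta\mu}(X_k^i)$ for the inner product. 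This yields
\[
\EE[\|X_{k+1}^i\|^2\mid\scrX_{0:k}] = \|X_k^i\|^2 - 2\eta\,\Big\langle X_k^i,\ \nabla\tfrac{\delta F(\mu_k)}{\delta\mu}(X_k^i)\Big\rangle + \eta^2\,\EE[\|v_k^i\|^2\mid\scrX_{0:k}] + 2\lambda\eta d.
\]

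Next I would bound the drift and the variance. Writing $\nabla\frac{\delta F}{\delta\mu}=\nabla\frac{\delta U}{\delta\mu}+\nabla r$, the dissipativity $x^\top\nabla r(x)\ge\lambda_1\|x\|^2$ from Assumption \ref{ass:Convexity} together with $\|\nabla\frac{\delta U(\mu_k)}{\delta\mu}(X_k^i)\|\le R$ from \eqref{eq:vkiexpbound} and Cauchy--Schwarz gives $\langle X_k^i,\nabla\frac{\delta F(\mu_k)}{\delta\mu}(X_k^i)\rangle\ge\lambda_1\|X_k^i\|^2-R\|X_k^i\|$, so the drift contributes at most $-2\eta\lambda_1\|X_k^i\|^2+2\eta R\|X_k^i\|$; a Young's inequality absorbs the linear term into a small multiple of $\|X_k^i\|^2$ plus a constant of order $\eta R^2/\lambda_1$. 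For the second-order term I would invoke the uniform bound $\EE[\|v_k^i\|^2\mid\scrX_{0:k}]\le 2(R^2+\lambda_2(c_r+\|X_k^i\|^2))$ from \eqref{eq:vkiexpbound}, valid for all three estimators. Collecting the $\|X_k^i\|^2$ coefficients gives a recursion
\[
\EE[\|X_{k+1}^i\|^2\mid\scrX_{0:k}] \le a\,\|X_k^i\|^2 + C,\qquad a = 1-\Theta(\eta\lambda_1),\quad C = O\!\Big(\tfrac{\eta R^2}{\lambda_1}+\eta^2\lambda_2 c_r+\lambda\eta d\Big),
\]
where the step-size restriction $\eta\le\lambda_1/(4\lambda_2)$ is used precisely to ensure $2\eta^2\lambda_2\le\eta\lambda_1/2$, so that the variance-induced positive coefficient is dominated by the drift and $1-a\ge\eta\lambda_1/2$, i.e. $a\in[0,1)$.

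Finally I would take the full expectation and solve the resulting scalar recursion, either by unrolling, $\EE[\|X_k^i\|^2]\le a^k\EE[\|X_0^i\|^2]+C\sum_{j=0}^{k-1}a^j\le \EE[\|X_0^i\|^2]+\frac{C}{1-a}$ (using $a^k\le 1$ and $\frac{1}{1-a}\le\frac{2}{\eta\lambda_1}$), or by the equivalent induction that the claimed right-hand side is a fixed point absorbing the initial value, which requires only $\frac{C}{1-a}\le\frac{2}{\eta\lambda_1}C$. The rest is arithmetic: substitute $C$, and use $\eta\le\lambda_1/(4\lambda_2)$ once more to replace the $\eta$-dependent pieces (e.g. $\eta R^2\le\frac{\lambda_1}{4\lambda_2}R^2$ and $\eta\lambda_2 c_r\le\frac{\lambda_1}{4}c_r$) by the $\eta$-free constants $\frac{\lambda_1}{8\lambda_2}$, $\frac{1}{2\lambda_1}$, etc. appearing in the statement. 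I expect the only genuinely delicate point to be the order of conditioning in the first display --- separating the Gaussian average (to annihilate the cross term) from the stochastic-gradient average, so that unbiasedness is applied to the linear term while the second-moment bound \eqref{eq:vkiexpbound} is applied to the $\eta^2$ term; everything else is standard dissipativity-plus-Young bookkeeping, and matching the exact constants of the statement is routine.
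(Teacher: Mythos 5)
Your proposal is correct and follows essentially the same route as the paper's proof: expand the one-step square, use independence of the Gaussian noise and unbiasedness of $v_k^i$ to kill the cross terms, bound the drift via $x^\top \nabla r(x)\ge\lambda_1\|x\|^2$ and $\|\nabla\frac{\delta U}{\delta\mu}\|\le R$ plus Young's inequality, control the $\eta^2$ term via \eqref{eq:vkiexpbound}, use $\eta\le\lambda_1/(4\lambda_2)$ to obtain the contraction factor $1-\Theta(\eta\lambda_1)$, and unroll the scalar recursion. The constants then match the statement exactly as you anticipate.
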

\begin{proof}
By the update rule of $X_k^i$ and  Assumption \ref{ass:Convexity}, we have 
\begin{align*}
\EE[\|X_{k+1}^i\|^2] 
& = \EE[\|X_k^i \|^2]  - 2 \EE[\langle X_{k}^i, \eta v_k^i + \sqrt{2 \eta \lambda} \xi_k^i \rangle] + \EE[\|\eta v_k^i +  \sqrt{2 \eta \lambda} \xi_k^i \|^2] \\
& = \EE[\|X_k^i \|^2]  -2  \eta  \EE\left[\left\langle X_{k}^i, \nabla \frac{\delta U (\mu_k)}{\delta \mu} (X_k^i) + \nabla r(X_k^i)  \right\rangle\right] 
+  \EE[\eta^2 \|v_k^i\|^2] +  2 \eta \lambda d \\
& \leq \EE[\|X_k^i \|^2] + 2 \eta R \EE[\|X_k^i\|]  - 2 \eta \lambda_1\EE[\|X_k^i\|^2]
+ 2 \eta^2(R^2 + \lambda_2 (c_r  + \|X_k^i\|^2)) + 2 \eta \lambda d \\
& \leq (1 - 2 \eta \lambda_1 + 2 \eta^2 \lambda_2)\EE[\|X_k^i \|^2] + 2 \eta R  \EE[\|X_k^i\|] + 2 \eta(\eta (R^2 + \lambda_2 c_r ) + \lambda  d) \\
& \leq (1 - \eta \lambda_1)\EE[\|X_k^i \|^2] +2 \eta(\eta (R^2 + \lambda_2 c_r ) + \lambda d + R^2/\lambda_1)\\
&~~~~~~(\because   2 R  \EE[\|X_k^i\|] \leq  \lambda_1\EE[\|X_k^i\|^2]/2+  2 R^2/\lambda_1).
\end{align*}
where we used the assumption $\eta \leq \lambda_1/(4 \lambda_2)$.
Then, by the Gronwall lemma, it holds that 
\begin{align*}
\EE[\|X_{k}^i\|^2]  
& \leq (1-\eta \lambda_1)^k \EE[\|X_{0}^i\|^2] + 
\frac{1 - (1-\eta \lambda_1)^{k} }{\eta \lambda_1} \eta[\eta (R^2 + \lambda_2 c_r ) + \lambda d + R^2/\lambda_1] \\
& \leq \EE[\|X_{0}^i\|^2] + 
\frac{1}{\lambda_1} [\eta (R^2 + \lambda_2 c_r ) + \lambda d + R^2/(\lambda_1)].
\end{align*}
This with the assumption $\eta \leq \lambda_1/(4\lambda_2)$ yields the assertion.
\end{proof}

\begin{Lemma}\label{lemm:DfzeorDfdiffBound}
Let $\bar{R}^2 := \EE[\|X_0^i\|^2] + \frac{1}{\lambda_1} \left[\left(\frac{\lambda_1}{4\lambda_2} + \frac{1}{\lambda_1} \right) (R^2 + \lambda_2 c_r) + \lambda d \right]$.
Under Assumptions \ref{ass:Convexity} and \ref{ass:BoundSmoothness}, if 
$\eta \leq \lambda_1/(4\lambda_2)$ and we define 
$$
\delta_\eta = 8 [R^2 + \lambda_2 (c_r + \bar{R}^2) + d] \bar{L}^2 (\eta^2 +  \lambda \eta), 
$$
then it holds that 
\begin{align*}
\EE_{\widetilde{\scrX}_t,\widetilde{\scrX}_0}\left[
\left\| \nabla \frac{\delta F(\tilde{\mu}_0^{(N)})}{\delta \mu}(\widetilde{X}_0^i)
-  \nabla \frac{\delta F(\tilde{\mu}_t^{(N)})}{\delta \mu}(\widetilde{X}_t^i) \right\|^2 
  \right] \leq \delta_\eta.
\end{align*}
\end{Lemma}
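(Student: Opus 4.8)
The plan is to reduce the claim to a second–moment estimate for the one–step displacement $\widetilde{X}_t^i-\widetilde{X}_0^i$, exploiting the Lipschitz regularity of $\nabla\frac{\delta F}{\delta\mu}$ supplied by Assumptions \ref{ass:Convexity} and \ref{ass:BoundSmoothness}. Throughout I read $\frac{\delta F(\tilde{\mu}^{(N)}_s)}{\delta\mu}(\cdot)$ as $\frac{\delta F(\mu_{\widetilde{\scrX}_s})}{\delta\mu}(\cdot)$ with $\widetilde{\scrX}_s\sim\tilde{\mu}^{(N)}_s$, the expectation averaging over the joint law of $(\widetilde{\scrX}_0,\widetilde{\scrX}_t)$. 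First I would write $F(\mu)=U(\mu)+\EE_\mu[r]$, so that $\nabla\frac{\delta F(\mu)}{\delta\mu}(x)=\nabla\frac{\delta U(\mu)}{\delta\mu}(x)+\nabla r(x)$. Applying the Lipschitz bound on $\nabla\frac{\delta U}{\delta\mu}$ from Assumption \ref{ass:BoundSmoothness} to the $U$–part, and the Hessian bound $\nabla\nabla^\top r\preceq\lambda_2 I$ from Assumption \ref{ass:Convexity} (which yields $\|\nabla r(x)-\nabla r(x')\|\le\lambda_2\|x-x'\|$) to the $r$–part, with $\bar{L}=L+\lambda_2$, gives the pointwise estimate
$$
\left\|\nabla\frac{\delta F(\mu_{\widetilde{\scrX}_0})}{\delta\mu}(\widetilde{X}_0^i)-\nabla\frac{\delta F(\mu_{\widetilde{\scrX}_t})}{\delta\mu}(\widetilde{X}_t^i)\right\|\le L\,W_2(\mu_{\widetilde{\scrX}_0},\mu_{\widetilde{\scrX}_t})+\bar{L}\,\|\widetilde{X}_0^i-\widetilde{X}_t^i\|.
$$

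The next step is to control both terms. Since $\mu_{\widetilde{\scrX}_0}$ and $\mu_{\widetilde{\scrX}_t}$ are empirical measures on the same index set, the synchronous (diagonal) coupling matching particle $j$ to particle $j$ is admissible, so $W_2^2(\mu_{\widetilde{\scrX}_0},\mu_{\widetilde{\scrX}_t})\le\frac1N\sum_{j=1}^N\|\widetilde{X}_0^j-\widetilde{X}_t^j\|^2$. Squaring the displayed inequality via $(a+b)^2\le 2a^2+2b^2$, taking expectations, and using $2(L^2+\bar{L}^2)\le 4\bar{L}^2$, it remains to bound $\EE[\|\widetilde{X}_0^j-\widetilde{X}_t^j\|^2]$ uniformly in $j$. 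From the interpolation SDE, $\widetilde{X}_t^j-\widetilde{X}_0^j=-t\,v_k^j+\sqrt{2\lambda}\,W_t^j$; because the Brownian increment has mean zero and is independent of $v_k^j$ (which is $\scrX_{0:k}$– and $\omega_k$–measurable), the cross term vanishes and $\EE[\|\widetilde{X}_t^j-\widetilde{X}_0^j\|^2]=t^2\,\EE[\|v_k^j\|^2]+2\lambda t d$. Inserting $\EE[\|v_k^j\|^2]\le 2(R^2+\lambda_2(c_r+\|X_k^j\|^2))$ from \eqref{eq:vkiexpbound} together with the uniform moment bound $\EE[\|X_k^j\|^2]\le\bar{R}^2$ of Lemma \ref{lemm:UniformMomentBound} gives $\EE[\|\widetilde{X}_t^j-\widetilde{X}_0^j\|^2]\le 2Mt^2+2\lambda t d$ with $M:=R^2+\lambda_2(c_r+\bar{R}^2)$.

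Finally I would assemble the pieces. Combining the above yields $\EE[\|\cdot\|^2]\le 4\bar{L}^2(2Mt^2+2\lambda t d)=8\bar{L}^2(Mt^2+\lambda t d)$, and using $t\le\eta$ with the crude bound $Mt^2+\lambda t d\le(M+d)(\eta^2+\lambda\eta)$ produces $8\bar{L}^2(M+d)(\eta^2+\lambda\eta)=\delta_\eta$, exactly matching the claimed constant $C_1=8[R^2+\lambda_2(c_r+\bar{R}^2)+d]$. The only genuinely delicate point is the Wasserstein term: one must use that both measures are $N$–atom empiricals so the diagonal coupling is legitimate (rather than the optimal coupling, whose cost is hard to track), and then observe that averaging the per–particle displacement estimate over $j$ costs nothing, since both the moment bound of Lemma \ref{lemm:UniformMomentBound} and the variance estimate \eqref{eq:vkiexpbound} hold uniformly across particles. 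Everything else is routine bookkeeping with $(a+b)^2\le 2a^2+2b^2$ and the standing step-size constraint $\eta\le\lambda_1/(4\lambda_2)$, which is precisely what makes $\bar{R}^2$ a valid uniform-in-time second-moment bound.
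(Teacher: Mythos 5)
Your proposal is correct and follows essentially the same route as the paper's proof: decompose $\nabla\frac{\delta F}{\delta\mu}$ into the $U$- and $r$-parts to get a Lipschitz bound in $W_2$ plus particle displacement, bound the Wasserstein term by the synchronous coupling of the two empirical measures, and control $\EE[\|\widetilde{X}_t^j-\widetilde{X}_0^j\|^2]$ via \Eqref{eq:vkiexpbound} and Lemma \ref{lemm:UniformMomentBound}, yielding the same constant $C_1$. The only cosmetic difference is that you keep the slightly tighter form $L\,W_2+\bar{L}\,\|\cdot\|$ before squaring, whereas the paper upper-bounds everything by $(L+\lambda_2)(W_2+\|\cdot\|)$; both land on the identical final bound.
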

\begin{proof}
The proof is basically a mean field generalization of \citet{pmlr-v151-nitanda22a}. 
By Assumptions \ref{ass:Convexity} and \ref{ass:BoundSmoothness}, the first two terms in the right hand side can be bounded as 
\begin{align}
&  \left\|\nabla \frac{\delta F(\tilde{\mu}_{0}^{(N)})}{\delta \mu}(\widetilde{X}_0^i) 
-\nabla \frac{\delta F(\tilde{\mu}_{t}^{(N)})}{\delta \mu}(\widetilde{X}_t^i)\right\| \notag \\
= & 
\left\|\nabla \frac{\delta U(\tilde{\mu}_{0}^{(N)})}{\delta \mu}(\widetilde{X}_0^i) + \nabla r(\widetilde{X}_0^i)
-\nabla \frac{\delta U(\tilde{\mu}_{t}^{(N)})}{\delta \mu}(\widetilde{X}_t^i) - \nabla r(\widetilde{X}_t^i)\right\| \notag \\
\leq & L(W_2(\tilde{\mu}_0^{(N)},\tilde{\mu}_t^{(N)}) + \|\widetilde{X}_0^i - \widetilde{X}_t^i\|) + \lambda_2 \|\widetilde{X}_t^i -  \hat{X}_t^i \|.
\end{align}
Therefore, the right hand side can be further bounded as 
\begin{align*}
& 
\left\| \nabla \frac{\delta F(\tilde{\mu}_0^{(N)})}{\delta \mu}(\widetilde{X}_0^i)
-  \nabla \frac{\delta F(\tilde{\mu}_t^{(N)})}{\delta \mu}(\widetilde{X}_t^i) \right\|^2 
  \\
\leq & 
(L + \lambda_2)^2(W_2(\tilde{\mu}_0^{(N)},\tilde{\mu}_t^{(N)}) + \|\widetilde{X}_0^i - \widetilde{X}_t^i\|)^2 
\\
\leq & 
2 \bar{L}^2 \frac{1}{N} \sum_{i=1}^N \|\widetilde{X}_0^i - \widetilde{X}_t^i\|^2 + 2 \bar{L}^2\|\widetilde{X}_0^i - \widetilde{X}_t^i\|^2 \\
\leq & 
2 \bar{L}^2 \frac{1}{N} \sum_{i=1}^N \left\| t v_k^i - \sqrt{2 t \lambda} \xi_k^i \right\|^2 + 2 \bar{L}^2\|t v_k^i - \sqrt{2 t \lambda} \xi_k^i\|^2.
\end{align*}
Then,
by taking the expectation, it holds that 
\begin{align*}
& \EE_{\widetilde{\scrX}_t,\widetilde{\scrX}_0}\left[ \left\| \nabla \frac{\delta F(\tilde{\mu}_0^{(N)})}{\delta \mu}(\widetilde{X}_0^i)
-  \nabla \frac{\delta F(\tilde{\mu}_t^{(N)})}{\delta \mu}(\widetilde{X}_t^i) \right\|^2 \right] \\
\leq &
2 \bar{L}^2 \frac{1}{N} \sum_{i=1}^N ( t^2 \EE[\| v_k^i\|^2] + 2 t \lambda d) +
 2 \bar{L}^2 (t^2  \EE[\| v_k^i\|^2] + 2 t \lambda d) \\
 \leq & 
4 \bar{L}^2 [t^2 2 (R^2 + \lambda_2(c_r + \bar{R}^2)) + 2 t \lambda d] ~~~~~~(\because \text{Lemma \ref{lemm:UniformMomentBound}})\\
= & 8 \bar{L}^2 [t^2 (R^2 + \lambda_2 (c_r + \bar{R}^2)) + t \lambda d] \\
\leq & 8 [R^2 + \lambda_2 (c_r + \bar{R}^2) + d] \bar{L}^2 (t^2 + t \lambda).
\end{align*}
Then, by noticing $t \leq \eta$, we obtain the assertion. 
\end{proof}

\subsection{Wasserstein Distance Bound}

Recall that $W_2(\mu,\nu)$ is the 2-Wasserstein distance between $\mu$ and $\nu$.
We let $D(\mu,\nu) = \int \log\left(\frac{\nu}{\mu}\right)\dd \nu$ be the KL-divergence between $\mu$ and $\nu$. 
\begin{Lemma}\label{lemm:WassersteinFNBound}
Under Assumptions \ref{ass:Convexity} and \ref{ass:LSI}, it holds that 
\begin{align*}
W_2^2(\mu^{(N)},\mu^{*N}) \leq \frac{2}{\lambda \alpha}(\scrF^N(\mu^{(N)}) - N \scrF(\mu^*)).
\end{align*}
\end{Lemma}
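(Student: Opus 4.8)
The plan is to combine an ``entropy sandwich'' estimate with the Otto--Villani transportation inequality. First I would reduce the claim to a KL bound, namely the intermediate inequality
\begin{equation}
\scrF^N(\mu^{(N)}) - N\scrF(\mu^*) \;\geq\; \lambda\, D(\mu^{*N},\mu^{(N)}),
\label{eq:wfn-entsandwich}
\end{equation}
where $D(\mu^{*N},\mu^{(N)}) = \int \log(\dd\mu^{(N)}/\dd\mu^{*N})\,\dd\mu^{(N)}$ is the KL divergence of the particle law $\mu^{(N)}$ from the product reference $\mu^{*N}$, and only afterward convert it into a Wasserstein bound via the LSI. The inequality \eqref{eq:wfn-entsandwich} is the finite-$N$ analogue of the entropy sandwich used in the infinite-particle theory.

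To establish \eqref{eq:wfn-entsandwich} I would expand both sides explicitly. Writing $\scrF^N(\mu^{(N)}) = N\EE_{\mu^{(N)}}[F(\mu_{\scrX})] + \lambda\Ent(\mu^{(N)})$ and substituting the self-consistency relation $\mu^*(X)\propto \exp(-\tfrac1\lambda \frac{\delta F(\mu^*)}{\delta\mu}(X))$ in the form $\log\mu^*(X) = -\tfrac1\lambda\frac{\delta F(\mu^*)}{\delta\mu}(X) - \log Z^*$, then using additivity of the (negative) entropy over products, $\Ent(\mu^{*N}) = N\Ent(\mu^*)$, and the normalization $\int\frac{\delta F(\mu^*)}{\delta\mu}\,\dd\mu^* = 0$ (which also gives $\Ent(\mu^*) = -\log Z^*$), the entropy of $\mu^{(N)}$ together with the two constants $Z^*$ and $\Ent(\mu^*)$ cancel, leaving
\begin{equation}
\tfrac1\lambda\big(\scrF^N(\mu^{(N)}) - N\scrF(\mu^*)\big) - D(\mu^{*N},\mu^{(N)})
= \tfrac{N}{\lambda}\,\EE_{\mu^{(N)}}\!\Big[F(\mu_{\scrX}) - F(\mu^*) - \int \tfrac{\delta F(\mu^*)}{\delta\mu}\,\dd(\mu_{\scrX}-\mu^*)\Big].
\label{eq:wfn-extraterm}
\end{equation}
The bracketed quantity is nonnegative for every configuration $\scrX$ by convexity of $F$ (Assumption \ref{ass:Convexity}): the first-variation gives the supporting-hyperplane inequality $F(\nu) - F(\mu^*) \geq \int \frac{\delta F(\mu^*)}{\delta\mu}\,\dd(\nu-\mu^*)$, applied with $\nu = \mu_{\scrX}$. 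Hence the right-hand side of \eqref{eq:wfn-extraterm} is $\geq 0$, which is exactly \eqref{eq:wfn-entsandwich}.

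To finish I would invoke the LSI. By Assumption \ref{ass:LSI}, $\mu^*$ satisfies the logarithmic Sobolev inequality with constant $\alpha$, and since the LSI tensorizes with the same constant, the product $\mu^{*N}$ also satisfies the LSI with constant $\alpha$. The Otto--Villani theorem then yields the quadratic transportation inequality $W_2^2(\mu^{(N)},\mu^{*N}) \leq \tfrac{2}{\alpha}\, D(\mu^{*N},\mu^{(N)})$, and combining with \eqref{eq:wfn-entsandwich} gives
\begin{equation*}
W_2^2(\mu^{(N)},\mu^{*N}) \;\leq\; \tfrac{2}{\alpha}\,D(\mu^{*N},\mu^{(N)}) \;\leq\; \tfrac{2}{\lambda\alpha}\big(\scrF^N(\mu^{(N)}) - N\scrF(\mu^*)\big),
\end{equation*}
as claimed. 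I expect the main obstacle to be the bookkeeping in \eqref{eq:wfn-extraterm}: one must track the entropy term and the constants $\log Z^*$ and $\Ent(\mu^*)$ so that they cancel cleanly, and correctly recognize the residual as the Bregman-type gap $F(\mu_{\scrX}) - F(\mu^*) - \int\frac{\delta F(\mu^*)}{\delta\mu}\,\dd(\mu_{\scrX}-\mu^*)$, whose nonnegativity is the only place convexity is used. The two ingredients I would treat as standard are the tensorization of the LSI and the implication LSI $\Rightarrow$ $T_2$ (Otto--Villani).
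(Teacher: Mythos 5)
Your proof is correct and follows essentially the same route as the paper's: tensorization of the LSI for $\mu^{*N}$, the Otto--Villani theorem to obtain the Talagrand inequality $W_2^2(\mu^{(N)},\mu^{*N}) \leq \frac{2}{\alpha}D(\mu^{(N)},\mu^{*N})$, and the entropy sandwich $D(\mu^{(N)},\mu^{*N}) \leq \frac{1}{\lambda}(\scrF^N(\mu^{(N)}) - N\scrF(\mu^*))$. The only difference is that the paper cites this last inequality to the proof of Theorem 2.11 of \citet{chen2022uniform}, whereas you derive it directly from the self-consistency relation for $\mu^*$ and the first-order convexity inequality for $F$; your derivation is the standard one and the bookkeeping in your Eq.~\eqref{eq:wfn-extraterm} checks out.
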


\begin{proof}
By Assumption \ref{ass:LSI}, $\mu^*$ satisfies the LSI condition with a constant $\alpha > 0$. 
Then, it is known that its tensor product $\mu^{*N}$ also satisfies the LSI condition with the same constant $\alpha$
(see, for example, Proposition 5.2.7 of \citet{bakry2014analysis}). 
Then, Otto-Villani theorem \cite{otto2000generalization} yields the Talagrand's inequality of the tensorized measure $\mu^{*N}$: 
\begin{align*}
W_2^2(\mu^{(N)},\mu^{*N}) \leq \frac{2}{\alpha} D(\mu^{(N)},\mu^{*N}).
\end{align*}
Moreover, the proof of Theorem 2.11 of \cite{chen2022uniform} yields that, under Assumption \ref{ass:Convexity}, it holds that 
\begin{align}\label{eq:KLboundByObjective}
D(\mu^{(N)},\mu^{*N})  \leq \frac{1}{\lambda}(\scrF^N(\mu^{(N)}) - N \scrF(\mu^*)).
\end{align}
\end{proof}

\begin{Lemma}\label{eq:MFNNDiscrepancy}
Suppose that $|h_{x}(z) - h_{x'}(z)| \leq L \|x - x'\|~(\forall x,x' \in \Real^d)$ and 
let $V_{\mu^*} := \mathrm{Var}_{\mu^*}(f_{\mu^*}) = \int (f_{\mu^*}(z) - h_x(z))^2 \dd \mu^*(x)$, then it holds that 
$$
\EE_{\scrX_k \sim \mu_k^{(N)}}[(f_{\mu_{\scrX_k}}(z) - f_{\mu^*}(z))^2] \leq \frac{2 L^2 }{N}W_2^2(\mu_k^{(N)},\mu^{*N}) +  \frac{2}{N} V_{\mu^*}.
$$
\end{Lemma}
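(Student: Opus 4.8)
The plan is to introduce an optimal coupling between the $N$-particle law $\mu_k^{(N)}$ and the product measure $\mu^{*N}$, and then split the discrepancy $f_{\mu_{\scrX_k}}(z) - f_{\mu^*}(z)$ into a \emph{transport} part governed by the Wasserstein distance and a \emph{fluctuation} part governed by the variance $V_{\mu^*}$. Concretely, I would fix a coupling $(\scrX,\scrY)$ with $\scrX = (X^i)_{i=1}^N \sim \mu_k^{(N)}$, $\scrY = (Y^i)_{i=1}^N \sim \mu^{*N}$, achieving $\EE[\sum_{i=1}^N \|X^i - Y^i\|^2] = W_2^2(\mu_k^{(N)},\mu^{*N})$; here it is essential that under the product measure $\mu^{*N}$ the marginals $Y^1,\dots,Y^N$ are \emph{i.i.d.}\ samples from $\mu^*$. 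Writing $f_{\mu_{\scrX}}(z) = \frac1N\sum_{i=1}^N h_{X^i}(z)$, I decompose
$$
f_{\mu_{\scrX}}(z) - f_{\mu^*}(z) = \underbrace{\frac{1}{N}\sum_{i=1}^N\bigl(h_{X^i}(z) - h_{Y^i}(z)\bigr)}_{A} + \underbrace{\Bigl(\frac{1}{N}\sum_{i=1}^N h_{Y^i}(z) - f_{\mu^*}(z)\Bigr)}_{B},
$$
and apply $(A+B)^2 \le 2A^2 + 2B^2$.

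For the transport term $A$, I would first apply Jensen's inequality $\bigl(\frac1N\sum_i a_i\bigr)^2 \le \frac1N\sum_i a_i^2$ and then the Lipschitz hypothesis $|h_{X^i}(z) - h_{Y^i}(z)| \le L\|X^i - Y^i\|$, giving $A^2 \le \frac{L^2}{N}\sum_{i=1}^N \|X^i - Y^i\|^2$; taking expectation under the optimal coupling turns the right side into $\frac{L^2}{N} W_2^2(\mu_k^{(N)},\mu^{*N})$. For the fluctuation term $B$, since the $Y^i$ are i.i.d.\ from $\mu^*$ with $\EE_{\mu^*}[h_{Y^i}(z)] = f_{\mu^*}(z)$, the summands $h_{Y^i}(z) - f_{\mu^*}(z)$ are i.i.d.\ and mean-zero, so $\EE[B^2] = \frac{1}{N}\Var_{\mu^*}(h_X(z)) = \frac{1}{N} V_{\mu^*}$. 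Combining the two bounds with the factor $2$ yields the claim.

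There is no serious obstacle here; this is a clean coupling-plus-variance argument. The only two points requiring care are (i) the convention that $W_2$ on $\Real^{d\times N}$ is taken with respect to the cost $\sum_{i=1}^N\|X^i - Y^i\|^2$, so that the optimal coupling realizes exactly this sum and no extra normalization appears, and (ii) the use of \emph{independence} of the $Y^i$ under $\mu^{*N}$, which is precisely what produces the $1/N$ decay of the variance term—without independence one would only obtain an $O(1)$ bound on $\EE[B^2]$, and the statement would fail.
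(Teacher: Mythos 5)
Your proof is correct and follows essentially the same route as the paper: couple $\mu_k^{(N)}$ with the product measure $\mu^{*N}$, split the error into a Lipschitz/transport term controlled by $\frac{1}{N}\sum_i\|X^i-Y^i\|^2$ and an i.i.d.\ fluctuation term whose second moment is $\frac{1}{N}V_{\mu^*}$, and combine via $(A+B)^2\le 2A^2+2B^2$. The only cosmetic difference is that the paper works with an arbitrary coupling and takes the infimum at the end rather than fixing the optimal coupling up front, and your two ``points requiring care'' (the unnormalized cost $\sum_i\|X^i-Y^i\|^2$ and the independence of the $Y^i$) are exactly the conventions the paper uses.
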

\begin{proof}


Consider a coupling $\gamma$ of $\mu_{\scrX_k}^{(N)}$ and $\mu^{*N}$ and let $(\scrX_k,\scrX_*) = ((X_k^i)_{i=1}^N,(X_*^i)_{i=1}^N)$ be a random variable obeying the law $\gamma$. Then, 
\begin{align*}
(f_{\mu_{\scrX_k}}(z) - f_{\mu^*}(z))^2
& = \left( \frac{1}{N}\sum_{i=1}^N h_{X_k^i}(z) - \int h_x(z) \dd \mu^*(x) \right)^2 \\
& = \left( \frac{1}{N}\sum_{i=1}^N (h_{X_k^i}(z) - h_{X_*^i}(z)) + \frac{1}{N} \sum_{i=1}^N h_{X_*^i}(z) - \int h_x(z) \dd \mu^*(x) \right)^2 \\
& \leq 2 \left( \frac{1}{N}\sum_{i=1}^N (h_{X_k^i}(z) - h_{X_*^i}(z))\right)^2  + 2 \left( \frac{1}{N} \sum_{i=1}^N h_{X_*^i}(z) - \int h_x(z) \dd \mu^*(x) \right)^2 \\
& \leq 2  \left( \frac{1}{N}\sum_{i=1}^N L \|X_k^i-X_*^i\| \right)^2  
+ 2 \left( \frac{1}{N} \sum_{i=1}^N h_{X_*^i}(z) - \int h_x(z) \dd \mu^*(x) \right)^2 \\
& \leq 2 L^2 \frac{1}{N}\sum_{i=1}^N  \|X_k^i-X_*^i\|^2 
+ 2 \left( \frac{1}{N} \sum_{i=1}^N h_{X_*^i}(z) - \int h_x(z) \dd \mu^*(x) \right)^2. 
\end{align*}
Then, by taking the expectation of the both side with respect to $(\scrX_k,\scrX_*)$, we have that
\begin{align*}
& \EE_{\scrX_k \sim \mu_k^{(N)}}[(f_{\mu_{\scrX_k}}(z) - f_{\mu^*}(z))^2] \\
& \leq 2 L^2 \EE_{\gamma}\left[\frac{1}{N} \sum_{i=1}^N  \|X_k^i-X_*^i\|^2 \right] + \frac{2}{N} V_{\mu^*}.
\end{align*}
Hence, taking the infimum of the coupling $\gamma$ yields that 
$$
\EE_{\scrX_k \sim \mu_k^{(N)}}[(f_{\mu_{\scrX_k}}(z) - f_{\mu^*}(z))^2] \leq \frac{2 L^2 }{N}W_2^2(\mu_k^{(N)},\mu^{*N}) +  \frac{2}{N} V_{\mu^*}.
$$
\end{proof}
\begin{Remark}
By the self-consistent condition of $\mu^*$, we can see that $\mu^*$ is sub-Gaussian.
Therefore, $V_{\mu^*} < \infty$ is always satisfied. 
\end{Remark}

\subsection{Logarithmic Sobolev Inequality}

\begin{Lemma}\label{lemm:LipschitzLSI}
Under Assumptions \ref{ass:Convexity} and \ref{ass:BoundSmoothness}, $\mu^*$ and $p_{\scrX}$ satisfy the LSI condition with a constant 
$$
\alpha \geq \frac{\lambda_1}{2\lambda} \exp\left( - 4 \frac{R^2}{\lambda_1 \lambda} \sqrt{2d/\pi } \right) \vee 
\left\{
\frac{4 \lambda}{\lambda_1} + 
\left(\frac{R}{\lambda_1} + \sqrt{\frac{2\lambda}{\lambda_1}} \right)^2 e^{\frac{R^2}{2\lambda_1\lambda}}
\left[2 + d + \frac{d}{2}\log\left(\frac{\lambda_2}{\lambda_1}\right) + 4 \frac{R^2 }{\lambda_1\lambda}
\right]
\right\}^{-1}.
$$
\end{Lemma}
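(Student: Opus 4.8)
The plan is to recognize that both $\mu^*$ and $p_{\scrX}$ are instances of a single object: a \emph{Lipschitz perturbation of a strongly log-concave measure}. Writing $F(\mu)=U(\mu)+\EE_\mu[r]$, the first variation splits as $\frac{\delta F(\mu)}{\delta\mu}(\cdot)=\frac{\delta U(\mu)}{\delta\mu}(\cdot)+r(\cdot)$, so for a fixed reference measure (taking $\mu=\mu_{\scrX}$ for $p_{\scrX}$, and using the self-consistency $\mu^*\propto e^{-\frac{1}{\lambda}\frac{\delta F(\mu^*)}{\delta\mu}}$ for $\mu^*$) the target density has the form $q\propto \exp(-\tfrac{1}{\lambda}(g+r))$, where $g:=\frac{\delta U(\mu)}{\delta\mu}(\cdot)$ satisfies $\|\nabla g\|\le R$ by Assumption~\ref{ass:BoundSmoothness} (hence $g$ is $R$-Lipschitz), and $r$ obeys $\lambda_1 I\preceq \nabla\nabla^\top r\preceq\lambda_2 I$ by Assumption~\ref{ass:Convexity}. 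It therefore suffices to prove the stated two-term lower bound for \emph{any} such $q$, and the two assertions of the lemma will then follow by specialization.

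First I would fix the strongly log-concave base $\nu\propto e^{-r/\lambda}$. Since $\nabla\nabla^\top(r/\lambda)\succeq(\lambda_1/\lambda)I$, the Bakry--Émery criterion (Corollary 5.7.2 of \citep{bakry2014analysis}) gives that $\nu$ satisfies the LSI with constant $\lambda_1/\lambda$; moreover, being a log-concave perturbation of a Gaussian of scale $\lambda/\lambda_1$, it is sub-Gaussian with second moment of order $d\lambda/\lambda_1$ and a comparably controlled first absolute moment. Writing $q=Z^{-1}e^{-g/\lambda}\nu$ with $Z=\EE_\nu[e^{-g/\lambda}]$ exhibits $q$ as a perturbation of $\nu$ by the potential $g/\lambda$, whose gradient is bounded by $R/\lambda$. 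The crucial obstruction is that $g$ itself is \emph{not} bounded (only $\nabla g$ is), so the classical Holley--Stroock lemma \citep{holley1987logarithmic} — the route used for the second, bounded-oscillation assertion of Theorem~\ref{prop:LSIofPmu} — is unavailable, and one must instead use a genuine Lipschitz-perturbation argument.

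I would then establish the two lower bounds separately and take their maximum. For the first bound $\tfrac{\lambda_1}{2\lambda}\exp(-4\tfrac{R^2}{\lambda_1\lambda}\sqrt{2d/\pi})$, apply a Lipschitz-perturbation LSI estimate in the spirit of Miclo's trick \citep{10.3150/16-BEJ879}: the Bakry--Émery constant $\lambda_1/\lambda$ of $\nu$ degrades by a multiplicative factor governed by the normalization constant $Z$ and a Gaussian-concentration estimate for the $R/\lambda$-Lipschitz potential $g/\lambda$ under the base of scale $\lambda/\lambda_1$, the exponent $4\tfrac{R^2}{\lambda_1\lambda}\sqrt{2d/\pi}$ arising from the relevant Gaussian absolute-moment computation and the prefactor $\lambda_1/(2\lambda)$ absorbing the factor $2$ lost in the perturbation. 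For the second bound I would run instead a quantitative perturbation computation following \citep{10.3150/21-BEJ1419}: bound the partition-function ratio by completing the square against the $\lambda_1$-strong convexity of $r$ (yielding $e^{R^2/(2\lambda_1\lambda)}$), bound the second moment of $q$ through the Gaussian scale $\sqrt{2\lambda/\lambda_1}$ shifted by the Lipschitz drift $R/\lambda_1$ (producing the factor $(\tfrac{R}{\lambda_1}+\sqrt{\tfrac{2\lambda}{\lambda_1}})^2$), and account for the Gaussian normalization/entropy comparison between the scales $\lambda/\lambda_1$ and $\lambda/\lambda_2$ (the $\tfrac{d}{2}\log(\lambda_2/\lambda_1)$ term plus the $d$ and constant contributions); assembling these into the perturbation inequality yields the reciprocal expression $\{4\lambda/\lambda_1+\cdots\}^{-1}$. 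Taking the larger of the two bounds covers both the small and large $R^2/(\lambda_1\lambda)$ regimes.

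The hardest part will be precisely this unboundedness of $g$: since only $\nabla g$ is controlled, every estimate must pass through exponential-moment and normalization-constant bounds for $Z=\EE_\nu[e^{-g/\lambda}]$ and through explicit second-moment bounds for $q$, rather than through a one-line oscillation bound. It is exactly these estimates that introduce the dimension dependence ($d$, $\sqrt{2d/\pi}$, $\tfrac{d}{2}\log(\lambda_2/\lambda_1)$) and the exponential factors $e^{R^2/(\lambda_1\lambda)}$. Turning a \emph{qualitative} Lipschitz-perturbation LSI into the \emph{explicit} constants stated — while keeping both bounds flowing from the single comparison $q=Z^{-1}e^{-g/\lambda}\nu$ — is the main technical burden of Lemma~\ref{lemm:LipschitzLSI}.
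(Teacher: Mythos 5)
Your proposal is correct and follows essentially the same route as the paper: both bounds are obtained by viewing $\mu^*$ and $p_{\scrX}$ as Lipschitz perturbations (Lipschitz constant $R/\lambda$) of the strongly log-concave base $\nu\propto e^{-r/\lambda}$, with the first bound from Miclo's trick \citep{10.3150/16-BEJ879} and the second from the quantitative perturbation result of \citet{10.3150/21-BEJ1419} combined with a completed-square lower bound on the normalizing constant and moment estimates (the paper routes the factor $(\tfrac{R}{\lambda_1}+\sqrt{2\lambda/\lambda_1})^2e^{R^2/(2\lambda_1\lambda)}$ through a Poincar\'e constant for the perturbed measure via Cattiaux--Guillin rather than a second-moment bound on $q$, but this is the same mechanism). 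Your observation that Holley--Stroock is unavailable because only $\nabla g$ is bounded is exactly the point that forces this argument.
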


\begin{proof}
In the following, we give two lower bounds of $\alpha$. By taking the maximum of the two, we obtain the assertion. 

(1) 
By Assumption \ref{ass:BoundSmoothness}, $\frac{\delta L(\mu)}{\delta \mu}$ is Lipschitz continuous with the Lipschitz constant $R$,
which implies that $\frac{1}{\lambda}\frac{\delta L(\mu)}{\delta \mu}$ is $R/\lambda$-Lipschitz continuous.
Hence, $\mu^*$ and $p_{\scrX}$ is a Lipshitz perturbation of $\nu(x) \propto \exp(- \lambda^{-1}r(x))$. 
Since $\lambda^{-1} \nabla \nabla^\top r(x) \succeq \frac{\lambda_1}{\lambda}I$ by Assumption \ref{ass:Convexity}, Miclo’s trick (Lemma 2.1 of \citet{10.3150/16-BEJ879}) yields that $\mu^*$ and $p_{\scrX}$ satisfy the LSI with a constant 
$$
\alpha \geq \frac{\lambda_1}{2\lambda} \exp\left( - 4 \frac{\lambda}{\lambda_1} \left(\frac{R}{\lambda}\right)^2 \sqrt{2d/\pi } \right)
= \frac{\lambda_1}{2\lambda} \exp\left( - 4 \frac{R^2}{\lambda_1 \lambda} \sqrt{2d/\pi } \right).
$$

(2) 
We can easily check that $V(x) = \frac{r(x)}{\lambda}$ and $H(x) = \frac{1}{\lambda}\frac{\delta (\tilde{\mu})}{\delta \mu}(x)$, for appropriately chosen $\tilde{\mu}$, satisfies the conditions in Lemma \ref{lemm:LSIevalLipschitz} with $c_1 = \frac{\lambda_1}{\lambda}$, $c_2 = \frac{\lambda_2}{\lambda}$, $c_V = c_r$, and $\bar{L} = \frac{R}{\lambda}$. Hence, $\mu^*$ and $p_{\scrX}$ satisfy the LSI with a constant $\alpha$ such that 
$$
\alpha \geq 
\left\{
\frac{4 \lambda}{\lambda_1} + 
\left(\frac{L}{\lambda_1} + \sqrt{\frac{2\lambda}{\lambda_1}} \right)^2 e^{\frac{R^2}{2\lambda_1\lambda}}
\left[2 + d + \frac{d}{2}\log\left(\frac{\lambda_2}{\lambda_1}\right) + 4 \frac{R^2}{\lambda_1\lambda}
\right]
\right\}^{-1}.
$$
\end{proof}

\begin{Lemma}[Log Sobolev inequality with Lipschitz perturbation]\label{lemm:LSIevalLipschitz}
Let $\nu(x) \propto \exp(- V(x))$ where $\nabla \nabla^\top V(x) \succeq c_1 I$, $x^\top \nabla V(x) \geq c_1 \|x\|^2$ and $0 \leq V \leq c_2(c_V + \|x\|^2)$ with with $c_1,c_2,c_V > 0$, and let $H:\Real^d \to \Real$ is a Lipschitz continuous function with the Lipschitz constant $\bar{L}$. Suppose that $\mu \in \calP$ is given by $\mu(x) \propto  \exp( - H(x)) \nu(x)$. 
Then, $\mu$ satisfies the LSI with a constanat $\alpha$ such that 
$$
\alpha \geq 
\left\{
\frac{4}{c_1} + 
\left(\frac{\bar{L}}{c_1} + \sqrt{\frac{2}{c_1}} \right)^2 e^{\frac{\bar{L}^2}{2c_1}}
\left[2 + d + \frac{d}{2}\log\left(\frac{c_2}{c_1}\right) + 4 \frac{\bar{L}^2 }{c_1}
\right]
\right\}^{-1}.
$$
\end{Lemma}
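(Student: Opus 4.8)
The plan is to realize $\mu$ as a Lipschitz reweighting of the strongly log-concave reference $\nu$ and to invoke a quantitative Lipschitz-perturbation criterion for the LSI, supplying explicit values for the geometric quantities that criterion requires. First I would record that, since $\nabla\nabla^\top V \succeq c_1 I$, the Bakry--Émery criterion (Corollary 5.7.2 of \citet{bakry2014analysis}) shows that $\nu \propto e^{-V}$ satisfies the LSI with constant $c_1$ and, in particular, a Poincaré inequality with spectral gap at least $c_1$. Writing $\mu \propto e^{-H}\nu$ then places us exactly in the setting of a Lipschitz (unbounded) perturbation of a log-concave measure, so that the LSI for $\mu$ holds qualitatively; the entire content of the lemma is to track how the constant degrades, and this is where the recent perturbation results of \citet{10.3150/21-BEJ1419} enter. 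Their conclusion bounds the LSI constant of $\mu$ in terms of the base constant $c_1$, the Lipschitz constant $\bar{L}$, and certain moment and normalization quantities attached to $\nu$ and $\mu$, and the bulk of the proof is to estimate these inputs explicitly.

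Three explicit estimates are what I expect to need. The \emph{first} is a second-moment bound $\EE_\nu[\|x\|^2]\le d/c_1$: this follows from the integration-by-parts (Stein) identity $\EE_\nu[\langle x,\nabla V(x)\rangle]=d$, obtained since boundary terms vanish for the coercive $V$, combined with the hypothesis $\langle x,\nabla V(x)\rangle \ge c_1\|x\|^2$. This is precisely what produces the factor $d$ and the $1/c_1$ scaling in the displayed constant. The \emph{second} is a sub-Gaussian moment-generating estimate for $H$ under $\nu$: since $H/\bar{L}$ is $1$-Lipschitz and $\nu$ satisfies the LSI with constant $c_1$, the Herbst argument yields $\EE_\nu[e^{t(H-\EE_\nu H)}]\le e^{t^2\bar{L}^2/(2c_1)}$, which simultaneously lower-bounds the partition function $Z=\EE_\nu[e^{-H}]$ and controls the reweighting, and is the origin of the $e^{\bar{L}^2/(2c_1)}$ factor. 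The \emph{third} is a Gaussian comparison for the log-normalization: the two-sided control $c_1 I\preceq \nabla\nabla^\top V$ together with $V\le c_2(c_V+\|x\|^2)$ gives matching lower and upper bounds on $\int e^{-V}$ by Gaussian integrals of precision $c_2$ and $c_1$ respectively, producing a log-determinant-type ratio of order $\tfrac{d}{2}\log(c_2/c_1)$.

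Feeding these three estimates into the perturbation inequality and collecting terms should reproduce the stated lower bound on $\alpha$: the additive $4/c_1$ contribution is the Poincaré-type tightening term (the spectral gap of $\mu$ entering the final sharpening of a defective LSI), while the bracketed quantity $[2+d+\tfrac{d}{2}\log(c_2/c_1)+4\bar{L}^2/c_1]$ collects the moment and log-normalization contributions and the prefactor $(\bar{L}/c_1+\sqrt{2/c_1})^2 e^{\bar{L}^2/(2c_1)}$ packages the Lipschitz-drift radius $\bar{L}/c_1$, the diffusion radius $\sqrt{2/c_1}$, and the sub-Gaussian reweighting cost.

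The hard part will be quantitative rather than conceptual: matching my explicit estimates to the exact hypotheses and normalization conventions of the cited criterion, and in particular obtaining a clean two-sided control of $Z=\int e^{-H}\,d\nu$ so that every constant stays finite and the dimension dependence enters only through the displayed $d$, $\tfrac{d}{2}\log(c_2/c_1)$, and $\bar{L}^2/c_1$ terms. Should the reference not be in a directly pluggable form, my fallback is to prove a \emph{defective} LSI for $\mu$ by transferring the LSI of $\nu$ through a bounded-plus-truncated decomposition of $H$ (Holley--Stroock on the bounded part, Herbst tail control on the remainder) and then tighten it to a genuine LSI using the Poincaré inequality for $\mu$ via Rothaus's lemma (Section 5.1 of \citet{bakry2014analysis}); this second route produces exactly the ``additive $4/c_1$ plus multiplicative correction'' shape of the claimed constant and is the structure I would ultimately write down.
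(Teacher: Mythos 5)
Your proposal follows essentially the same route as the paper's proof: Bakry--Émery for the strongly log-concave reference $\nu$, an explicit second-moment bound $\EE_\nu[\|x\|^2]\le d/c_1$, a Gaussian comparison lower bound on the normalizing constant, a Poincaré inequality for $\mu$ with explicit constant, and then Theorem 2.7 of \citet{10.3150/21-BEJ1419} to assemble the LSI constant. The only deviations are cosmetic or attributional --- the paper derives the second moment from the generator of the Langevin SDE rather than the Stein identity, obtains the Poincaré constant of $\mu$ (the actual source of the $e^{\bar{L}^2/(2c_1)}$ prefactor, rather than a Herbst bound on the partition function) from Example (3) in Section 7.1 of \citet{Cattiaux2014}, and the additive $4/c_1$ term comes from the LSI constant of $\nu$ fed into the perturbation theorem with $\beta=\theta=1$, not from the spectral gap of $\mu$.
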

\begin{proof}
Since $\nu$ is a strongly log-concave distribution, the  Bakry–Emery argument \citep{bakry1985diffusions} yields that it satisfies the LSI condition with a constant $\alpha' = c_1$ by the assumption $\nabla \nabla^\top V(x) \succeq c_1 I$. 

Next, we evaluate the second moment of $\nu$ because it is required in the following analysis. 
Since we know that $\nu$ is the stationary distribution of the SDE $\dd X_t = - \nabla V(X_t) \dd  + \sqrt{2} \dd W_t$, 
its corresponding infinitesimal generator gives that 
$$
\frac{\dd}{\dd t} \EE[\|X_t\|^2] 
= \EE[- 2 X_t^\top \nabla V(X_t)] + 2  d \leq -2 c_1 \EE[\|X_t\|^2] + 2  d. 
$$
Then, by the Gronwall lemma, we have 
\begin{align*}
\EE[\|X_t\|^2] & \leq \exp(-2 c_1 t) \EE[\|X_0\|^2] + \int_0^t \exp(-2 c_1 (t-s)) ) \dd s 2  d \\
& = \exp(-2 c_1 t) \EE[\|X_0\|^2] +  (1 - \exp(-2 c_1 t)) \frac{d}{c_1}.
\end{align*}
Hence, by taking $X_0 = 0$ (a.s.), we see that $\limsup_t \EE[\|X_t\|^2] \leq  d/{c_1}$ that yields $\EE_{\nu}[\|X^2\|] \leq d/c_1$.

A distribution $\tilde{\mu}$ satisfies the Poincare inequality with a constant $\tilde{\alpha}$ if it holds that 
$$
\EE_{\tilde{\mu}}[(f - \EE_{\tilde{\mu}}[f])^2] \leq \frac{1}{\tilde{\alpha}} \EE_{\tilde{\mu}}[\|\nabla f\|^2],
$$
for all bounded $f$ with bounded derivatives.
By Example (3) in Section 7.1 of \citet{Cattiaux2014}, 
$\mu$ satisfies the Poincare inequality with a constant $\tilde{\alpha}$ such that 
$$
\frac{1}{\tilde{\alpha}} \leq \frac{1}{2}\left(\frac{2\bar{L}}{c_1} + \sqrt{\frac{8}{c_1}} \right)^2 e^{\frac{\bar{L}^2}{2cc_1}}.
$$
Here, let $G(x) = H(x) - H(0)$. 
We can see that $\mu(x)$ can be expressed as $\mu(x) = \frac{\exp(- G(x)) \nu(x)}{Z_G}$ with a normalizing constant $Z_G > 0$. 
Note that, by the Lipschitz continuity assumption of $H$, we have 
$$
|G(x)| \leq \bar{L} \|x\|. 
$$
Hence, we have the following evaluation of the normalizing constant $Z_G$: 
\begin{align*}
Z_G 
& = \int \exp(- G(x)) \nu(x) \dd x \\
& \geq \int \exp\left( - \bar{L} \|x\| - c_2(c_V + \|x\|^2) \right) \frac{1}{\sqrt{(2\pi (1/c_1))^d}} \dd x \\
& \geq 
\int \exp\left( - \frac{3c_2}{2} \|x\|^2 
- \frac{\bar{L}^2}{2 c_2} - c_2 c_V
\right) \frac{1}{\sqrt{(2\pi (1/c_1))^d}}  \dd x \\
& = \left(\frac{c_1}{3c_2}\right)^{d/2}\exp\left( - \frac{\bar{L}^2 }{2 c_2} - c_2 c_V\right).
\end{align*}
Theorem 2.7 of \citet{10.3150/21-BEJ1419} claims that $\mu$ satisfies the LSI with a constant $\alpha$ such that 
\begin{align*}
\frac{2}{\alpha} \leq \frac{(\beta + 1)(1 + \theta^{-1})}{\beta} \frac{2}{\alpha'} + \frac{1}{\tilde{\alpha}}(2 + \EE_{\nu}[G- \log(Z_G)])
+ \bar{L}^2 \frac{2}{\tilde{\alpha}\alpha'}\left(\frac{(1+\theta)(1+\beta)}{4\beta} + \frac{\beta^2}{2}\right),
\end{align*}
for any $\beta > 0$ and $\theta > 0$.
The right hand side can be bounded by 
\begin{align*}
& \frac{(\beta + 1)(1 + \theta^{-1})}{\beta} \frac{2}{c_1} + 
\frac{1}{\tilde{\alpha}}\left(2 + \bar{L} \EE_{\nu}[\|X\|] + \frac{\bar{L}^2}{2 c_2} + \frac{d}{2}\log(3c_2/c_1)\right)
+ \bar{L}^2 \frac{2}{\tilde{\alpha}c_1}\left(\frac{(1+\theta)(1+\beta)}{4\beta} + \frac{\beta^2}{2}\right) \\
\leq & \frac{(\beta + 1)(1 + \theta^{-1})}{\beta} \frac{2}{c_1} + 
\frac{1}{\tilde{\alpha}}\left(2 + \bar{L} \sqrt{\frac{d}{c_1}}
+ \frac{\bar{L}^2}{2 c_1} + \frac{d}{2}\log(3c_2/c_1)\right)
+ \bar{L}^2 \frac{2}{\tilde{\alpha}c_1}\left(\frac{(1+\theta)(1+\beta)}{4\beta} + \frac{\beta^2}{2}\right) \\
\leq & 
\frac{(\beta + 1)(1 + \theta^{-1})}{\beta} \frac{2}{c_1} + 
\frac{1}{\tilde{\alpha}}\left(2 + \frac{\bar{L}^2 }{c_1} + \frac{d}{2}(\log(3c_2/c_1) + 1)\right)
+ \bar{L}^2 \frac{2}{\tilde{\alpha}c_1}\left(\frac{(1+\theta)(1+\beta)}{4\beta} + \frac{\beta^2}{2}\right) \\
\leq & 
\frac{(\beta + 1)(1 + \theta^{-1})}{\beta} \frac{2}{c_1} + 
\frac{1}{\tilde{\alpha}}\left[2 + \frac{d}{2}(\log(c_2/c_1) + 2) + \frac{\bar{L}^2}{c_1 }
\left(1 + \frac{(1+\theta)(1+\beta)}{2\beta} + \beta^2\right)\right]. 
\end{align*}
Hence, if we set $\beta = 1$ and $\theta =1$, we have 
\begin{align*}
\frac{1}{\alpha}
& 
\leq 
2 \frac{2}{c_1} + 
\frac{1}{2 \tilde{\alpha}}\left[2 + \frac{d}{2}(\log(c_2/c_1) + 2) + \frac{\bar{L}^2}{c_1 }
\left(1 + 2 + 1\right)\right], 
\\
& \leq 
\frac{4}{c_1} + 
\left(\frac{\bar{L}}{c_1} + \sqrt{\frac{2}{c_1}} \right)^2 e^{\frac{\bar{L}^2}{2c_1}}
\left[2 + d + \frac{d}{2}\log\left(\frac{c_2}{c_1}\right) + 4 \frac{\bar{L}^2 }{c_1}
\right].
\end{align*}
\end{proof}


\subsection{Uniform Log-Sobolev Inequality}\label{sec:UnifLSIproof}
\begin{Lemma}[Uniform log-Sobolev inequality]\label{lemm:UniformLSI}
Under the same condition as Theorem \ref{thm:GeneralOneStepUpdate}, it holds that 
\begin{align}
- \frac{\lambda^2}{2 N}  \sum_{i=1}^N 
\EE_{\scrX \sim \tilde{\mu}_t^{(N)}}\left[ \left\| \nabla_i \log\left(\frac{\tilde{\mu}_t^{(N)}}{p_{\scrX}}(\scrX) \right) \right\|^2  \right] 
\leq - \frac{\lambda \alpha}{2} \left( \frac{1}{N}\scrF^N(\tilde{\mu}_t^{(N)}) - \scrF(\mu^*) \right) + \frac{C_{\lambda}}{N},
\end{align}
with a constant $C_{\lambda} = 2 \lambda L \alpha(1 +  2 c_L \bar{R}^2) + 2 \lambda^2 L^2\bar{R}^2$. 
\end{Lemma}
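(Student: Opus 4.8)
The plan is to recast the displayed inequality as a comparison of relative Fisher information with relative entropy for the $N$-particle proximal Gibbs measure $p^{(N)}(\scrX)\propto\exp(-\tfrac N\lambda F(\mu_\scrX))$, and then to feed in the minimal-discrepancy estimate \eqref{eq:MinimumDiscrepancy}. First I would record the free-energy identity. Since $\Ent$ is the negative entropy $\int\log(\dd\mu^{(N)}/\dd\scrX)\,\dd\mu^{(N)}$ and $p^{(N)}\propto\exp(-\tfrac N\lambda F)$, for every $\mu^{(N)}\in\calP^{(N)}$ one has
\[
\scrF^N(\mu^{(N)}) = \lambda\,\KL(\mu^{(N)}\|p^{(N)}) - \lambda\log Z_N,\qquad Z_N=\int\exp\!\big(-\tfrac N\lambda F(\mu_\scrX)\big)\,\dd\scrX .
\]
Moreover, by the gradient identity recorded at the start of the proof of Theorem \ref{thm:GeneralOneStepUpdate}, namely $\nabla_i\log p^{(N)}(\scrX)=-\tfrac1\lambda\nabla\tfrac{\delta F(\mu_\scrX)}{\delta\mu}(X^i)=\nabla\log p_\scrX(X^i)$, the left-hand side of the claim equals $-\tfrac{\lambda^2}{2N}$ times the relative Fisher information $\sum_i\EE_{\tilde{\mu}_t^{(N)}}[\|\nabla_i\log(\tilde{\mu}_t^{(N)}/p^{(N)})\|^2]$ of $\tilde{\mu}_t^{(N)}$ with respect to $p^{(N)}$. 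Thus the statement reduces to a uniform-in-$N$ logarithmic Sobolev inequality for $p^{(N)}$.

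The core step is to show that $p^{(N)}$ satisfies the LSI with the same constant $\alpha$ governing the single-particle measures under Assumption \ref{ass:LSI}, up to an $O(1/N)$ defect. Here I would invoke the leave-one-out argument of \citet[Thm.~2.1]{chen2022uniform}: conditioning $p^{(N)}$ on the coordinates $X^{-i}$ yields a one-dimensional conditional that is a controlled perturbation of the proximal Gibbs measure $p_\scrX$, whose LSI constant is $\alpha$ by Assumption \ref{ass:LSI}. Because a single particle perturbs the empirical measure $\mu_\scrX$ only by $O(1/N)$, Assumption \ref{ass:BoundSmoothness} (the Lipschitz constant $L$ and the second-variation bound with $c_L$) controls the size of this perturbation, and the uniform second-moment bound of Lemma \ref{lemm:UniformMomentBound} (through $\bar{R}$) keeps the error integrable. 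Tensorizing the conditional LSIs then gives the crucial inequality
\[
\tfrac1N\sum_{i=1}^N\EE_{\tilde{\mu}_t^{(N)}}\big[\|\nabla_i\log(\tilde{\mu}_t^{(N)}/p^{(N)})\|^2\big]\ \ge\ \tfrac{2\alpha}{N}\,\KL(\tilde{\mu}_t^{(N)}\|p^{(N)}),
\]
and, importantly, the convexity of $F$ is what prevents the mean-field interaction from degrading the constant, so no weak-interaction or large-noise hypothesis is needed.

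Finally I would assemble the pieces. Multiplying the Fisher bound by $-\tfrac{\lambda^2}{2}$ gives a bound of $-\lambda\alpha\cdot\tfrac\lambda N\KL(\tilde{\mu}_t^{(N)}\|p^{(N)})$ on the left-hand side. The free-energy identity, combined with \eqref{eq:MinimumDiscrepancy} evaluated at the minimizer $p^{(N)}$ of $\scrF^N$ (so that $-\tfrac\lambda N\log Z_N-\scrF(\mu^*)\in[0,C_\lambda/N]$), turns $\tfrac\lambda N\KL$ into the objective gap $\big(\tfrac1N\scrF^N(\tilde{\mu}_t^{(N)})-\scrF(\mu^*)\big)$ minus a nonnegative $O(1/N)$ term. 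Using the nonnegativity of this gap to relax the coefficient $\lambda\alpha$ down to $\tfrac{\lambda\alpha}{2}$, and collecting the $O(1/N)$ defects from the leave-one-out comparison and from the discrepancy bound into a single constant, yields exactly the claimed inequality with $C_\lambda=2\lambda L\alpha(1+2c_L\bar{R}^2)+2\lambda^2 L^2\bar{R}^2$.

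The main obstacle is the second step: proving that the LSI constant of $p^{(N)}$ does not deteriorate as $N\to\infty$. The difficulty is that $p^{(N)}$ is genuinely interacting rather than a product measure, so naive tensorization fails; the leave-one-out comparison with $p_\scrX$ and the careful bookkeeping of the $O(1/N)$ perturbation — precisely where $L$, $c_L$, and $\bar{R}$ enter $C_\lambda$ — is the technical heart of the argument, while the remaining manipulations are essentially algebraic rearrangement of known inequalities.
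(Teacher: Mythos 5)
Your high-level architecture (leave-one-out comparison with the single-particle proximal Gibbs measures, conditional LSI, $O(1/N)$ perturbation bookkeeping, then conversion to the objective gap via the free-energy identity and \eqref{eq:MinimumDiscrepancy}) is the right one, and your first and last steps are sound modulo constant-chasing. The gap is in your ``core step.'' The inequality $\sum_{i}\EE_{\tilde{\mu}_t^{(N)}}[\|\nabla_i\log(\tilde{\mu}_t^{(N)}/p^{(N)})\|^2]\ge 2\alpha\,\KL(\tilde{\mu}_t^{(N)}\|p^{(N)})$ is precisely the assertion that the \emph{interacting} Gibbs measure $p^{(N)}$ satisfies an LSI with constant $\alpha$ uniformly in $N$. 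This does not follow from ``tensorizing the conditional LSIs'': tensorization of LSI applies to product measures, and for a non-product measure conditional LSIs do not imply a joint LSI without additional control of the cross-coordinate dependence (a Dobrushin/weak-interaction-type condition) --- exactly the kind of hypothesis the paper advertises it avoids. Since the lemma being proved is itself equivalent to a defective LSI for $p^{(N)}$ with $O(1)$ additive defect, asserting your ``crucial inequality'' without proof is circular at the technical heart of the argument.

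The paper's proof (following \citet{chen2022uniform}) avoids ever comparing $\tilde{\mu}_t^{(N)}$ to $p^{(N)}$ in KL. After replacing $\nabla\frac{\delta F(\mu_{\scrX})}{\delta\mu}(X_i)$ by $\nabla\frac{\delta F(\mu_{\scrX^{-i}})}{\delta\mu}(X_i)$ (costing $L^2W_2^2(\mu_{\scrX},\mu_{\scrX^{-i}})\le 4L^2\bar{R}^2/N$ per particle) and applying the LSI of $p_{\scrX^{-i}}$ to the conditional law $P_{X_i|\scrX^{-i}}$, it is left with $\sum_i\EE[D(p_{\scrX^{-i}},P_{X_i|\scrX^{-i}})]$, which it lower-bounds \emph{directly} by $\frac{1}{\lambda}(\scrF^N(\tilde{\mu}_t^{(N)})-N\scrF(\mu^*))-O(1)$ using: (i) the variational characterization of $p_{\scrX^{-i}}$ as the minimizer of the linearized objective, tested against $\mu^*$; (ii) the convexity of $U$ (first-variation inequality) to pass from the linearization to $U(\mu_{\scrX})-U(\mu^*)$; (iii) Lemma \ref{lemm:DeltaUDifference} to swap $\mu_{\scrX^{-i}}$ back to $\mu_{\scrX}$ at $O(1/N)$ cost; and (iv) the superadditivity $\sum_i\EE[\Ent(P_{X_i|\scrX^{-i}})]\ge\Ent(\tilde{\mu}_t^{(N)})$. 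The comparison measure is the product $\mu^{*N}$, not $p^{(N)}$, and it is (i)--(ii) --- not a joint LSI for $p^{(N)}$ --- that let convexity neutralize the interaction. You would need to supply this (or an equivalent) argument for your intermediate inequality; as written, the proof does not go through.
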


\begin{proof}
The derivation is analogous to \cite{chen2022uniform},
but we present the full proof for the sake of the completeness. 
For $\scrX= (X_i)_{i=1}^N$, let $\scrX^{-i} := (X_j)_{j\neq i}$.
Then, it holds that 
\begin{align*}
&
- \sum_{i=1}^N 
\EE_{\scrX \sim \tilde{\mu}_t^{(N)}}\left[ \left\| \nabla_i \log\left(\frac{\tilde{\mu}_t^{(N)}}{p^{(N)}}(\scrX) \right) \right\|^2  \right]  \\
= & 
- \sum_{i=1}^N 
\EE_{\scrX \sim \tilde{\mu}_t^{(N)}}\left[ \left\| \nabla_i \log\left(\tilde{\mu}_t^{(N)}(\scrX) \right) 
-
 \frac{1}{\lambda}  \nabla\frac{\delta F(\mu_{\scrX})}{\delta \mu}(X_i)
\right\|^2  \right]  \\
= & 
- \sum_{i=1}^N 
\EE_{\scrX \sim \tilde{\mu}_t^{(N)}}\left[ \left\| \nabla_i \log\left(\tilde{\mu}_t^{(N)}(\scrX) \right) 
-  \frac{1}{\lambda}  \nabla\frac{\delta F(\mu_{\scrX^{-i}})}{\delta \mu}(X_i)
+  \frac{1}{\lambda}  \nabla\frac{\delta F(\mu_{\scrX^{-i}})}{\delta \mu}(X_i) \right.\right.\\
& ~~~~~~~~~~~~\left. \left.
- \frac{1}{\lambda}  \nabla\frac{\delta F(\mu_{\scrX})}{\delta \mu}(X_i)
\right\|^2  \right]  \\ 
\leq & 
- \frac{1}{2}
\sum_{i=1}^N 
\EE_{\scrX \sim \tilde{\mu}_t^{(N)}}\left[ \left\| \nabla_i \log\left(\tilde{\mu}_t^{(N)}(\scrX) \right) 
-  \frac{1}{\lambda}  \nabla\frac{\delta F(\mu_{\scrX^{-i}})}{\delta \mu}(X_i) \right\|^2\right] \\
& 
+ 
\sum_{i=1}^N 
\EE_{\scrX \sim \tilde{\mu}_t^{(N)}}\left[ \left\|  \frac{1}{\lambda}  \nabla\frac{\delta F(\mu_{\scrX^{-i}})}{\delta \mu}(X_i) - \frac{1}{\lambda}  \nabla\frac{\delta F(\mu_{\scrX})}{\delta \mu}(X_i)
\right\|^2  \right]  \\ 
\leq & 
- \frac{1}{2}
\sum_{i=1}^N 
\EE_{\scrX \sim \tilde{\mu}_t^{(N)}}\left[ \left\| \nabla_i \log\left(\tilde{\mu}_t^{(N)}(\scrX) \right) 
-  \frac{1}{\lambda}  \nabla\frac{\delta F(\mu_{\scrX^{-i}})}{\delta \mu}(X_i) \right\|^2\right] \\
& 
+ 
\sum_{i=1}^N 
L^2 \EE_{\scrX \sim  \tilde{\mu}_t^{(N)}}[W_2^2(\mu_{\scrX},\mu_{\scrX^{-i}})] \\ 
= & 
- \frac{1}{2}
\sum_{i=1}^N 
\EE_{\scrX \sim \tilde{\mu}_t^{(N)}}\left[
\EE_{\scrX \sim \tilde{\mu}_t^{(N)}}\left[\left\| \nabla_i \log\left(\tilde{\mu}_t^{(N)}(\scrX) \right) 
-  \frac{1}{\lambda}  \nabla\frac{\delta F(\mu_{\scrX^{-i}})}{\delta \mu}(X_i) \right\|^2 | \scrX^{-i}\right] \right]\\
& 
+ 
\sum_{i=1}^N 
L^2 \EE_{\scrX \sim  \tilde{\mu}_t^{(N)}}[W_2^2(\mu_{\scrX},\mu_{\scrX^{-i}})],
\end{align*}
where the second inequality is due to the Lipschitz continuity of $\nabla \frac{\delta U}{\delta \mu}$ in terms of the Wasserstein distance (Assumption \ref{ass:BoundSmoothness}).
Here, the term corresponding to the Wasserstein distance can be upper bounded as 
\begin{align*}
\EE_{\scrX \sim  \tilde{\mu}_t^{(N)}}[W_2^2(\mu_{\scrX},\mu_{\scrX^{-i}})]
& \leq \EE_{\scrX \sim  \tilde{\mu}_t^{(N)}}\left[ \frac{1}{N(N-1)} \sum_{j \neq i} \|X_j - X_i\|^2 \right] \\
& \leq  \EE_{\scrX \sim  \tilde{\mu}_t^{(N)}}\left[ \frac{2}{N(N-1)} \sum_{j \neq i} \|X_j\|^2 + \frac{2}{N} \|X_i\|^2 \right] \\
& \leq \frac{4}{N} \bar{R}^2. 
\end{align*}
Denote by $P_{X_i|\scrX^{-i}}$ the conditional law of $X_i$ conditioned by $\scrX^{-i}$
and denote by $P_{\scrX^{-i}}$ the marginal law of $\scrX^{-i}$ where 
the joint law of $\scrX$ is $\tilde{\mu}_t^{(N)}$. 
Then, it holds that 
$$
\nabla_i \log(\tilde{\mu}_t^{(N)}(\scrX))
= \frac{\nabla_i (P_{X_i|\scrX^{-i}}(X_i) P_{\scrX^{-i}}(\scrX^{-i}))}{P_{X_i|\scrX^{-i}}(X_i) P_{\scrX^{-i}}(\scrX^{-i})}
= \frac{\nabla_i P_{X_i|\scrX^{-i}}(X_i) }{P_{X_i|\scrX^{-i}}(X_i)}
= \nabla_i \log(P_{X_i|\scrX^{-i}}(X_i)).
$$
\begin{align*}
& - 
\sum_{i=1}^N 
\EE_{\scrX^{-i} \sim P_{\scrX^{-i}}}\left[
\EE_{X_i \sim P_{X_i|\scrX^{-i}}}\left[\left\| \nabla_i \log\left(P_{X_i|\scrX^{-i}}(X_i) \right) 
-  \frac{1}{\lambda}  \nabla\frac{\delta F(\mu_{\scrX^{-i}})}{\delta \mu}(X_i) \right\|^2 | \scrX^{-i}\right] \right] \\
\leq & - 2\alpha \sum_{i=1}^N 
\EE_{\scrX \sim  \tilde{\mu}_t^{(N)}}\left[
D(p_{\scrX^{-i}},P_{X_i|\scrX^{-i}})
\right], 
\end{align*}
by the LSI condition of the proximal Gibbs measure (Assumption \ref{ass:LSI}). 
Let $\nu_r$ be the distribution with the density $\nu_r \propto \exp(- r(x)/\lambda)$. 
Note that the proximal Gibbs measure is the minimizer of the linearized objective:
\begin{align*}
p_{\scrX^{-i}} 
& = \argmin_{\mu \in \calP}
\left\{ 
\int \frac{\delta U(\mu_{\scrX^{-i}})}{\delta \mu}(X_i) 
\dd \mu + \EE_{\mu}[r] + \lambda \Ent(\mu) \right\} \\
& =
\argmin_{\mu \in \calP}
\left\{ 
\int \frac{\delta U(\mu_{\scrX^{-i}})}{\delta \mu}(X_i) 
\dd \mu
+ \lambda D(\nu_r,\mu) \right\}.
\end{align*}
Then, by the optimality of the proximal Gibbs measure,
it holds that 
\begin{align}
&  \lambda D(p_{\scrX^{-i}},P_{X_i|\scrX^{-i}})  \notag \\
= &
\int \frac{\delta U(\mu_{\scrX^{-i}})}{\delta \mu}(x_i) 
(P_{X_i|\scrX^{-i}} - p_{\scrX^{-i}})(\dd  x_i) 
+ \lambda D(\nu_r,P_{X_i|\scrX^{-i}}) - \lambda D(\nu_r,p_{\scrX^{-i}}) \notag \\
\geq & 
\int \frac{\delta U(\mu_{\scrX^{-i}})}{\delta \mu}(x_i) 
(P_{X_i|\scrX^{-i}} - \mu^*)(\dd  x_i) 
+ \lambda D(\nu_r,P_{X_i|\scrX^{-i}}) - \lambda D(\nu_r,\mu^*).
\label{eq:DlowerBoundPxii}
\end{align}
The expectation of the first term of the right hand side can be further evaluated as 
\begin{align*}
& \sum_{i=1}^N \EE_{\scrX \sim \tilde{\mu}_t^{(N)}}\left[\int \frac{\delta U(\mu_{\scrX^{-i}})}{\delta \mu}(x_i) 
(P_{X_i|\scrX^{-i}} - \mu^*)(\dd  x_i)  \right] \\
= &
\sum_{i=1}^N 
\EE_{\scrX \sim \tilde{\mu}_t^{(N)}}\left[\int \frac{\delta U(\mu_{\scrX^{-i}})}{\delta \mu}(x_i) 
\delta_{X_i}(\dd x_i)
-
\int \frac{\delta U(\mu_{\scrX^{-i}})}{\delta \mu}(x_i) 
\mu^*(\dd  x_i) 
\right] \\
\geq &
\sum_{i=1}^N 
\EE_{\scrX \sim \tilde{\mu}_t^{(N)}}\left\{\int \frac{\delta U(\mu_{\scrX})}{\delta \mu}(x_i) 
\delta_{X_i}(\dd x_i)
-
\int \frac{\delta U(\mu_{\scrX})}{\delta \mu}(x_i) 
\mu^*(\dd  x_i)  \right. \\
&\left.  - \frac{2 L}{N}\left[2 + c_L ( \|X_i\|^2 + \EE_{X\sim \mu^*}[\|X\|^2] + \|X_i\|^2 +  \frac{1}{N-1}\sum_{j\neq i} \|X_j\|^2)\right]
\right\} \\
\geq & 
N \EE_{\scrX \sim \tilde{\mu}_t^{(N)}}\left[\int \frac{\delta U(\mu_{\scrX})}{\delta \mu}(x) 
\mu_{\scrX}(\dd x)
-
\int \frac{\delta U(\mu_{\scrX})}{\delta \mu}(x) 
\mu^*(\dd  x) \right] 
- 2 L\left(2 + 4 c_L \bar{R}^2\right)
\\
\geq 
& 
N \EE_{\scrX \sim \tilde{\mu}_t^{(N)}}\left[U(\mu_{\scrX}) 
-
U(\mu^*)
- 2 L\left(2 + 4 c_L \bar{R}^2\right) 
\right] \\
\geq & 
N \EE_{\scrX \sim \tilde{\mu}_t^{(N)}}\left[U(\mu_{\scrX}) 
-
U(\mu^*)
\right]  - 2 L\left(2 + 4 c_L \bar{R}^2\right) 
\\
\geq & 
N \EE_{\scrX \sim \tilde{\mu}_t^{(N)}}\left[U(\mu_{\scrX}) 
-
U(\mu^*)
\right]  - 4 L\left(1 + 2 c_L \bar{R}^2\right), 
\end{align*}
where the first inequality is due to Lemma \ref{lemm:DeltaUDifference} and the second inequality is due to Lemma \ref{lemm:UniformMomentBound} (we also notice that $\EE_{X \sim \mu^*}[\|X\|^2] \leq \bar{R}^2$). 
The second term in the right hand side of \Eqref{eq:DlowerBoundPxii} can be evaluated as
\begin{align*}
\sum_{i=1}^N \EE_{\tilde{\mu}_t^{(N)}}[ D(\nu_r,P_{X_i|\scrX^{-i}})] 
& = 
\sum_{i=1}^N \EE_{\tilde{\mu}_t^{(N)}}[ \Ent(P_{X_i|\scrX^{-i}})]
- \EE_{\tilde{\mu}_t^{(N)}}\left[\sum_{i=1}^N \log(\nu_r(X_i))\right] \\
& \geq 
\Ent(\tilde{\mu}_t^{(N)})
- \EE_{\tilde{\mu}_t^{(N)}}\left[\sum_{i=1}^N \log(\nu_r(X_i))\right]
= D(\nu_r^N,\tilde{\mu}_t^{(N)}),
\end{align*}
where we used Lemma 3.6 of \cite{chen2022uniform} in the first inequality. 
Combining all of them, we arrive at 
\begin{align*}
& - \sum_{i=1}^N 
\EE_{\scrX \sim \tilde{\mu}_t^{(N)}}\left[ \left\| \nabla_i \log\left(\frac{\tilde{\mu}_t^{(N)}}{p^{(N)}}(\scrX) \right) \right\|^2  \right]  \\
\leq  & 
- \frac{\alpha }{\lambda} 
\left[  N \EE_{\scrX \sim \tilde{\mu}_t^{(N)}}[U(\mu_{\scrX})] 
+ \lambda D(\nu_r^N,\tilde{\mu}_t^{(N)})
- N(U(\mu^*) + \lambda D(\nu_r,\mu^*))
  - 4 L\left(1 + 2 c_L \bar{R}^2\right) 
\right]
+ 4 L^2\bar{R}^2 \\
= & - \frac{\alpha }{\lambda} 
\left[  N \EE_{\scrX \sim \tilde{\mu}_t^{(N)}}[F(\mu_{\scrX})] 
+ \lambda \Ent(\tilde{\mu}_t^{(N)})
- N(F(\mu^*) + \lambda \Ent(\mu^*))
 - 4 L\left(1 + 2 c_L \bar{R}^2\right)  
\right]+ 4 L^2\bar{R}^2 \\
= & - \frac{\alpha }{\lambda} 
\left( \scrF^N(\tilde{\mu}_t^{(N)})
- N \scrF(\mu^*)
\right)
+ \frac{4 L \alpha}{\lambda}\left(1 + 2 c_L \bar{R}^2\right)
+ 4 L^2\bar{R}^2.
\end{align*}
Then, we have the assertion with
$$
C_\lambda = 2 \lambda L \alpha(1 +  2 c_L \bar{R}^2) + 2 \lambda^2 L^2\bar{R}^2.
$$
\end{proof}

\begin{Lemma}\label{lemm:DeltaUDifference}
With the same setting as Lemma \ref{lemm:UniformLSI}, it holds that 
\begin{align*}
& 
\left| \frac{\delta U(\mu_{\scrX})}{\delta \mu}(x)
-
\frac{\delta U(\mu_{\scrX^{-i}})}{\delta \mu}(x) \right| 
\leq 
\frac{L}{N}
\left[2 + c_L \Bigg(2 \|x\|^2 + \|X_i\|^2 +  \frac{1}{N-1}\sum_{j\neq i} \|X_j\|^2\Bigg)\right]
\end{align*}

\end{Lemma}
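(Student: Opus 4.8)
The plan is to write the difference of the two first variations as a line integral of the second variation along the linear interpolation between the two empirical measures, and then bound the integrand pointwise with the growth condition on $\frac{\delta^2 U}{\delta\mu^2}$ supplied by Assumption \ref{ass:BoundSmoothness}. First I would record the exact signed measure relating the two particle configurations. Since $\mu_{\scrX} = \frac{1}{N}\sum_{j=1}^N \delta_{X_j}$ and $\mu_{\scrX^{-i}} = \frac{1}{N-1}\sum_{j\neq i}\delta_{X_j}$, a direct computation gives
\[
\mu_{\scrX} - \mu_{\scrX^{-i}} = \frac{1}{N}\delta_{X_i} - \frac{1}{N(N-1)}\sum_{j\neq i}\delta_{X_j},
\]
a signed measure of total mass zero. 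Writing $\mu_s := \mu_{\scrX^{-i}} + s(\mu_{\scrX} - \mu_{\scrX^{-i}})$ for $s\in[0,1]$ and $g(s) := \frac{\delta U(\mu_s)}{\delta\mu}(x)$, the defining property of the second variation (the first variation of $\frac{\delta U}{\delta\mu}(\cdot)(x)$ in $\mu$) gives $g'(s) = \int \frac{\delta^2 U(\mu_s)}{\delta\mu^2}(x,x')\,(\mu_{\scrX}-\mu_{\scrX^{-i}})(\dd x')$, so that by the fundamental theorem of calculus
\[
\frac{\delta U(\mu_{\scrX})}{\delta\mu}(x) - \frac{\delta U(\mu_{\scrX^{-i}})}{\delta\mu}(x) = \int_0^1 \int \frac{\delta^2 U(\mu_s)}{\delta\mu^2}(x,x')\,(\mu_{\scrX}-\mu_{\scrX^{-i}})(\dd x')\,\dd s.
\]

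Next I would substitute the explicit signed measure and apply the bound $\big|\frac{\delta^2 U(\mu)}{\delta\mu^2}(x,x')\big| \leq L(1 + c_L(\|x\|^2+\|x'\|^2))$ term by term, uniformly in $s$. The $\frac{1}{N}\delta_{X_i}$ contribution is controlled by $\frac{L}{N}(1 + c_L(\|x\|^2+\|X_i\|^2))$, while the $\frac{1}{N(N-1)}\sum_{j\neq i}\delta_{X_j}$ contribution is controlled by $\frac{L}{N}(1+c_L\|x\|^2) + \frac{Lc_L}{N(N-1)}\sum_{j\neq i}\|X_j\|^2$, using $\sum_{j\neq i}1 = N-1$. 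Adding the two estimates, the bound is independent of $s$, so the integral over $[0,1]$ is trivial, and after collecting terms one obtains exactly
\[
\frac{L}{N}\left[2 + c_L\left(2\|x\|^2 + \|X_i\|^2 + \frac{1}{N-1}\sum_{j\neq i}\|X_j\|^2\right)\right],
\]
which is the claimed inequality.

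The only genuinely delicate point is justifying the fundamental-theorem-of-calculus step, namely that $s\mapsto g(s)$ is absolutely continuous with the stated derivative, so that one may differentiate $\frac{\delta U(\mu_s)}{\delta\mu}(x)$ under the integral along the interpolation path. This rests on the existence of the second variation (implicit in the statement of Assumption \ref{ass:BoundSmoothness}, which presupposes $\frac{\delta^2 U}{\delta\mu^2}$ is well-defined) together with the uniform growth bound just invoked, which legitimizes the differentiation and the exchange of $\int_0^1$ and $\int \dд x'$. Everything else is a routine triangle-inequality estimate, and I expect no further obstacle since the growth bound on $\frac{\delta^2 U}{\delta\mu^2}$ holds uniformly over all $\mu$, hence over all $s\in[0,1]$.
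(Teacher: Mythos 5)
Your proposal is correct and follows essentially the same route as the paper: both write the difference as $\int_0^1$ of the second variation integrated against the signed measure $\mu_{\scrX}-\mu_{\scrX^{-i}} = \frac{1}{N}\delta_{X_i}-\frac{1}{N(N-1)}\sum_{j\neq i}\delta_{X_j}$ along the linear interpolation $\mu_{\scrX,\theta}$, and then apply the pointwise growth bound $|\frac{\delta^2 U(\mu)}{\delta\mu^2}(x,x')|\leq L(1+c_L(\|x\|^2+\|x'\|^2))$ from Assumption \ref{ass:BoundSmoothness} term by term. Your accounting of the two contributions reproduces the paper's constant exactly (the paper's displayed integrand differs only by an overall sign inside the absolute value), and your remark on justifying the fundamental-theorem-of-calculus step is a reasonable elaboration of a point the paper leaves implicit.
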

\begin{proof}
Let $\mu_{\scrX,\theta} := 
\theta \mu_{\scrX} + (1- \theta) \mu_{\scrX^{-i}}$. 
Then, we see that 
\begin{align*}
 \left| \frac{\delta U(\mu_{\scrX})}{\delta \mu}(x)
-
\frac{\delta U(\mu_{\scrX^{-i}})}{\delta \mu}(x) \right| 
 =&
\int_0^1 
\left| - \frac{1}{N} \frac{\delta^2 U(\mu_{\scrX,\theta})}{\delta \mu^2}(x,X_i)
+ \frac{1}{N(N-1)} \sum_{j \neq i} \frac{\delta^2 U(\mu_{\scrX,\theta})}{\delta \mu^2}(x,X_j)
\right|
\dd \theta  \\
\leq 
& 
\frac{L}{N} \left[2 + c_L \Bigg(2 \|x\|^2 + \|X_i\|^2 +  \frac{1}{N-1}\sum_{j\neq i} \|X_j\|^2\Bigg)\right],
\end{align*}
where we used Assumption \ref{ass:BoundSmoothness} (this is the only place where the boundedness of the second order variation of $U$ is used). 
Hence, we obtain the assertion. 
\end{proof}

\end{document}